\documentclass[twoside,11pt]{article}

%

%
%
%
\usepackage{jmlr2e}

\usepackage[utf8]{inputenc} 
\usepackage[T1]{fontenc}    
\usepackage{microtype}      

\usepackage{subcaption}
\usepackage{wrapfig}
\usepackage{float}
\usepackage{chngpage}
\usepackage{enumitem}

\usepackage{graphicx}
\usepackage{mathtools}
\usepackage{microtype}
\usepackage{mathrsfs}
\usepackage{verbatim}
\usepackage{booktabs}
\usepackage{multirow}
\usepackage{blindtext}
\usepackage{mwe}
\usepackage{adjustbox}
\usepackage{algorithm}
\usepackage[font=small,labelfont=bf]{caption}

\usepackage[toc,page]{appendix}
\usepackage{comment}
\usepackage{makecell}

\usepackage{times}
\usepackage{mathtools}
\usepackage{fullpage}
\usepackage{tikz}
\usepackage{xcolor}
\usepackage{todonotes}
\usepackage{footnote}

\newcommand{\eg}{\textit{e.g.}}
\newcommand{\ie}{\textit{i.e.}}

\newcommand{\expected}[1]{\mathbf{E}\mathopen{}\left[ #1 \mathopen{}\right]}
\newcommand{\variance}[1]{\mathbf{Var}\mathopen{}\left[ #1 \mathopen{}\right]}



\def\bm{\vec{m}}

\DeclareMathOperator*{\argmax}{argmax}
\DeclareMathOperator*{\argmin}{argmin}


\usepackage{amsmath,amsfonts,bm}









\def\eqref#1{equation~\ref{#1}}









\def\1{\bm{1}}




\def\rve{{\mathbf{e}}}

\def\rvg{{\mathbf{g}}}





\def\vx{{\bm{x}}}
\def\vy{{\bm{y}}}


\def\mA{{\bm{A}}}

\def\mX{{\bm{X}}}
\def\mY{{\bm{Y}}}

\DeclareMathAlphabet{\mathsfit}{\encodingdefault}{\sfdefault}{m}{sl}
\SetMathAlphabet{\mathsfit}{bold}{\encodingdefault}{\sfdefault}{bx}{n}




\def\sR{{\mathbb{R}}}
\def\sS{{\mathbb{S}}}











\usepackage{times}
\usepackage{amsmath}
\usepackage{tabularx}
\usepackage{pifont}
\usepackage{xcolor}
\usepackage{placeins}
\newcommand{\cmark}{\textcolor{green}{\ding{51}}}
\newcommand{\xmark}{\textcolor{red}{\ding{55}}}

\newcommand{\hy}{\hat y}
\newcommand{\hx}{\hat x}
\newcommand{\hell}{\hat \ell}
\newcommand{\by}{\bar y}

\newcommand{\bell}{\bar \ell}
\newcommand{\hEE}{\hat \EE}

\usepackage{cleveref}
\usepackage{thmtools}
\usepackage{thm-restate}
\usepackage{Definitions}




\firstpageno{1}

\author{\name Hieu Pham${}^{*}$ \email hyhieu@google.com \\
\name Zihang Dai${}^{*}$ \email zihangd@google.com \\
\name Golnaz Ghiasi${}^{*}$ \email golnazg@google.com \\
\name Kenji Kawaguchi${}^{*}$ \email kawaguch@csail.mit.edu \\
\name Hanxiao Liu \email hanxiaol@google.com \\
\name Adams Wei Yu \email adamsyuwei@google.com \\
\name Jiahui Yu \email jiahuiyu@google.com \\
\name Yi-Ting Chen \email yitingchen@google.com \\
\name Minh-Thang Luong \email thangluong@google.com \\
\name Yonghui Wu \email yonghui@google.com \\
\name Mingxing Tan \email tanmingxing@google.com \\
\name Quoc V. Le \email qvl@google.com \\
\textit{*: Equal contributions.}\\\textnormal{Corresponding authors:} \{hyhieu,zihangd\}@google.com.
}

\editor{\textit{To be assigned.}}

\hypersetup{
  pdftitle = {Combined Scaling for Zero-shot Transfer Learning},
}

\newcommand{\ours}{BASIC}

\newcommand{\gain}[1]{{\textcolor[rgb]{0.05859375,0.61328125,0.34375}{\textbf{(+#1)}}}}
\newcommand{\loss}[1]{{\textcolor[rgb]{1.,0.243137255,0.188235294}{\textbf{(#1)}}}}

\title{Combined Scaling for Zero-shot Transfer Learning}

\begin{document}

\maketitle
\begin{abstract}

We present a combined scaling method -- named BASIC -- that achieves 85.7\% top-1 accuracy on the ImageNet ILSVRC-2012 validation set without learning from any labeled ImageNet example.
This accuracy surpasses best-published similar models -- CLIP and ALIGN -- by 9.3\%.
Our BASIC model also shows significant improvements in robustness benchmarks.
For instance, on 5 test sets with natural distribution shifts such as ImageNet-\{A,R,V2,Sketch\} and ObjectNet, our model achieves 84.3\% top-1 average accuracy, only a small drop from its original ImageNet accuracy.

To achieve these results, we scale up the contrastive learning framework of CLIP and ALIGN in three dimensions: data size, model size, and batch size.
Our dataset has 6.6B noisy image-text pairs, which is 4x larger than ALIGN, and 16x larger than CLIP.
Our largest model has 3B weights, which is 3.75x larger in parameters and 8x larger in FLOPs than ALIGN and CLIP.
Finally, our batch size is 65536 which is 2x more than CLIP and 4x more than ALIGN. 

We encountered two main challenges with the scaling rules of BASIC.
First, the main challenge with implementing the combined scaling rules of BASIC is the limited memory of accelerators, such as GPUs and TPUs.
To overcome the memory limit, we propose two simple methods which make use of gradient checkpointing and model parallelism. 
Second, while increasing the dataset size and the model size has been the defacto method to improve the performance of deep learning models like BASIC, the effect of a large contrastive batch size on such contrastive-trained image-text models is not well-understood.
To shed light on the benefits of large contrastive batch sizes, we develop a theoretical framework which shows that larger contrastive batch sizes lead to smaller generalization gaps for image-text models such as BASIC.

\end{abstract}
\newpage

\tableofcontents

\newpage

\section{\label{sec:intro}Introduction}
The recent advances in multimodal training approaches such as CLIP~\citep{radford21learning} and ALIGN~\citep{jia21scaling}
have the potential to eliminate the need for collecting labeled training data for every new application.
Using natural language as a weak supervision signal, CLIP and ALIGN achieve the impressive top-1 accuracy of 76.2\% and 76.4\% on ImageNet ILSVRC-2012 without learning from any labeled ImageNet data.
In addition to the promising accuracy on ImageNet, the so-called ``zero-shot'' models in CLIP and ALIGN demonstrate two important properties.
First, these models are versatile, as they can be directly deployed on many downstream tasks without task-specific data for finetuning.
Second, CLIP and ALIGN models are  more robust than traditional classifiers.
Robustness evaluations on benchmarks with natural distribution shifts~\citep{hendrycks2019nae,hendrycks2020many,recht2019imagenet,barbu2019objectnet,wang2019learning} show that the accuracy of models like CLIP and ALIGN typically drops less than 10\%, while the accuracy of supervised and semi-supervised models might drop as much as 40\%~\citep{taori2020measuring,szegedy2013intriguing}.

Despite their versatility and robustness, the best models from CLIP and ALIGN are still not as competitive as supervised and semi-supervised models when enough labeled data is available, which can limit their potential applications.
For example, the best CLIP and ALIGN models have an accuracy around 76\% on ImageNet, which is only comparable with a supervised ResNet-50~\citep{he15deep}, and significantly worse than the state-of-the-art supervised training on ImageNet (without extra data: 87.1\%~\citep{yuan2021volo}, and with extra data: 90.88\%~\citep{dai21coatnet}).
Therefore, narrowing the gap from these models to supervised and semi-supervised models would make the image-text contrastive learning approach in CLIP and ALIGN a viable alternative for image classification.

In this paper, we develop significantly better image classifiers that leverage the image-text contrastive learning approaches like CLIP and ALIGN at a much larger scale.
In particular, we scale up the contrastive learning framework of CLIP~\citep{radford21learning} and ALIGN~\citep{jia21scaling} in 3 dimensions: dataset size, model size, and batch size.
For the data, we expand the ALIGN dataset~\citep{jia21scaling} from 1.7B noisy image-text pairs to 6.6B pairs, \ie,~almost 4x larger.
For the models, we choose CoAtNet, an architecture with higher learning capacity~\citep{dai21coatnet}, and scale it to 3B parameters,~\ie,~3.75x more weights and 8x more FLOPs than the largest models in CLIP and ALIGN.
For the batch size, we use 65536 contrastive learning examples per minibatch, \ie,~2x more than CLIP and 4x more than ALIGN.

\paragraph{Overview of our implementation.}
The fundamental bottleneck of training large models at larger batch sizes is the limited memory of deep learning accelerators such as GPUs and TPUs.
We propose two approaches that allow practitioners to overcome such memory limits.

Our first approach (Section~\ref{sec:pipeline_and_accum}) makes use of micro-batch pipelining~\citep{huang2019gpipe} and gradient accumulation (GradAccum)~\citep{ott2018scaling,zhai21scaling}.
Our second approach (Section~\ref{sec:spmd_sharding}) utilizes the model parallelism scheme of Single-Program Multi-Data (SPMD)~\citep{lepikhin2020gshard,xu2021gspmd} to distribute the weights of certain layers in our networks onto different devices.
While our SPMD approach is faster than our pipelining approach, and can deliver exact computations, the SPMD approach requires more manual designs to scale to arbitrarily large contrastive batch sizes, and hence, is less general than the pipelining approach.

Both our pipelining approach and our SPMD approach make use of \textit{gradient checkpointing}~\citep{chen2016training}, which is also called \textit{rematerialization} in certain literature~\citep{kumar2019efficient,jain2020checkmate}.
The idea behind rematerialization is to discard certain intermediate values in the forward pass of a neural network to save memory, and then recompute -- \ie,~rematerialize -- these values only when they are needed for gradient computation in the network's backward pass.

\paragraph{Overview of our theoretical insights.}
While the benefits of large datasets and large models for deep learning models have become established knowledge, the benefits of large batch size are less well-understood in the context of relatively new image-text contrastive models.
To understand such benefits, we develop a theoretical analysis of the image-text contrastive learning framework of CLIP and ALIGN.
Our analysis establishes that using a larger contrastive batch size in CLIP and ALIGN's framework leads to a smaller generalization gap of the resulting models.

\begin{table}[h!]
\centering
\resizebox{0.75\linewidth}{!}{ %
\begin{tabular}{lccr@{\hskip0.01\linewidth}l}
  \toprule
  & \textbf{ALIGN}~\citep{jia21scaling}
  & \textbf{CLIP}~\citep{radford21learning}
  & \multicolumn{2}{@{\hskip 0.007\linewidth}c}{\textbf{BASIC} (ours)} \\
  \midrule
    ImageNet & 76.4 & 76.2 & \textbf{85.7} & \gain{9.3} \\
    ImageNet-A & 75.8 & 77.2 & \textbf{85.6} & \gain{8.4} \\
    ImageNet-R & 92.2 & 88.9 & \textbf{95.7} & \gain{3.5} \\
    ImageNet-V2 & 70.1 & 70.1 & \textbf{80.6} & \gain{10.5} \\
    ImageNet-Sketch & 64.8 & 60.2 & \textbf{76.1} & \gain{11.3} \\
    ObjectNet & 72.2 & 72.3 & \textbf{82.3} & \gain{10.1} \\
  \midrule
    Average & 74.5 & 74.2 & \textbf{84.3} & \gain{10.1} \\
  \bottomrule
\end{tabular}
}
\caption{\label{tab:highlights}Highlights of our key results. Shown are the top-1 accuracy of our method, BASIC, and similar baselines -- CLIP and ALIGN -- on ImageNet and other robustness test sets. None of these models have learned from any labeled training example in ImageNet. On average, BASIC surpasses the baselines by the significant \textcolor[rgb]{0.05859375,0.61328125,0.34375}{\textbf{10.1}} percentage points.}
\end{table}

\paragraph{Overview of our empirical results.}
Our proposed method, called \textbf{\ours}, for \textbf{B}atch, D\textbf{a}ta and Model \textbf{SI}ze \textbf{C}ombined Scaling,  achieves drastic improvements over CLIP and ALIGN models. 
For instance, on ImageNet, the largest BASIC model achieves 85.7\% top-1 accuracy, without learning from any labeled example in the ImageNet training set.
This result surpasses similar models in CLIP and ALIGN 9.3\%.
This BASIC model also shows significant improvements on robustness benchmarks. For instance, on 5 test sets with natural distribution shifts such as ImageNet-\{A,R,V2,Sketch\} and ObjectNet, the model achieves an average of 83.7\% top-1 accuracy, only a small drop from its original ImageNet accuracy (see Table~\ref{tab:highlights}).
When tested against CLIP on the other 17 image classification benchmarks, \eg,~CIFAR, Caltech101, Flowers, etc. BASIC outperforms CLIP on 13 out of these 17 benchmarks.\looseness=-1


\section{\label{sec:related}Related Work}

\paragraph{Large-scale pretraining and the contrastive loss.} 
As computer vision models grow in their size and capacity, many weakly-supervised and self-supervised pretraining methods have been proposed to learn good visual representations.
On one hand, pretraining with a classification loss on large weakly-labeled datasets such as Instagram hashtags or JFT can produce significant gains on downstream tasks such as ImageNet~\citep{joulin2016learning,mahajan2018explore,kolesnikov20bit,dosovitskiy21vit,sun2017revisiting,zhai21scaling}.
On the other hand, self-supervised methods which leverage existing structures in unlabeled data to train models have been developed.
A promising development in self-supervised learning is the contrastive loss, with representative works like CPC~\citep{aaron18cpc}, SimCLR~\citep{chen20simclr,chen2020big} and MoCo~\citep{he20moco,chen2020improved}.
In this paper, we scale up the contrastive learning framework, which we will revisit in  detail in Section~\ref{sec:background}.

\paragraph{Contrastive-learned image-text models.} Unlike the single-modal contrastive approaches mentioned in the previous paragraph, our work leverages data from two modalities: image and text.
Using images with accompanying text is related to the literature on image-captioning models, such as~\citep{vinyals2015show,karpathy2015deep,xu2015show,joulin2016learning,li2017learning,sariyildiz2020learning,zhang2020contrastive,desai2021virtex}.
While learning to generate captions from images can induce good visual representations, it is not the goal of this paper.
Instead, this paper focuses on establishing the ability of models to \textit{classify images based on textual descriptions}.
This focus makes our work closely related to the recent work of image-text models such as CLIP~\citep{radford21learning} and ALIGN~\citep{jia21scaling}.
Similar to CLIP and ALIGN, our work also learns the mapping between images and texts, which is related to many previous works, such as~\citep{mori1999,weston2010,socher2010connecting,socher2013zero,hodosh2013framing,frome2013devise,norouzi2013zero,kiros2014unifying,socher-etal-2014-grounded,akata2015evaluation,akata2015label,nam2017dual,faghri2017vse++,li2019visual,liu2019aligning,lu2019vilbert,messina2020fine,chen2020uniter,huang2020pixel,chen2021learning}.

\paragraph{Differences between our work and zero-shot learning.}
Early works on zero-shot vision models date back to the 2000s, \eg,~\citep{larochelle08zero,zhang17zero,xian16zero,xian17zero,schonfeld19zero}.
In these works, the term ``zero-shot'' refers to the ability of models to \textit{``generalize to classes or tasks for which no training data are available and only a description of the classes or tasks are provided''}.
Under such definition, BASIC models -- as well as the recent work that BASIC is based on such as CLIP~\citep{radford21learning} and ALIGN~\citep{jia21scaling} -- are \textit{not} ``zero-shot learned'' models.
This is because the data curating procedures of BASIC, CLIP, and ALIGN can exposes certain class names to their models, albeit not intentionally.
For instance, when an image of a golden retriever dog is crawled from the internet, the image could come from a file named \texttt{my\_golden\_retriever.jpg} which was uploaded by a user.
If a model in BASIC, CLIP, or ALIGN learns to associate the content of such an image with the text sequence ``my golden retriever'' as parsed from the image's file name, and then the model uses the knowledge from such association at test time, then the model is not zero-shot.
Despite being \textit{not} zero-shot, models from BASIC, CLIP, and ALIGN retain their claimed benefits on versatility and robustness.

\paragraph{Zero-shot transfer learning} Instead of zero-shot learning, CLIP and ALIGN are known to conduct zero-shot \textit{transfer} learning  \citep{radford21learning,jia21scaling,zhai2022lit}. Zero-shot transfer learning differs significantly from  zero-shot learning. Unlike zero-shot learning, it permits relevant supervised information during pretraining, while it  allows no supervised examples
during the transfer protocol; i.e., zero-shot transfer learning skips  the finetuning stage completely and performs the downstream task  based only on a text description of the target classes. For example, see \citep{radford21learning,jia21scaling,zhai2022lit}
for more details on this terminology.

\paragraph{Data, model and batch scaling.}
Scaling has proven to be a powerful tool to boost the efficacy of vision model pretraining. There are three dimensions one can scale on. The simplest dimension is data. Indeed, recent efforts have shown that the more data we train on, the better the models become~\citep{joulin2016learning,mahajan2018explore,kolesnikov20bit,dosovitskiy21vit,sun2017revisiting}. The second dimension is the model size, with representative works such as EfficientNet, VITs and related works~\citep{tan19efficientnet,tan21efficientnetv2,tan20edet,dosovitskiy21vit,zhai21scaling,bello2021revisiting}. 
Lastly, scaling up batch sizes is also the key for improving the model effectiveness~\citep{goyal17onehour}, especially for the contrastive loss~\citep{chen20simclr,tian20view,jia21scaling,radford21learning}. 
Our work is inspired by the power of scaling, and pushes the limits in all the dimensions.

\newpage
\section{\label{sec:background}Background on Image-text Contrastive Learning and Zero-shot Transfer Learning}
In this section, we revisit the contrastive training framework for parallel image-text data, as introduced by CLIP~\citep{radford21learning} and ALIGN~\citep{jia21scaling}. In doing so, we define the notations that will be used throughout the remaining of this paper.

Let $\vx \in \mathcal{X}$ be an arbitrary image and $\vy \in \mathcal{Y}$ be an arbitrary text sequence.
The image-text contrastive training framework~\citep{radford21learning,jia21scaling} trains an image encoder $F$ and a text encoder $G$ to map $\vx$ and $\vy$ into a $D$-dimensional unit sphere,~\ie,~$F(\vx), G(\vy) \in \sS^{D}$.
The desiderata of these encoders is that images and text sequences of similar semantics should be mapped to nearby points in the latent space, while those with different semantics should be mapped to distant points in the space.
To train $F$ and $G$ to achieve such desiderata, a minibatch gradient training procedure is used.
At each step in this training procedure, $F$ and $G$ receives $B$ image-text pairs,~\ie~$(\vx_i, \vy_i)$ for $i = 1, 2, ..., B$.
Based on the embeddings computed by $F$ and $G$, a similarity matrix $\mA$ is computed, where $\mA_{i, j} = \big( F(\vx_i)^\top G(\vy_i) \big) / \tau$.
Here, $\tau$ is called the softmax temperature which serves to steepen or dampen the softmax distributions in the rows and columns of $\mA$.
From this similarity matrix $\mA$, two softmax-like losses are computed based on the rows and the columns of $\mA$:
\begin{align}
\text{RowLoss}_B &= -\frac{1}{B} \sum_{i=1}^{B} \log{\frac{\mA_{i, j}}{\sum_{k=1}^{B} \mA_{i, k}}}  \\
\text{ColumnLoss}_B &= -\frac{1}{B} \sum_{j=1}^{B} \log{\frac{\mA_{i, j}}{\sum_{k=1}^{B} \mA_{k, j}}} \\
\label{eqn:objective}
\text{ContrastiveLoss}_B &= \frac{\text{RowLoss}_B + \text{ColumnLoss}_B}{2}
\end{align}
Minimizing $\text{ContrastiveLoss}_B$ encourages the entries on the diagonal of $\mA$ to be large while the entries elsewhere to be small.
Equivalently, images and text sequences from the same pair in the minibatch,~\ie~$x_i$ and $\vy_i$, will be embedded into nearby points, while those from different pairs,~\ie~$x_i$ and $\vy_{j \neq i}$, will be embedded into distant points.
The resulting encoders $F$ and $G$ thus achieve the desiderata of the contrastive learning framework.

\section{\label{sec:pipeline_and_accum}Batch Size Scaling with Pipelining and Gradient Accumulation}
We start this section by discussing the memory bottleneck in the contrastive training framework as described in Section~\ref{sec:background}.
We focus on memory because it is the most crucial bottleneck which hinders two out of three dimensions that we want to scale,~\ie,~model size and batch size.
We further show that the vanilla pipelining algorithm~\citep{huang2019gpipe} with gradient accumulation (GradAccum)~\citep{ott2018scaling,zhai21scaling} is not directly applicable to contrastive learning.
We then describe our modifications to make GradAccum work for constrastive learning.

\subsection{The Memory Bottleneck for Image-Text Contrastive Learning}

\paragraph{The memory bottleneck.}
The consensus among representative work in contrastive learning~\citep{chen2020big,chen2020improved,he20moco,chen20simclr} is that the larger the networks trained with a larger contrastive batch size performs better.
This observation is further explained by our theoretical analysis in Section~\ref{sec:theory}, and is confirmed by empirical results in Section~\ref{sec:ablation}.
Therefore, we want to enlarge the networks $F$, $G$, \textit{and} the batch size $B$. However, this will create a memory bottleneck. 
Three well-known techniques to relieve memory burden are gradient accumulation (GradAccum) ~\citep{ott2018scaling,zhai21scaling}, re-materialization (or gradient checkpointing)~\citep{griewank2000algorithm,chen2016training} and model parallelism~\citep{shazeer2018mesh,huang2019gpipe,lepikhin2020gshard}.
Note that all three techniques are orthogonal and complementary to each other.
Next in section~\ref{sec:gradaccum}, we present an approach based on pipelining model parallelism and gradient accumulation.

\paragraph{Vanilla GradAccum.} Consider training a model weight vector $\theta$ to minimize a loss function $\mathcal{L}$. For a batch of $B$ examples $\{\rve_1, \rve_2, ..., \rve_B\}$, let $g_i$ be the gradient of $\mathcal{L}$ with respect to $\theta$ computed on example $\rve_i$, \ie,~$g_i = \nabla_{\theta} \mathcal{L}(\theta; \rve_i)$. In the standard minibatch setting, we update $\theta$ with the average batch gradient $\bar{g} = \big( \sum_{i=1}^{B} g_i \big) / B$. When our accelerator memory can only hold $M \ll B$ examples, GradAccum splits the batch of $B$ examples into smaller batches with at most $M$ examples, called \textit{microbatches}, then computes the gradients of the microbatches, and averages them.

We now analyze the steps of GradAccum. For simplicity, assume that $M$ evenly divides $B$, and that microbatch $i$-th consists of examples $\rve_j$'s with $(i-1)M+1 \leq j \leq iM$. With this assumption, the GradAccum procedure first initializes a zero vector $\bar{g}$ of the same size with $\theta$. Then, sequentially for each microbatch $i$-th, the microbatch gradient $c_i = \big( \sum\nolimits_{j=(i-1)M+1}^{iM} g_j \big) / M$ is added to $\bar{g}$. In the end, $\bar{g}$ holds the correct minibatch gradient, up to a normalization constant $K = B/M$.\looseness=-1

\paragraph{GradAccum cannot be naively applied to contrastive learning.} There are two properties that make GradAccum not applicable to contrastive learning.
First, in order to evaluate the loss $\text{ContrastiveLoss}_B$ in Equation~\ref{eqn:objective}, we need all entries of the similarity matrix $\mA$.
Hence, we cannot rely \textit{only} on examples in every microbatch $i$-th to compute the microbatch gradients $c_i$'s.
Second, GradAccum must allocate memory for the cumulative gradient $\bar{g}$.\footnote{It is worth noting that this is a common issue with GradAccum and is not specific to contrastive learning.}
As $\bar{g}$ has as many elements as $\theta$, its memory grows as we scale up the networks $F$ and $G$.
This growth becomes a more pronounced issue as we scale up our models.
For reference, our largest model has 3B weights, occupying roughly 11GB of accelerator memory. Spending another 11GB on $\bar{g}$, while possible, defeats the purpose of saving memory in GradAccum.
In the remaining of this subsection, we discuss how to modify GradAccum so that we can use it to scale up contrastive learning.\looseness=-1

\subsection{\label{sec:gradaccum}Modifying Pipelining and GradAccum for the Contrastive Loss}
\begin{table*}[h]
\begin{center}
  \resizebox{0.99\linewidth}{!}{ %
  \begin{tabular}{rll|c}
  \toprule
  
  \textbf{Inputs}
    & $\bullet$ Networks $F$, $G$ with a weight vector $\theta = [\theta_F, \theta_G]$, &
    & \multirow{5}{*}{\shortstack{\textbf{Memory}\\\textbf{Analysis}}} \\
    & $\bullet$ A minibatch of $B$ (image, text) pairs $\{(\vx_i, \vy_i)\}_{i=1}^{B}$,
    & $\triangleright$ $B$ is the contrastive batch size. & \\
    & $\bullet$ Microbatch size $M$. Assuming $M$ evenly devices $B$.
    & $\triangleright$ $M$ is the largest in-memory batch size. \\
  \textbf{Yields} & $\bullet$ Gradients $\nabla_{\theta} \text{ContrastiveLoss}$ for $B/M$
                  & $\triangleright$ The loss is computed as in Equation~\ref{eqn:objective} \\
                  & \textcolor{white}{$\bullet$} microbatches of the minibatch.
                  &
  \\
  \midrule
  1  & Allocate embedding matrices $\mX, \mY \in \sR^{D \times B}$
     & $\triangleright$ $D$ is the embedding size
     & $\Theta(BD)$ \\
  2  & \textbf{For} $i=1$ \textbf{to} $B/M$ \textbf{do}:
     & $\triangleright$ Sequentially compute the embeddings for \\
  3  & ~~~~Let $J \leftarrow \{j: (i-1)M+1 \leq j \leq iM \}$
     & ~~~microbatches of images and text sequences, \\
  4  & ~~~~$\mX_{:, J} \leftarrow F(\vx_J)$
     & ~~~\textit{not} saving the activations of $F$ and $G$.
     & $\Theta(M \cdot \text{Mem}(F))$ \\
  5  & ~~~~$\mY_{:, J} \leftarrow G(\vy_J)$ &
     & $\Theta(M \cdot \text{Mem}(G))$ \\
  6  & $\mA \leftarrow \big( \mX^\top \cdot \mY \big) / \tau$
     & $\triangleright$ $\mA \in \sR^{B \times B}$ is the similarity matrix \\
  7  & $\text{RowLoss}_B \leftarrow -\frac{1}{B} \sum_{i=1}^{B} \log{\frac{\mA_{i, j}}{\sum_{k=1}^{B} \mA_{i, k}}}$
     & $\triangleright$ The contrastive loss in Equation~\ref{eqn:objective}
     & $\Theta(B^2) $ \\
  8  & $\text{ColumnLoss}_B \leftarrow -\frac{1}{B} \sum_{j=1}^{B} \log{\frac{\mA_{i, j}}{\sum_{k=1}^{B} \mA_{k, j}}}$ & \\
  9  & $\text{ContrastiveLoss}_B \leftarrow \frac{\text{RowLoss}_B + \text{ColumnLoss}_B}{2}$ & \\
  10 & $d\mA \leftarrow \text{BackProp}(\text{ContrastiveLoss}, \mA)$
     & $\triangleright$ Back-prop to compute $\nabla_{\mA}$ \\
  11 & $d\mX \leftarrow \mY \cdot d\mA$
     & $\triangleright$ Because $\mA = \mX^\top \mY$ \\
  12 & $d\mY \leftarrow \mX \cdot d\mA$ & \\
  13 & \textbf{For} $i=1$ \textbf{to} $B/M$ \textbf{do}:
     & $\triangleright$ Repeat a forward pass on $F$, $G$ to \\
  14 & ~~~~Let $J \leftarrow \{j: (i-1)M+1 \leq j \leq iM \}$
     & ~~~back-prop the gradients from $d\mX$, $d\mY$ to \\
  15 & ~~~~$d\theta_F \leftarrow \text{ForwardAndBackProp}(d\mX_{:, J}, \theta_F)$
     & ~~~the weights $\theta_F, \theta_G$.
     & $\Theta(M \cdot \text{Mem}(F))$ \\
  16 & ~~~~$d\theta_G \leftarrow \text{ForwardAndBackProp}(d\mY_{:, J}, \theta_G)$ &
     & $\Theta(M \cdot \text{Mem}(G))$ \\
  17 & ~~~~\textbf{Yield} $d\theta = [d\theta_F, d\theta_G]$ & \\
  \bottomrule
  \end{tabular}
  } %
  \captionof{algorithm}{\label{alg:grad_accum}Pseudo code of our gradient accumulation process for the contrastive loss. Here $\text{Mem}(F)$, $\text{Mem}(G)$ denote the memory required for a pass for the networks $F$, $G$. As shown in our memory analysis, at the cost of repeating one forward pass for $F$, $G$ (lines 13-16), our procedure's peak memory footprint is dominated by $\Theta(M \cdot \max{\{ \text{Mem}(F), \text{Mem}(G) \}})$.}
\end{center}
\end{table*}

\paragraph{Microbatching the contrastive loss.}
To enable proper GradAccum, a key observation is that while we need the entire similarity matrix $\mA$ to compute $\text{ContrastiveLoss}_B$ in Equation~\ref{eqn:objective}, we do \textit{not} need to store all the intermediate results leading to the matrix in memory.
This observation immediately connects to re-materialization, which trades computation for memory by dropping some intermediate hidden states during the forward pass and re-computing them during back-propagation.
Following this insight, we propose to combine re-materialization with gradient accumulation by microbatching the contrastive loss and re-materializing each microbatch.

Specifically, we first run a forward pass on the networks $F$, $G$ to compute the entire similarity matrix $\mA$ while discarding all intermediate hidden states.
Then, we use $\mA$ to compute $\text{ContrastiveLoss}_B$ and the gradient $\nabla_{\mA} \text{ContrastiveLoss}_B$ and microbatch this gradient along the batch axis.
Finally, for each microbatch, we re-materialize the hidden states, \ie,~rerun the forward computation, and back-prop and accumulate the corresponding gradient microbatch of $\nabla_{\mA} \mathcal{L}_\text{c}$ into the weights of the networks $F$, $G$.

Algorithm~\ref{alg:grad_accum} presents this procedure in detail and provides the memory analysis for each step. As shown, our algorithm can compute the \textit{exact} microbatch gradients from an entire batch of $B$ examples, with the peak memory usage of $\Theta(M \cdot \max{\{ \text{Mem}(F), \text{Mem}(G) \}})$, instead of $\Theta(B \cdot (\text{Mem}(F) + \text{Mem}(G)))$.
We note that our algorithm can be flexibly modified to work different microbatch-sizes, \ie,~$M$, for the image network $F$ and the text network $G$. This flexibility allows for more efficient computations, \eg,~when one network is smaller than another and thus, can operate with larger microbatches.

\paragraph{Accumulating the microbatch gradients.}
Algorithm~\ref{alg:grad_accum} yields a stream of microbatch gradients $c_1, ..., c_{B/M}$, which need to be accumulated, \ie,~averaged,  into $\bar{g}$ to perform the batch weight update.
As discussed, we want to avoid allocating extra memory for $\bar{g}$.
To do this, we need two assumptions about our training implementation.
Our first assumption is that we use an optimizer which involves gradient moments~\citep{nesterov83a,tieleman12rmsprop,kingma15adam,loshchilov19decoupled,shazeer18adafactor}.
This assumption motivates our idea to avoid allocating $\bar{g}$: since the optimizer already allocates the memory for gradient moments, typically called \textit{slots}, we will directly accumulate the microbatch gradients $c_i$'s into these slots.\looseness=-1

We illustrate this idea with Adam~\citep{kingma15adam}, a popular optimizer that involves two gradient moments.
At training step $t$, Adam receives the averaged minibatch gradient $\bar{g}$ and makes the following updates to its gradient moments $v_1$ and $v_2$:
\begin{equation*}
\small
\begin{aligned}
  \bar{g} &= 1/B \cdot \sum\nolimits_{i=1}^{B} g_i
           = 1 / \underbrace{(B/M)}_{K} \cdot \sum\nolimits_{i=1}^{B/M} c_i \\
  v_1^{(t)} &= \beta_1 v_1^{(t-1)} + (1 - \beta_1) \bar{g} \\
  v_2^{(t)} &= \beta_2 v_2^{(t-1)} + (1 - \beta_2) \bar{g}^2
\end{aligned}
\end{equation*}
Accumulating the microbatch gradients $c_i$'s to $v_1$ is straightforward. We can simply modify $v_1$'s single update with $\bar{g}$ into $K = B/M$ updates as follows:
\begin{equation*}
\small
\begin{aligned}
  v_1 \leftarrow k_i v_1 + (1 - \beta_1) c_i,~\text{where}~k_i = \begin{cases}
  \beta_1 & \text{if $i = 1$} \\
  1 / K & \text{otherwise}
  \end{cases}
\end{aligned}
\end{equation*}
Unfortunately, the same approach is not applicable for $v_2$, as the square of the sum is generally different from the sum of the squares, \ie~$(\sum c_i)^2 \neq \sum c_i^2$. However, the difference between these two quantities turns out to be:
\begin{equation*}
\small
  \underbrace{\frac{1}{K} \sum_{i=1}^{K} c_i^2}_\text{sum of squares}
  - \underbrace{\Big( \frac{1}{K} \sum_{i=1}^{K} c_i \Big)^2}_\text{square of sum}
  = \expected{c_i^2} - \expected{c_i}^2
  = \variance{c_i},
\end{equation*}
which we can estimate. Indeed, since each $c_i$'s is the mean of $M$ per-example gradients $g_j$'s in the $i$-th microbatch, we can treat $c_i$'s as the population mean of $M$ observed examples drawn from a random variable $\rvg \sim \text{Uniform}\{g_1, ..., g_B\}$. This treatment allows us to use the familiar identity:
\begin{equation}
\small
\label{eqn:mean_variance}
\begin{aligned}
  \variance{c_i}
    = \variance{\frac{1}{M} \sum\nolimits_{j=(i-1)M + 1}^{iM} g_j}
    = \frac{\variance{\rvg}}{M}
\end{aligned}
\end{equation}
Therefore, to estimate $\variance{c_i}$, we only need to estimate $\variance{\rvg}$.
For this, we make the second assumption about our training: that we use a data parallelism setting with $R$ replicas. Under this assumption, each microbatch gradient $c_i$ is obtained from an all-reduce operation on $R$ replicas, each of which processes $M/R$ examples. Once again, treating these per-device gradients $d_1, ..., d_{R}$ as the population mean of $M/R$ observed examples for $\rvg$, we can apply Identity~\ref{eqn:mean_variance} to obtain: $\variance{d} = \variance{\rvg} / (M/R)$. This treatment allows us to perform GradAccum while avoiding to allocate $\bar{g}$.

\section{\label{sec:spmd_sharding}Batch Size Scaling with the Single-Program Multiple-Data (SPMD) Scheme}

In Section~\ref{sec:pipeline_and_accum}, we have seen that one circumvent the memory bottleneck of large models and large batch sizes for image-text contrastive learning by: (1) ``chunk'' a large batch of $B$ image-text pairs into arbitrarily smaller microbatches, (2) compute the gradient for each microbatch, and (3) accumulate them.
While such an approach is generic and can work for any global contrastive batch size $B$ and any microbatch size $M$, there are two steps in the approach that makes the resulting gradient \textit{inexact}.
The first inexact computation comes from the approximations when accumulating the microbatch gradients. This is a necessary tradeoff to avoid allocating the memory to accumulate the microbatch gradients.
The second inexact computation is more subtle, and is specific to our modeling choice, and has also been noted by~\citet{huang2019gpipe}.
Specifically, when our networks $F$ and $G$ depend on the batch,~\eg,~via the batch normalization layers in our image encoder $F$, then the outputs of the networks for multiple microbatches are generally different from those for one entire global batch.
When the microbatch size $M$ is too small compared to $B$, the discrepancies become very large and can cause the covariate shift in the image encoder $F$ which was the original motivation for batch normalization.
More recent image encoders can overcome such inexact computations because they avoid batch normalization by replacing it with layer normalization like Vision Transformer~\citep{dosovitskiy21vit}, or just do not use any normalization at all like NFNet~\citep{brock2021high}.
However, the inexact gradient accumulation remains even for such models.

To overcome these inexact computations, in this section, we discuss an alternate approach to circumvent the memory bottleneck.
This approach is based on the SPMD programming scheme.
We find that not only does our SPMD method provide exact computations which lead to better results than pipelining and GradAccum, but our SPMD method also has a better latency per training step. 
However, as we shall see in Section~\ref{sec:model_parallelism} and Section~\ref{sec:rematerialization}, our SPMD method requires several manual designs, which make it less generic than pipelining and GradAccum.

\subsection{\label{sec:model_parallelism}Weight Sharding}
As model sizes grow, model weights occupy a significant part of accelerator memory.
In modern optimizers for deep learning models, such as Adam~\citep{kingma15adam}, RMSprop~\citep{tieleman12rmsprop}, and AdamW~\citep{loshchilov19decoupled}, every weight tensor is additionally accompanied by the first and second gradient moments, hence tripling its memory footprint.
Furthermore, in the vanilla data parallelism training, all these weights are replicated to \textit{all} accelerators.
In our experiments with a relatively large model size, roughly 4GB of accelerator memory is occupied by these weights and their gradient moments, which is significant for the typical 16GB memory in an accelerator in 2022, such as a Google TPU core or an Nvidia RTX 3080 GPU.

Here, we split the weight tensors in our encoder networks,~\ie~$F$ and $G$ in Section~\ref{sec:background}, into multiple accelerator cores, and only combine these tensors together when the whole tensor is needed to perform certain computations.
Note that upon splitting a weight tensor to multiple cores, we also split its first and second gradient moments in the similar way.
Figure~\ref{fig:sharding} illustrates our weight sharding strategy on the 2D convolution operation which is prevalent in image encoder models.

\begin{figure}[h!]
\centering
\includegraphics[width=0.5\linewidth]{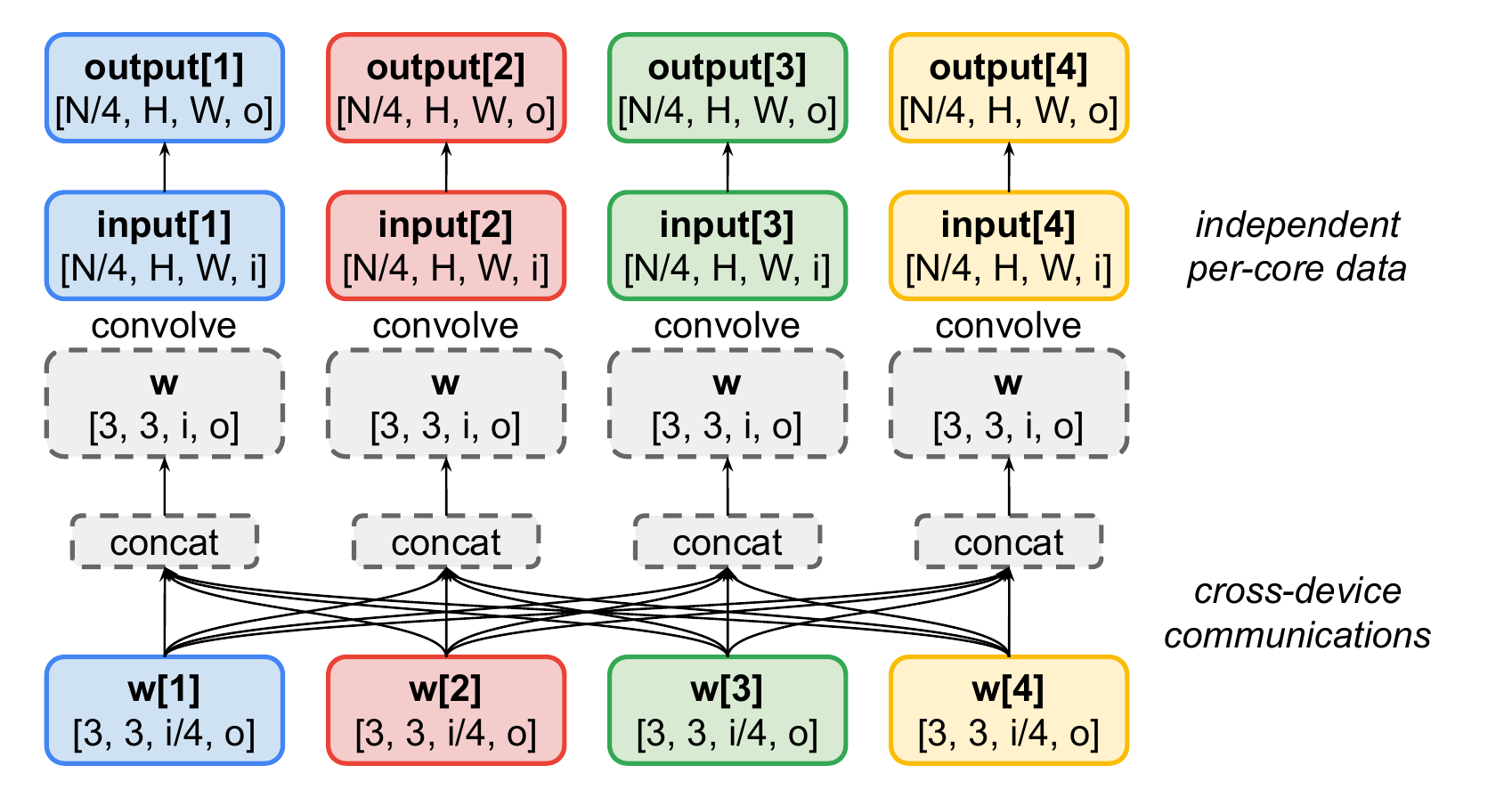}
\caption{\label{fig:sharding}An illustrative example for our model parallelism design. Shown is 2D convolution operation with a 3x3 kernel sharded to 4 cores. Gray cells represent the tensor values that are mirrored across all cores, while cells of other colors represent per-core independent tensor values. The convolution's input is a tensor of shape $[N, H, W, i]$ which is sharded along its first dimension so that each core has processes a tensor of shape $[N/4, H, W, i]$. The convolution's kernel is a tensor of shape $[3, 3, i, o]$, but each core only stores one shard of the kernel which has size $[3, 3, i/4, o]$. Before convolving, every core receives the kernel shards from all other cores and concatenate the shares, forming the complete kernel of size $[3, 3, i, o]$. After convolving, the complete kernel is discarded from all cores' memory.}
\end{figure}

Our approach is based on the Single-Program Multiple-Data (SPMD) technique, which has been successfully applied to train large language models in previous works such as in~\citet{xu2021gspmd,lepikhin2020gshard}.
In the SPMD technique, we define a computational graph which represents our entire training program.
This computational graph is compiled once, and then is replicated identically to all computational cores to run the training program.
While all of our computational cores run an identical program, they are allowed to receive different inputs and hence can produce different outputs.
These inputs and outputs can be organized in certain ways to define arbitrarily complex model parallelism strategies.
Next, we describe how we apply the SPMD technique on our \textit{model weights} only.

Our training program runs typically on a cluster of 2048 TPUv3 cores. We partition these 2048 cores into $R$ \textit{replicas}, each of which uses $2048/R$ cores.
The value of $R$ governs how the weights of our image encoder $F$ and our text encoder $G$ are stored in the memory of our 2048 cores.
In particular, all weight tensors in the networks $F$ and $G$ are split into $R$ equal parts, each lives in one of the $R$ cores in a replica.
Note that since we have $2048 / R$ replicas, the weights of our image and text encoders are still replicated for $2048 / R$ times.
For instance, our cores $1^\text{st}$, $2^\text{nd}$, ... $R^\text{th}$ can each store $1/R$ of the weight tensors, and then the cores $R+1^\text{st}$, $R+2^\text{nd}$, ..., $2R^\text{th}$ store an identical copy of these tensors.
Thus, using fewer replicas and more cores per replica leads to a better memory utilization, at a higher overhead for cross-cores communications.
We empirically find that using 512 replicas and 4 cores per replica offers a good balance.

It is important to note that we only apply SPMD on our model weights, and not on any other steps of our computations.
This means that if our training program receives an input batch of $B$ examples, then these $B$ examples are distributed equally to all our 2048 cores.
In other words, each of our 2048 cores processes $B / 2048$ examples, regardless of the value of $R$.
We find that this design choice disentangles our weight sharding strategy from the rematerialization strategy, as described next in Section~\ref{sec:rematerialization}.

\subsection{\label{sec:rematerialization}Rematerialization}
\begin{figure*}[tb!]
\centering
\includegraphics[height=0.98\textwidth,angle=270]{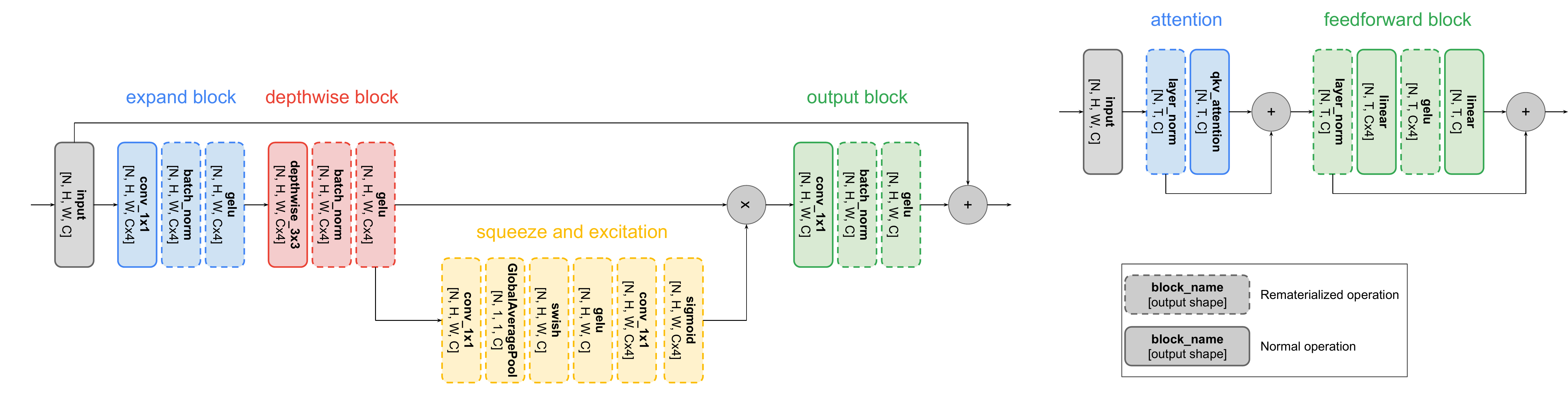}
\caption{\label{fig:remat}Generic rematerialization map for the blocks in our CoAtNet models. \textbf{Left:} in the Mobile Inverse Convolution blocks (MBConv), all batch normalization layers and activation layers, as well as all layers in the squeeze and excitation steps, are rematerialized. \textbf{Right:} in Transformer blocks, only the layer normalization layers and activation layers are rematerialized.}
\end{figure*}

The technique of rematerialization, also widely known as gradient checkpointing~\citep{chen2016training}, preserves the accelerator memory while training neural networks.
It works by \textit{not} saving certain values from a network's forward pass, and recompute them in the backward pass \textit{only} when their values are needed for a particular calculation.
For instance, if our image encoder $F$, as discussed in Section~\ref{sec:background} has 100 layers, a rematerialization program can decide that after the forward pass, only the values of layers $10^\text{th}$, $20^\text{th}$, ..., $90^\text{th}$ are kept in an accelerator's memory, while the values of other layers are removed.
If all layers of $F$ consumes similar memory, this rematerialization program has reduced the memory cost by 10 times, at the trade off that the values of the unsaved layers in the forward pass, such as layer $21^\text{st}$ or layer $72^\text{nd}$, have to be recomputed in the backward pass.

We select which layer to rematerialize in our image encoder $F$ and our text encoder $G$ based on a simple heuristic.
Ideally, we want to rematerialize the layers that are fast to recompute but consumes the more memory.
Since we utilize weight sharding, as described in Section~\ref{sec:model_parallelism}, the computations that involve weights are slower than normal because of their overhead time for cross-core communications.
As such, we keep almost all layers that involve weights, such as convolution, attention, and dense feed-forwards, in our accelerator's memory. In contrast, layers that do not involve weights, such as activation functions, batch normalization, and layer normalization, are all rematerialized. Figure~\ref{fig:remat} illustrates our rematerialization strategy for all three block types in our image and text encoders: the mobile-inverse convolutional block~\citep{tan19efficientnet,dai21coatnet}, the attention block, and the feed-forward blocks~\citep{vaswani2017attention}.

We find this design choice beneficial, because in modern encoder architectures, every layer that involves weights is typically followed by a normalization layer or an activation layer.
As such, our design allows more than half of the encoder's activation values to be removed from the accelerator's memory after each forward pass, while leaving only the light computational steps to be repeated in each backward pass.
We empirically find that weight sharding and rematerialization, each of our forward-backward pass is 1.4 times slower than the vanilla implementation of the same batch size.

\paragraph{Exceptions to our heuristics.}
Certain parts of our encoders do not follow these general heuristics, as we find that doing so saves a certain amount of time at the cost of using a little extra memory.
Here, we describe these exceptions:
\begin{enumerate}
  \item All weights in batch normalization and layer normalization in our models, including the $\beta$'s and $\gamma$'s and the moving average statistics of batch normalization, are \textit{not} sharded. Instead, these weights are replicated to all computational cores to avoid cross-cores communications, because they are one dimensional vectors which do not occupy much memory.
  \item All computations the Squeeze-and-Excitation blocks (SE;~\citep{hu2018squeeze}) of our models are rematerialized, including the convolutions. This is because these SE blocks only involve 1x1 convolution with reduced internal channels, making them less costly to recompute. In addition, all the weights of these 1x1 convolutions are replicated to all of our cores, because they have a small memory footprint but are reused in the backward pass for rematerialization.
\end{enumerate}

\subsection{\label{sec:runtime_compar}Comparison with Pipelining and Gradient Accumulation}
\begin{table}[htb!]
\centering
\resizebox{0.65\linewidth}{!}{ %
\begin{tabular}{cccccccccccccc}
\toprule
  \textbf{Model} &
  \multirow{2}{*}{\textbf{Methods}} &
  \textbf{Batch} & 
  \multicolumn{3}{c}{\textbf{Step time} (millisecs)} & \textbf{Memory} \\
  \cmidrule(lr){4-6}
  \textbf{Size} & & ($B$) &
  FWD & BWD & Total & (GB) \\
\midrule
  \multirow{7}{*}{Small}
    & Data parallelism      & $2^{16}$ & 28.6  & 81.2   & 128.5  & 4.2  \\
    & Data parallelism      & $2^{17}$ & 65.8  & 144.5  & 230.3  & 6.4  \\
    & Data parallelism      & $2^{18}$ & 131.2 & 282.9  & 428.0  & 9.9  \\
    & Data parallelism      & $2^{19}$ & 261.0 & 559.4  & 842.4  & 14.8 \\
    & Data parallelism      & $2^{20}$ & $-$   & $-$    & $-$    & OOM  \\   
    \cmidrule(lr){2-7}
    & Pipeline \& GradAccum & $2^{20}$ & 301.7 & 1497.4 & 1983.1 & 11.2 \\   
    & SPMD                  & $2^{20}$ & 271.8 & 1311.8 & \textbf{1631.6} & 15.2 \\   
\midrule
  \multirow{9}{*}{Medium}
    & Data parallelism & $2^{16}$ & 129.5  & 270.8  & 437.6  & 10.7 \\
    & Data parallelism & $2^{17}$ & 237.7  & 536.4  & 806.1  & 13.8 \\
    & Data parallelism & $2^{18}$ & $-$    & $-$    & $-$    & OOM  \\
    \cmidrule(lr){2-7}
    & Pipeline \& GradAccum & $2^{18}$ & 597.7  & 1677.1 & 2677.3 & 12.6 \\
    & SPMD                  & $2^{18}$ & 599.3  & 1407.2 & \textbf{2018.6} & 12.1 \\
    \cmidrule(lr){2-7}
    & Pipeline \& GradAccum & $2^{19}$ & 1393.1 & 4087.8 & 5601.2 & 12.8 \\
    & SPMD                  & $2^{19}$ & 1402.7 & 3141.1 & \textbf{4912.3} & 14.1 \\
    \cmidrule(lr){2-7}
    & Pipeline \& GradAccum & $2^{20}$ & 3014.3 & 6129.9 & 9781.4 & 12.6 \\
    & SPMD                  & $2^{20}$ & 2909.7 & 4912.3 & \textbf{8361.1} & 15.4 \\   
\bottomrule
\end{tabular}
} %
\caption{\label{tab:speed_and_memory}Comparison between our SPMD programs and our Pipelining \& GradAccum programs. Shown are the step times and memory footprints of our small-sized and medium-sized models at different batch sizes. With the same model size and batch size, our SPMD design results in a larger device memory footprint and Pipelining \& GradAccum, but SPMD is faster.}
\end{table}

We measure and compare the step time and peak memory usage of our SPMD approach and our Pipelining and GradAccum approach from Section~\ref{sec:pipeline_and_accum}.
We measure these pieces of information as the contrastive batch size $B$ becomes larger.
In particular, we start with $B=2^{16}$ and double $B$ until we reach $B=2^{20}$.
When $B$ increases, but our model can still fit into the our memory only with vanilla data parallelism, we measure the step time and the peak memory using this setting.
These measures serve as the reference to quantify how much overhead is introduced by pipelining, SPMD, or rematerialization as a whole.
When $B$ grows and our models can no longer train with merely data parallelism, we experiment with the Pipelining and GradAccum, and compare the resulting programs with the SPMD programs of the same model and batch size.
Note that for the pipelining approach, we set the microbatch size to the largest size that vanilla data parallelism can fit in our accelerator's memory.

For all settings, we profile a model in 15 seconds.
Our profiling tool tracks the time and memory usage of our models during their forward and backward passes in each step.
Note that our backward pass time \textit{includes} the time our models spend on their rematerialized computations (as discussed in Section~\ref{sec:rematerialization}), as these computations happen while the models compute their gradients.
Note that other than these forward and backward passes, every step of our models has some extra overheads for miscellaneous computations that are not recorded by our profiling tool,~\eg,~gradient clippings and updating model parameters.

All of our measurements are reported in Table~\ref{tab:speed_and_memory}.
From the table, it can be seen that our model parallelism strategy leads to a faster overall step time, compared to the pipelining approach in the same setting.
Breaking down these step times, it can be seen that the run time of our strategy's model forward pass is very close to the run time of pipelining's forward pass, but our backward time is often a lot faster.
For instance, in our largest setting, with the medium-sized model and the contrastive batch size $B=2^{20}$, our backward time is more than 1.2 seconds faster than that of the pipelining approach, amounting to about 10\% of the total step time.
Additionally, our strategy also has a faster total step time, perhaps because do not need to spend extra time to accumulate the microbatch gradients like the pipelining approach.

Finally, we note that as $B$ grows larger, the SPMD approach typically occupies more accelerator memory than does the pipelining approach.
This is because in the pipelining approach, increasing the contrastive batch size $B$ only leads to more microbatches, but does not change the micro batch size, and so the accelerator's memory remains constant.
As such, the pipelining approach is still applicable if $B$ grows larger than $2^{20}$, but the SPMD strategy has to be redesigned,~\eg~by deciding to rematerialize a larger portion of our image and text encoder.

\section{\label{sec:theory}Theoretical Insight on the Role of Contrastive Batch Size} 
In this section, we provide a theoretical insight that increasing the contrastive batch size tends to improve the performance of the final model, which  motivates and partially justifies the design of our new algorithm in the previous section. Let $\hy_{1},\dots,\hy_{B}$ be the sequence of the text sentence inputs used in training. Similarly,  let  $\by_{1},\dots,\by_{M}$ be the sequence of the text sentence inputs used in testing. We then define the
 normalized training  loss by
 \begin{align*}
 \hell_B(x,y) =-  \frac{B\exp(F(x)\T G(y))}{\sum_{k=1}^B \exp(F(x)\T G(\hy_k))}
 =-  \frac{\exp(F(x)\T G(y))}{\frac{1}{B}\sum_{k=1}^B \exp(F(x)\T G(\hy_k))},
 \end{align*}
where we multiply $B$ to scale the loss correctly in the regime of $B\rightarrow \infty$. That is, since the unnormalized loss goes to zero ($\hell_B(x,y)/B \rightarrow 0$) as the batch size approach infinity ($B\rightarrow \infty$), analyzing the unnormalized version of the loss $\hell_B(x,y)/B$ can mistakenly predict benefits of the large contrastive batch size. We avoid this with the normalization.  Similarly, we define the normalized testing loss by
$$
\bell_M(x,y)= -  \frac{\exp(F(x)\T G(y))}{\EE_{\by}[ \exp(F(x)\T G(\by))]}.
$$
Then, the prediction at testing time for a new input $x$ is given by 
\begin{align*}
\text{pred}(x)=\argmax_{j \in \{1,2,\dots,M\}} F( x)\T G(\by_{j})
 =\argmin_{j \in \{1,2,\dots,M\}} \bell_M(x, \by_{j}).
\end{align*}
Therefore, $\hell_B(\hx_{i}, \hy_{i})$ is minimized during training for training points $(\hx_{i}, \hy_i)$ while we want to minimize $\bell_M(x, y)$ to make a prediction at a new point $x$. This leads to the question of the generalization from training to unseen-data, which can be studied by analyzing the upper bound on the following quantity: 
$$
\EE_{x,y}[\bell_M(x, y)] -\hEE_S[\hell_B(x, y)],
$$   
where $\hEE_S[\hell_B(\hx, \hy)]$ is the empirical training loss with a dataset, $S=((\hx_i, \hy_i))_{i=1}^m$, of size $m$.
To analyze this in a statistical setting, we define a vector $v \in \RR^D$ by $
v_{i}=F(x)_{i} - \frac{\sum_{k=1}^B \exp(F(x)\T G(\hy_k))G(\hy_k)_i}{\sum_{k=1}^B \exp(F(x)\T G(\hy_k))}
$ for all $i\in \{1,\dots,D\}$, and assume that  $\hy_{1},\hy_{2},\dots,\hy_{B}\stackrel{iid}{\sim}p_{y}$,   $\by_{1},\by_{2},\dots,\by_{M}\stackrel{iid}{\sim}p_{y}$,   $\exp(F(x)\T G(y))\le c_1$, $\hell_B(x,y) \le c_2$,  $\|F(x)\|_2\le c_3$, $
\frac{\exp(F(x)\T G(y))}{\frac{1}{B}\sum_{k=1}^B \exp(F(x)\T G(\hy_k))} \le c_4$, $\|F(x_{i})\|_{2} \le c_5$, $\|v\|_2 \le c_6$,   $\|y\|_{2}\le c_7$,   and $\|x\|_{2}\le c_8$, with probability one. 
  Moreover, by defining  $\gamma(x)=\EE_{\by } [\exp(F(x)\T G(y))]- \frac{1}{B}\sum_{k=1}^B \exp(F(x)\T \allowbreak G(\hy_k))$, we assume that $\gamma$ is  $c_9$-Lipschitz; i.e., $|\gamma(x)-\gamma(x')|\le c_9 \|x-x'\|_2$ for all $x\in \Xcal \subseteq \RR^\kappa$.
 To provide a concrete insight, we consider standard deep neural networks, of the form
\begin{align*}
&G(y)= (\omega_{L} \circ \sigma_{L-1}\circ \omega_{L-1}\circ \sigma_{L-2}\cdots \sigma_1\circ \omega_1)(y),
\\ & F(x)= (\omega_{L'} '\circ \sigma_{L'-1} '\circ \omega_{L'-1} '\circ \sigma_{L'-2}' \cdots \sigma_1 '\circ \omega_1')(x),
\end{align*}
where $\omega_l(q)=W_{l}q$ represents the linear transformation and $\sigma_{l}$ is an element-wise nonlinear activation function. Similarly,  $\omega_l'(q)=W_{l}'q$ and $\sigma_{l}'$ is an element-wise activation function.

The following theorem provides an  insight on the role of the contrastive batch size to close the accuracy gap from contrastive models to their supervised counterparts:

\begin{restatable}{theorem}{thma} \label{thm:1}
Suppose that the activation functions $\sigma$ and $\sigma_{l}'$ are 1-Lipschitz and positive homogeneous for all $l \in [L-1]$. Let  $ \Gcal=\{y  \mapsto G(y): (\forall l \in[L-1])[\|W_{l}\|_F \le M_l] \wedge  \|(W_{L})_{k}\|_F \le M_{L,k}\}$ and  $ \Fcal=\{x  \mapsto F(x): (\forall l \in[L'-1])[\|W_{l}'\|_F \le M_l'] \wedge  \|(W_{L'}')_{k}\|_F \le M_{L',k}'\}$ where $(W_{l})_{k}$ is the $k$-th row  of $W_{l}$. Then, for any $\delta>0$, with probability at least $1-\delta$, the following holds for all $F \in \Fcal$ and $G \in \Gcal$:
\begin{align*}
\EE_{x,y}[\bell_M(x, y)] -\hEE_S[\hell_B(x, y)] 
\le \frac{Q_1}{\sqrt{m}}  +\frac{Q_{2 }}{\sqrt{2B}}+c_2 \sqrt{\frac{\ln(2/\delta)}{2m}}, 
\end{align*}
where $Q_1 =  2 \sqrt{2} c_4 \sqrt{c_5^{2}+c_6^{2}} (\tilde Q_{1,1} +  \tilde Q_{1,2})$,  $Q_{2} = c_1\EE_{x,y}[A(x,y)](  \tilde Q_{2,1} +  \tilde Q_{2,2})$,  
\begin{align*}
&  \tilde Q_{1,1} =c_7 (\sqrt{2 \log(2) L }+1)\left(\prod_{l=1}^{L-1} M_l\right)\left(\sum_{k=1}^D M_{L,k}\right),
\\ & \tilde Q_{1,2} =c_8 (\sqrt{2 \log(2)L' }+1)\left(\prod_{l=1}^{L'-1} M_l'\right)\left(\sum_{k=1}^D M'_{L',k}\right),
\\ & \tilde Q_{2,1} =2\sqrt{2}c_8 c_9+c_1 \sqrt{\kappa\ln( \sqrt{\kappa B}/\delta)}, 
\\&\tilde Q_{2,2} =2\sqrt{2}c_3c_7 (\sqrt{2 \log(2) L }+1)\left(\prod_{l=1}^{L-1} M_l\right)\sqrt{\sum_{k=1}^D M_{L,k}^2}, 
\end{align*}
and $A(x,y)=\frac{\exp(F(x)\T G(y))}{\left(\frac{1}{B}\sum_{k=1}^B \exp(F(x)\T G(\hy_k))\right)\EE_{\by}[ \exp(F(x)\T G(\by))]}$.
\end{restatable}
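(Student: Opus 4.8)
The plan is to decompose the generalization gap as
\[
\EE_{x,y}[\bell_M(x,y)]-\hEE_S[\hell_B(x,y)] = T_1 + T_2,
\]
where $T_1 = \EE_{x,y}[\bell_M(x,y)]-\EE_{x,y}[\hell_B(x,y)]$ is the gap between the (population) testing and training losses and $T_2 = \EE_{x,y}[\hell_B(x,y)]-\hEE_S[\hell_B(x,y)]$ is the usual uniform-convergence gap of the training loss over the $m$ samples. I would then show that $T_2$ produces the $Q_1/\sqrt m$ and $c_2\sqrt{\ln(2/\delta)/2m}$ terms, and that $T_1$ produces $Q_2/\sqrt{2B}$.

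For $T_2$: since $|\hell_B|\le c_2$ almost surely, the function $S\mapsto\sup_{F\in\Fcal,G\in\Gcal}\big(\EE_{x,y}[\hell_B]-\hEE_S[\hell_B]\big)$ has bounded differences $c_2/m$, so McDiarmid's inequality yields the term $c_2\sqrt{\ln(2/\delta)/2m}$ and reduces the problem to bounding the expectation of this supremum. By symmetrization it is at most twice the Rademacher complexity of the loss class $\{(x,y)\mapsto\hell_B(x,y):F\in\Fcal,G\in\Gcal\}$. I would peel off the loss with a vector-contraction inequality (Maurer type): viewing $\hell_B$ as a function of the concatenated embeddings $(F(x),G(y))$, its Lipschitz constant is at most $c_4\sqrt{c_5^2+c_6^2}$ --- here $c_5$ bounds $\|F(x)\|_2$, which controls the $G(y)$-direction sensitivity, and $c_6$ bounds $\|v\|_2$, the gradient-like vector controlling the $F(x)$-direction sensitivity --- giving the prefactor $2\sqrt2\,c_4\sqrt{c_5^2+c_6^2}$. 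Finally, the Rademacher complexities of $\Fcal$ and $\Gcal$ are bounded by the standard Frobenius-norm-product bound for depth-$L$ networks with $1$-Lipschitz positive-homogeneous activations (Golowich--Rakhlin--Shamir type), giving $\tilde Q_{1,1}/\sqrt m$ for $\Gcal$ (input bound $c_7$) and $\tilde Q_{1,2}/\sqrt m$ for $\Fcal$ (input bound $c_8$), which assembles into $Q_1/\sqrt m$.

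For $T_1$: I would use the algebraic identity $\bell_M(x,y)-\hell_B(x,y)=A(x,y)\,\gamma(x)$, with $\gamma(x)=\EE_{\by}[\exp(F(x)\T G(\by))]-\tfrac1B\sum_{k=1}^B\exp(F(x)\T G(\hy_k))$ and $A$ as in the statement. Since $A\ge0$ and $\gamma$ does not depend on $y$, we get $T_1\le\EE_{x,y}[A(x,y)]\cdot\sup_{x\in\Xcal,F\in\Fcal,G\in\Gcal}|\gamma(x)|$. The supremum is a uniform deviation of an empirical mean of $B$ i.i.d.\ terms in $[0,c_1]$ from its expectation: for fixed $(x,F,G)$, Hoeffding gives the $c_1/\sqrt{2B}$ rate; to make it uniform over $x\in\Xcal\subseteq\RR^\kappa$ I cover $\Xcal$ (which has $\|x\|_2\le c_8$) by an $\varepsilon$-net of size $(3c_8/\varepsilon)^\kappa$, union-bound Hoeffding over the net, pass to all of $\Xcal$ at cost $c_9\varepsilon$ using the $c_9$-Lipschitzness of $\gamma$, and choose $\varepsilon\approx c_8/\sqrt{B}$, which collects into $\tilde Q_{2,1}=2\sqrt2\,c_8c_9+c_1\sqrt{\kappa\ln(\sqrt{\kappa B}/\delta)}$. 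To make it uniform over $G\in\Gcal$ as well (the dependence on $F$ entering only through $\|F(x)\|_2\le c_3$), a further symmetrization shows the extra cost is controlled by the Rademacher complexity of $\{\by\mapsto\exp(F(x)\T G(\by)):G\in\Gcal\}$; since $\exp$ is $c_1$-Lipschitz on the relevant range, contraction reduces this to $c_1$ times the Rademacher complexity of $\{\by\mapsto F(x)\T G(\by)\}$, and Cauchy--Schwarz with $\|F(x)\|_2\le c_3$ together with the deep-net Rademacher bound yields $\tilde Q_{2,2}=2\sqrt2\,c_3c_7(\sqrt{2\log(2)L}+1)(\prod_{l=1}^{L-1}M_l)\sqrt{\sum_{k=1}^D M_{L,k}^2}$. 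Scaling by $1/\sqrt{2B}$ and union-bounding the (few) high-probability events gives the stated bound with probability at least $1-\delta$.

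The step I expect to be the main obstacle is the contraction used in $T_2$: $\hell_B$ is a ratio whose denominator, the empirical partition function $\tfrac1B\sum_k\exp(F(x)\T G(\hy_k))$, depends on $(F,G)$ through all $B$ in-batch embeddings simultaneously, not merely through $(F(x),G(y))$, so one must either condition on the batch and argue the frozen embeddings enter only through the constants $c_4,c_5,c_6$, or treat the joint dependence directly, and extracting the clean Lipschitz constant $c_4\sqrt{c_5^2+c_6^2}$ (rather than a messier expression) is delicate. A related subtlety is that $\gamma$ and $A$ in $T_1$ share the same random batch, so the pull-out $T_1\le\EE_{x,y}[A(x,y)]\sup|\gamma|$ and the combination of the net argument over $\Xcal$ with the uniform-over-$\Gcal$ symmetrization must be organized so that the two do not double-count the batch randomness while still retaining the $1/\sqrt{B}$ rate.
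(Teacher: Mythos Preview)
Your proposal is correct and follows essentially the same route as the paper's proof: the same $T_1+T_2$ decomposition, with $T_2$ handled by McDiarmid plus symmetrization, Maurer's vector contraction (yielding the $c_4\sqrt{c_5^2+c_6^2}$ Lipschitz constant), and the Golowich--Rakhlin--Shamir Frobenius-product bound, and $T_1$ handled via the identity $\bell_M-\hell_B=A(x,y)\gamma(x)$, an $\varepsilon$-net over $\Xcal$ (radius $\sim c_8/\sqrt{B}$) combined with Hoeffding and the $c_9$-Lipschitzness of $\gamma$, plus a Rademacher bound uniform over $\Gcal$ using $\exp$-contraction and Cauchy--Schwarz with $\|F(x)\|_2\le c_3$. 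The obstacle you flag in $T_2$---that the denominator of $\hell_B$ depends on $G$ through all $B$ in-batch embeddings---is handled in the paper exactly as you anticipate, by freezing the batch and absorbing the dependence into the uniform constants $c_4,c_5,c_6$ before applying vector contraction.
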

\begin{proof} \ The proof is presented in Appendix \ref{app:proof}. 
\end{proof}
Theorem \ref{thm:1} shows that the generalization gap $\EE_{x,y}[\bell_M(x, y)] -\hEE_S[\hell_B(x, y)]$ approaches zero as the number of samples $m$ and the contrastive batch size $B$ increase, at the rate of $O(\frac{1}{\sqrt{m}}+\frac{1}{\sqrt{B}} )$, with high probability. This  shows the importance of the contrastive batch size $B$ with the week supervision setting without explicit labels: i.e., if $B$ is small, the generalization gap can be large, even with a lot of training samples with large $m$. This is different from a classification task with a standard supervision, where a large $m$ is sufficient to reduce  the generalization gap. In sum, Theorem \ref{thm:1} provides  the insight that we should increase both the number of samples $m$ and the contrastive batch size $B$ to improve the performance of the final model via the weekly supervised contrastive learning.

For general models beyond the standard deep neural networks, the following theorem provides a similar insight on the importance of the contrastive batch size:  
   
\begin{restatable}{theorem}{thmb} \label{thm:2}
Let $\Fcal$  be a  set of maps $x\mapsto F(x)$ and $\Gcal$ be a set of maps $y\mapsto G(y)$. Then, for any $\delta>0$, with probability at least $1-\delta$, the following holds for all $F \in \Fcal$ and $G \in \Gcal$:
\begin{align*}
\EE_{x,y}[\bell_M(x, y)] -\hEE_S[\hell_B(x, y)]
 \le   \frac{ C_1 }{\sqrt{2B}}  +c_2 \sqrt{\frac{\ln(2/\delta)}{2m}}
 +C_{2} \sum_{k=1}^{D}\left(\Rcal_ m( \Fcal_{k})+\Rcal_ m( \Gcal_{k})   \right) +C_{3}\tilde \Rcal_B(G) ,
\end{align*}
where   $\Rcal_{m}(\Hcal):=\EE_{S,\xi}[\sup_{h \in\Hcal}\frac{1}{m} \sum_{i=1}^m \xi_i h(x_{i},y_{i})]$,
   $\Fcal_k=\{x \mapsto F(x)_k : F \in \Fcal\}$,  $\Gcal_k =\{y \mapsto G(y)_{k} : G \in \Gcal\}$,
 $\tilde \Rcal_B(G)=\EE_{y,\xi}\left[\sup_{ G \in \Gcal} \frac{1}{B}  \left\|\sum_{i=1}^B \xi _{i}  G(y_{i})\right\|_2\right]$,  $C_1=c_1\EE_{x,y}[A(x,y)]\tilde Q_{2,1}$,  $C_{2}=2\sqrt{2} c_4 \sqrt{c_5^{2}+c_6^{2}}$, and $C_3=2c_1c_3\EE_{x,y}[A(x,y)]$. Here,  $\xi_1,\dots,\xi_m$ are independent uniform random variables taking values in $\{-1,1\}$. 
\end{restatable}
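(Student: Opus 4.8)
The plan is to split the generalization gap at the population normalized training loss $\EE_{x,y}[\hell_B(x,y)]$ and control the two pieces by genuinely different mechanisms:
$$\EE_{x,y}[\bell_M(x, y)] -\hEE_S[\hell_B(x, y)] = \underbrace{\EE_{x,y}[\bell_M(x,y)] - \EE_{x,y}[\hell_B(x,y)]}_{=:\,T_1} \;+\; \underbrace{\EE_{x,y}[\hell_B(x,y)] - \hEE_S[\hell_B(x,y)]}_{=:\,T_2}.$$
Here $T_1$ captures the error from replacing the true normalizer $\EE_{\by}[\exp(F(x)\T G(\by))]$ by its $B$-sample estimate, while $T_2$ is the ordinary statistical gap for the loss class $\{(x,y)\mapsto\hell_B(x,y): F\in\Fcal,G\in\Gcal\}$, with the $B$ text inputs in the denominator treated as frozen.

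For $T_1$, I would first record the identity $\bell_M(x,y)-\hell_B(x,y)=A(x,y)\,\gamma(x)$, obtained by writing $\tfrac1a-\tfrac1{\hat a}=\tfrac{\hat a-a}{a\hat a}$ with $a=\EE_{\by}[\exp(F(x)\T G(\by))]$, $\hat a=\tfrac1B\sum_k\exp(F(x)\T G(\hy_k))$ and noting $a-\hat a=\gamma(x)$; hence $T_1=\EE_{x,y}[A(x,y)\gamma(x)]\le \EE_{x,y}[A(x,y)]\cdot\sup_{F\in\Fcal,G\in\Gcal}\sup_x|\gamma(x)|$ (the coupling between $A$ and $\gamma$ is handled by bounding $\sup|\gamma|$ by a deterministic quantity on a high-probability event). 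The crux is a uniform bound on $|\gamma(x)|$, built from three ingredients. (i) For fixed $x,F,G$, $\gamma(x)$ is an average of $B$ i.i.d.\ terms bounded by $c_1$, so Hoeffding gives $|\gamma(x)| = O(c_1\sqrt{\ln(1/\delta')/B})$. (ii) To make this uniform in $x\in\Xcal\subseteq\RR^\kappa$, cover the norm-$\le c_8$ set by an $\epsilon$-net of size $(c/\epsilon)^\kappa$, union-bound Hoeffding over the net, and pay $c_9\epsilon$ for discretization using the assumed $c_9$-Lipschitzness of $\gamma$; choosing $\epsilon$ of order $1/\sqrt B$ yields the contribution $\tilde Q_{2,1}=2\sqrt2\,c_8c_9+c_1\sqrt{\kappa\ln(\sqrt{\kappa B}/\delta)}$ divided by $\sqrt{2B}$. (iii) To make it uniform over $F\in\Fcal,G\in\Gcal$, symmetrize the empirical part of $\gamma$, apply scalar Ledoux--Talagrand contraction to peel the (on the admissible range, $c_1$-Lipschitz) exponential, and then use that the single vector $F(x)$ is shared across the $B$ summands: $\EE_\xi[\sup_{F,G}\tfrac1B\sum_k\xi_k\,F(x)\T G(\hy_k)]=\EE_\xi[\sup_{F,G}F(x)\T(\tfrac1B\sum_k\xi_kG(\hy_k))]\le c_3\,\EE_\xi[\sup_G\tfrac1B\|\sum_k\xi_kG(\hy_k)\|_2]=c_3\,\tilde\Rcal_B(G)$; with the symmetrization factor $2$ and the $c_1$ this gives $2c_1c_3\tilde\Rcal_B(G)$. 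Multiplying (i)--(iii) by $\EE_{x,y}[A(x,y)]$ produces $\tfrac{C_1}{\sqrt{2B}}+C_3\tilde\Rcal_B(G)$ with $C_1=c_1\EE_{x,y}[A(x,y)]\tilde Q_{2,1}$ and $C_3=2c_1c_3\EE_{x,y}[A(x,y)]$.

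For $T_2$, I would run the textbook route: McDiarmid's inequality with the range bound $\hell_B\le c_2$ gives the $c_2\sqrt{\ln(2/\delta)/(2m)}$ term; symmetrization bounds the expected gap by $2\Rcal_m$ of the loss class; and a joint vector (Maurer) contraction over the feature vector $(F(x),G(y))$ — whose relevant Lipschitz constant is $c_4\sqrt{c_5^2+c_6^2}$, with $c_4$ bounding the multiplier $\exp(F(x)\T G(y))/\hat a$ and $\sqrt{c_5^2+c_6^2}$ bounding the joint-gradient norm assembled from $\|F(x_i)\|\le c_5$ and $\|v\|\le c_6$ — reduces this to the per-coordinate complexities, giving $C_2\sum_{k=1}^D(\Rcal_m(\Fcal_k)+\Rcal_m(\Gcal_k))$ with $C_2=2\sqrt2\,c_4\sqrt{c_5^2+c_6^2}$. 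A union bound splitting $\delta$ between the $T_1$ covering/Hoeffding step and the $T_2$ McDiarmid step then finishes the proof.

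I expect the main obstacle to be ingredients (ii)--(iii) of the $T_1$ bound: attaining uniformity in $x$ (a covering argument, where the $c_9$-Lipschitz hypothesis and the ambient dimension $\kappa$ enter) \emph{and} in $(F,G)$ (symmetrization plus a contraction through the exponential nonlinearity to isolate exactly the quantity $\tilde\Rcal_B(G)$), while keeping the multiplicative coupling with $A(x,y)$ under control — one must choose carefully which suprema are handled by nets versus Rademacher complexity and track the $\sqrt2$'s and the constants $c_1,c_3,c_8,c_9$ so they assemble into the stated $C_1,C_3$. Once the right contraction lemmas are in place, the $T_2$ half is essentially routine.
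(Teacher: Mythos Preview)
Your proposal is correct and follows essentially the same route as the paper: the same decomposition into $T_1$ and $T_2$, the identity $\bell_M-\hell_B=A(x,y)\gamma(x)$, a covering argument over $x\in\Xcal$ using the $c_9$-Lipschitz assumption together with symmetrization and Talagrand contraction through the exponential (then Cauchy--Schwarz with $\|F(x)\|_2\le c_3$) to extract $\tilde\Rcal_B(G)$, and for $T_2$ McDiarmid plus Maurer's vector contraction to reduce to the coordinate classes $\Fcal_k,\Gcal_k$. The only cosmetic difference is that the paper packages your steps (i) and (iii) for $T_1$ into a single application of the standard Rademacher bound (its Lemma~\ref{lemma:1}): it applies McDiarmid to $\sup_{F,G}\gamma(x)$ at a fixed $x$ --- which simultaneously yields the concentration term and the Rademacher term $\Rcal_B(\Hcal^x_{\Fcal,\Gcal,e})$ --- and only then does the covering over $x$; your exposition separates the concentration and the uniformity-over-$(F,G)$ pieces, but the ingredients and the resulting constants $C_1,C_2,C_3$ are identical.
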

\begin{proof} \ The proof is completed in Appendix \ref{app:proof}. 
\end{proof}
Similarly to Theorem \ref{thm:1}, Theorem \ref{thm:2} shows that the  gap  $\EE_{x,y}[\bell_M(x, y)] -\hEE_S[\hell_B(x, y)]$ decreases as the number of samples $m$ and the contrastive batch size $B$ increase, at the rate of $O(\frac{1}{\sqrt{m}}+\frac{1}{\sqrt{B}} )$, with high probability, for general models $F$ and $G$. Because we are not specifying the types of the models  $F$ and $G$, we incur the additional terms that capture the model complexity of  $F$ and $G$: i.e.,  $\Rcal_ m( \Fcal_{k})+\Rcal_ m( \Gcal_{k})$ and $\tilde \Rcal_B(G)$. The values of these model complexity terms differ for different models and   typically scale as $\Rcal_ m(\Fcal_{k})+\Rcal_ m( \Gcal_{k}) = O(\frac{1}{\sqrt{m}})$ and $\tilde \Rcal_B(G)=O(\frac{1}{\sqrt{B}})$ in terms of $m$ and $B$ as illustrated in the proof of Theorem \ref{thm:1} for standard deep neural networks. 
Therefore, it is desirable to use a large contrastive batch size, which motivates our new algorithm in the previous section.

\section{Data and Model Scaling} 
\subsection{Larger image-text dataset} 
Starting from the ALIGN dataset, which contains 1.7B weakly-aligned image-text pairs~\citep{jia21scaling}, we collect 5B more image-text pairs, hence expanding the dataset size by roughly 4 times.
We acquire these 5B image-text pairs from the JFT dataset. In the JFT dataset, each image is associated with one or multiple classes. We convert these classes into a text sequence: \textit{``\{class\_1\} and \{class\_2\} and ... and \{class\_k\}''}.
We combine the instances from JFT into ALIGN, forming our extended dataset, which we denote by ALIGN+JFT.

To tokenize the texts from ALIGN+JFT, we randomly sample 200M sentences and use them to train a sentence piece model~\citep{kudo18sentencepiece} with a vocabulary size of 32K pieces.
Using this tokenizer, we filter and discard the text sequences which are longer than 64 tokens.
In our preliminary experiments, we find that using a tokenizer directly learned from ALIGN+JFT and adapting this filtering step can boost our top-1 accuracy on ImageNet ILSVRC-2012 by more than 1\%.\looseness=-1

\subsection{Larger Model Architectures} 
We find that for the same computational budget, it is more beneficial to invest in scaling up the image encoder, rather than the text encoder.
Thus, for our image encoder, we use the largest CoatNet architecture~\citep{dai21coatnet} due to its proven large learning capacity. This network has convolution layers followed by attention layers.
For our text encoder, we use a simple transformer~\citep{vaswani2017attention}.
Unlike ALIGN~\citep{jia21scaling} which extracts the final text representations using a [CLS] token similar to BERT~\citep{devlin18bert}, we average the representations across all steps at the top layer of our transformer.

By experimenting with the scaling benefits for small models and generalizing these findings to larger models, we choose three model sizes, termed BASIC-\{S,M,L\} for Small, Medium, and Large. In Appendix~\ref{sec:model_size}, we report our architectures and their computational costs and provide a small-scale study on the effects of scaling model sizes.

\section{\label{sec:pretraining}Pretraining and Finetuning}
To further speed up the training of our networks, we make use of pretraining. In our experiments, we first pretrain the image encoder on a large labeled dataset using the standard softmax classification loss.
After pretraining the image encoder, we fix all of its weights and just train the text encoder using contrastive learning.
Compared to contrastive learning with GradAccum, the pretraining-finetuning procedure is much more efficient in terms of peak memory usage.
This is because we never have to compute the gradients of \textit{both} the image encoder and the text encoder, which allows automated compiler optimizations to free up unused memory on-the-fly.

Despite its reduced memory usage, we find that this pretraining-finetuning scheme has a weakness: it never exposes the image encoder to noisy image-text data, which makes the image encoder fail on certain tasks.
For instance, while some pretrained-and-finetuned models achieve similar accuracy to their contrastive counterparts on ImageNet or CIFAR, they completely fail on an easier task -- MNIST.
This is because our pretraining labeled dataset, which mostly consists of natural images, has very few digit images.
Meanwhile, our noisy image-text dataset has plenty instances that can teach a model certain optical character recognition skills.\looseness=-1

As will be shown in Section~\ref{sec:exp}, our best experimental results are achieved using a hybrid procedure.
First, we pretrain the image encoder on a large labeled dataset, then fix its weights and train the text encoder using the contrastive loss on our image-text dataset.
Finally, we finetune both image and text encoders, using our GradAccum technique when needed.
In Section~\ref{sec:ablation}, we present ablation studies to analyze the effects of pretraining, finetuning, and other alternative training procedures.\looseness=-1

\section{\label{sec:exp}Experiments}

\subsection{\label{sec:training}Training details}

\paragraph{Labeled data for pretraining.} For pretraining (Section~\ref{sec:pretraining}), we use the JFT dataset.
This dataset has been used in previous publications~\citep{zhai21scaling,dosovitskiy21vit,kolesnikov20bit}, but it has been constantly expanded. The JFT version used in our experiments has 5B images, each of which can be associated to one or multiple labels out of 29K possible classes.\looseness=-1

\paragraph{Data filtering.} A problem with training on large auto-curated datasets like ALIGN and JFT is that these datasets might unintentionally contain examples from our test sets.
To avoid such contaminations, we filter all instances in our training data that has a structural similarity index (SSIM~\citep{wang2004image}) of at least 0.5 with any image from our evaluation benchmarks.

\paragraph{Optimizer.} We train our models with our own optimizer called AdaFactorW, adapted from two existing ones: AdaFactor~\citep{shazeer18adafactor} and AdamW~\citep{loshchilov19decoupled}.
Specifically, we factorize our second gradient moments like AdaFactor, and decouple the weight decay from all moments like AdamW.
To further save memory, we follow~\citet{zhai21scaling} and store the first gradient moments in \texttt{bfloat16}. We observe, however, that while we can \textit{store} these moments in \texttt{bfloat16}, we need to convert them into \texttt{float32} prior to computing our weight updates to avoid numerical instability.

\begin{table*}[htb!]
\centering
\resizebox{0.98\textwidth}{!}{ %
\begin{tabular}{r|ccccccccccccccccc}
  \toprule
  \rotatebox[origin=l]{90}{\textbf{Datasets}} &
  \rotatebox[origin=l]{90}{Birdsnap} &
  \rotatebox[origin=l]{90}{Caltech101} &
  \rotatebox[origin=l]{90}{CIFAR10} &
  \rotatebox[origin=l]{90}{CIFAR100} &
  \rotatebox[origin=l]{90}{DTD} &
  \rotatebox[origin=l]{90}{EuroSAT} &
  \rotatebox[origin=l]{90}{Flowers} &
  \rotatebox[origin=l]{90}{Food101} &
  \rotatebox[origin=l]{90}{ImageNet} &
  \rotatebox[origin=l]{90}{MNIST} &
  \rotatebox[origin=l]{90}{IIIT-Pets} &
  \rotatebox[origin=l]{90}{PCam} &
  \rotatebox[origin=l]{90}{RESISC45} &
  \rotatebox[origin=l]{90}{STL10} &
  \rotatebox[origin=l]{90}{SUN397} &
  \rotatebox[origin=l]{90}{UCF101} &
  \rotatebox[origin=l]{90}{VOC2007}
  \\
  \midrule
  ResNet-50 & 32.6 & 82.1 & 75.6 & 41.6 & 41.7 & 41.1 & 65.9 & 81.1 & 59.6 & 66.6 & 85.4 & 57.6 & 54.2 & 94.3 & 59.6 & 63.6 & 82.1 \\ 
  \multirow{2}{*}{BASIC-S} & 38.6 & 91.6 & 86.4 & 57.8 & 54.3 & 29.1 & 76.8 & 86.0 & 71.9 & 32.4 & 93.2 & 54.3 & 53.5 & 96.7 & 67.3 & 65.5 & 83.4 \\ 
  & \gain{6.0} & \gain{9.5} & \gain{10.8} & \gain{16.2} & \gain{12.6} & \loss{-12.0} & \gain{10.9} & \gain{4.9} & \gain{12.3} & \loss{-34.2} & \gain{7.8} & \loss{-3.3} & \loss{-0.7} & \gain{2.4} & \gain{7.7} & \gain{1.9} & \gain{1.3} \\
  \midrule
  ViT-B/16 & 39.1 & 89.3 & 91.6 & 68.7 & 46.0 & 54.1 & 70.4 & 89.2 & 68.6 & 56.0 & 88.9 & 48.1 & 65.5 & 98.2 & 65.2 & 69.8 & 83.9 \\ 
  \multirow{2}{*}{BASIC-M} & 49.4 & 94.2 & 94.8 & 72.2 & 60.2 & 39.5 & 86.0 & 92.3 & 81.5 & 33.6 & 95.3 & 58.3 & 65.4 & 99.3 & 72.9 & 77.4 & 84.2 \\ 
  & \gain{10.3} & \gain{4.9} & \gain{3.2} & \gain{3.5} & \gain{14.2} & \loss{-14.6} & \gain{15.6} & \gain{3.1} & \gain{12.9} & \loss{-22.4} & \gain{6.4} & \gain{10.2} & \loss{-0.1} & \gain{1.1} & \gain{7.7} & \gain{7.6} & \gain{0.3} \\
  \midrule
  ViT-L/14-336 & 49.5 & 92.8 & 95.7 & 77.5 & 55.7 & 59.6 & 78.3 & 93.8 & 76.2 & 88.3 & 93.5 & 63.0 & 71.7 & 99.4 & 68.4 & 76.9 & 84.3 \\ 
  \multirow{2}{*}{BASIC-L} & 59.2 & 94.7 & 97.5 & 82.3 & 64.6 & 51.0 & 91.2 & 95.1 & 85.7 & 40.3 & 97.9 & 59.6 & 72.7 & 99.6 & 76.2 & 84.8 & 84.6 \\ 
  & \gain{9.7} & \gain{1.9} & \gain{1.8} & \gain{4.8} & \gain{8.9} & \loss{-8.6} & \gain{13.1} & \gain{1.3} & \gain{9.5} & \loss{-48.0} & \gain{4.4} & \loss{-3.4} & \gain{1.0} & \gain{0.2} & \gain{7.8} & \gain{7.9} & \gain{0.3} \\
  \bottomrule
\end{tabular}
} %
\captionof{table}{\label{tab:classification_all}Performances of BASIC and CLIP models~\citep{radford21learning} on 17 image classification benchmarks. The first two blocks compare models of similar numbers of weights and FLOPs. The last block compares the largest CLIP and BASIC models.}
\end{table*}

\paragraph{Other hyperparameters.} For all experiments, we train and evaluate with the image resolution of 224x224. While we can increase this resolution to gain performance~\citep{tan19efficientnet,tan21efficientnetv2,touvron2019fixing,radford21learning,jia21scaling}, we choose not to do this and instead, reserve our computational resources for scaling up our model and our batch size.
All of our other hyper-parameters can be found in Appendix~\ref{sec:hparams_table}.\looseness=-1

\subsection{\label{sec:image_classification}Results on Image Classification Benchmarks}
We first present the zero-shot transfer performance of our BASIC models. We compare our models BASIC-\{S,M,L\} to CLIP models with similar computational budgets~\citep{radford21learning} on 17 natural image classification datasets. Details about these datasets can be found in Appendix~\ref{sec:dataset_details}.

Zero-shot transfer models require textual prompts, which we take from CLIP~\citep{radford21learning} for consistent comparison. We suspect that using prompts which are tuned for our models can further improve our results as shown in~\citep{lester21thepower}, because the text sequences in our training data have a different distribution from the text sequences in CLIP.

Table~\ref{tab:classification_all} shows the comparison.
From the table, it can be seen that BASIC models conclusively outperform CLIP models of the same computational budgets. Specifically, BASIC models demonstrate higher accuracy than CLIP models on 13 out of 17 datasets.
On the Oxford IIIT Pets dataset, BASIC-L achieves 97.9\% mean per-class recall which sets a new state-of-the-art, despite having never seen any training images from the dataset.
On ther other hand, BASIC models have low accuracy on EuroSAT, MNIST, and PCam. MNIST is where BASIC models perform worst, where the highest accuracy is only 40.3\%. We discuss these failure cases further in Section~\ref{sec:limitations}.

\subsection{\label{sec:robustness}Results on Robustness Benchmarks}
Despite the convincing accuracy of modern deep learning models on ImageNet, concerns have been raised about their robustness~\citep{szegedy2013intriguing}.
These concerns arise from a common failure mode of ImageNet-trained models: subtle changes to their input images, which are imperceptible to humans, can wildly alter their predictions with high confidence, \eg,~from ``golden retriever'' into ``goldfish''.\looseness=-1

In CLIP,~\citet{radford21learning} have studied certain aspects of this failure mode. They have not drawn a definitive conclusion whether to attribute such failures to deep learning, ImageNet, or a combination of them. Instead, they cautioned against generalizing ``too far from [their] initial findings''.

Here we advance CLIP's study on the robustness of zero-shot models in two aspects. First, we analyze our BASIC models presented previously in Section~\ref{sec:image_classification} and reaffirm that zero-shot models are indeed more robust than their ImageNet-trained counterparts.
Second, we perform an experiment which suggests that ImageNet's labeled training examples \textit{might be} responsible for making ImageNet-trained models less robust.
Similar to CLIP's authors, we caution readers that our experiment presents a correlation, not a causal analysis.
In other words, we do not attribute the lack of robustness in ImageNet-trained models to the dataset.

\begin{figure}[tb!]
\centering
\includegraphics[width=0.45\linewidth]{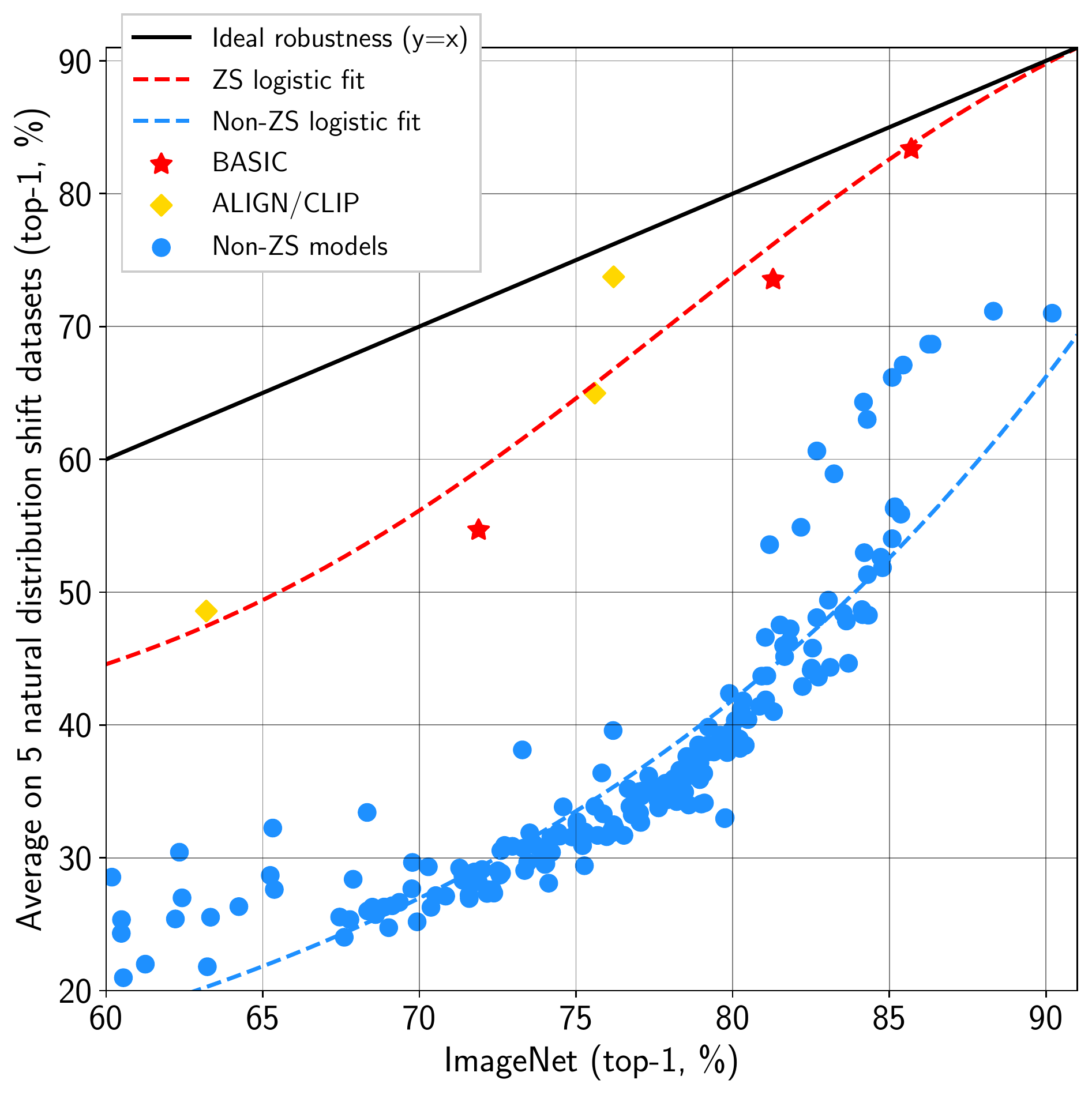}
\captionof{figure}{\label{fig:imagenet_robustness}Top-1 accuracy on ImageNet vs. average top-1 accuracy on 5 robustness benchmarks. Zero-shot models (red stars and yellow rhombuses) have significantly higher effective robustness~\citep{taori2020measuring} compared to ImageNet-trained models (blue dots).}
\end{figure}

\paragraph{More accurate zero-shot transfer models are also more robust.} We evaluate BASIC-\{S,M,L\} models from Section~\ref{sec:image_classification} on 5 robustness benchmarks derived from ImageNet: ImageNet-A~\citep{hendrycks2019nae},
ImageNet-R~\citep{hendrycks2020many},
ImageNet-V2~\citep{recht2019imagenet},
ImageNet-Sketch~\citep{wang2019learning},
and ObjectNet~\citep{barbu2019objectnet}.
These benchmarks have images in all or a subset of the 1000 ImageNet classes, but their inputs are selected from certain natural distribution shifts, which can cause ImageNet-trained models to make many more mistakes.
Our numerical results are highlighted in Table~\ref{tab:highlights} from Section~\ref{sec:intro}.
To visualize the data trend, in Figure~\ref{fig:imagenet_robustness}, we plot the accuracy of zero-shot models -- BASIC, CLIP~\citep{radford21learning}, and ALIGN~\citep{jia21scaling} -- and of 200 ImageNet-trained models collected by~\citet{taori2020measuring}.

The data points from our BASIC models extend the prediction from CLIP: zero-shot transfer models have a higher \textit{effective robustness}~\citep{radford21learning,taori2020measuring}, \ie~they have higher robustness than ImageNet-trained models with the same ImageNet accuracy.
To extrapolate from this trend, we fit a logistic curve (red dashes) to the zero-shot accuracy and robustness of zero-shot transfer models. The plot shows that this line meets the ideal robustness line at about 91\% on the x-coordinate.
In other words, our plot predicts that a model which achieves about 91\% \textit{zero-shot} accuracy on ImageNet, \ie,~just slightly better than the state-of-the-art ImageNet-trained model~\citep{dai21coatnet}, will also achieve the ideal robustness.


\begin{figure*}[tb!]
\centering
\includegraphics[width=0.99\linewidth]{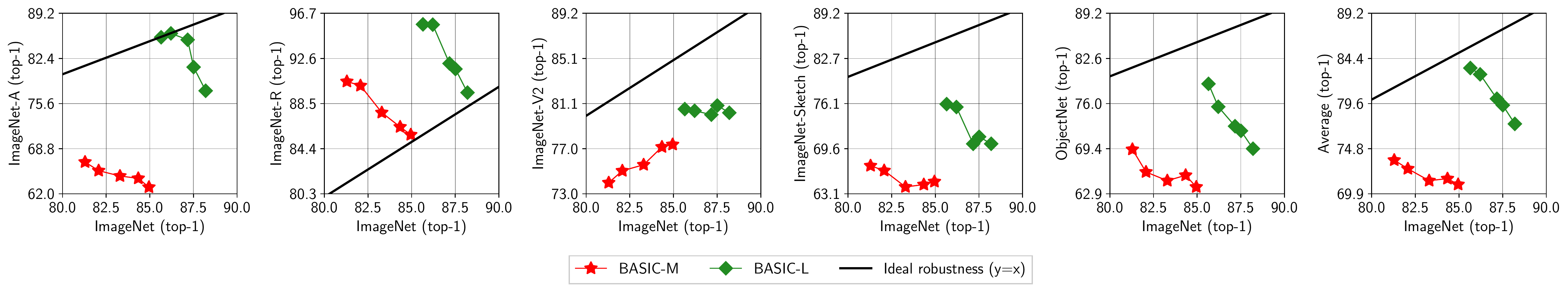}
\captionof{figure}{\label{fig:imagenet_ft}Top-1 accuracy of BASIC models on ImageNet and on 5 robustness benchmarks. In all cases, as the BASIC models are trained on more ImageNet labeled data (1\%, 10\%, 20\%, and 50\%), their ImageNet accuracy significantly increase, but their accuracy on the robustness benchmarks increase much less, or decrease.\looseness=-1}
\end{figure*}
\paragraph{ImageNet-finetuned models are less robust.} We now study the effect of ImageNet's labeled data on our models.
We take the converged BASIC-\{S,M,L\} checkpoints from Section~\ref{sec:image_classification} and continue to train them on 1\%, 10\%, 20\%, and 50\% of ImageNet's labeled examples.
Note that we continue training these checkpoints using the contrastive loss, where the names of ImageNet classes are utilized as text sequences accompanying their images. This is different from CLIP's linear probing approach, which we do not perform to avoid potential confounding factors from our study, \eg~linear classifiers might behave differently from our zero-shot transfer classifiers.
We then compare the accuracy of these finetuned models on ImageNet and on the 5 robustness benchmarks. The results are visualized in Figure~\ref{fig:imagenet_ft}.\looseness=-1

The figure shows a clear trend: as our model learns from \textit{more} labeled ImageNet data, they become more accurate on ImageNet, but these gains do not carry over to the robustness benchmarks.
Specifically, with the exception of ImageNet-V2, for which the accuracy of finetuned models stay the same (for BASIC-L) or slightly increase (for BASIC-M), for all other robustness benchmarks, the finetuned models suffer from significant performance drops.
In the extreme case, 3\% accuracy gain on ImageNet leads to 8.3\% accuracy drop for ImageNet-R.\looseness=-1

What makes our finetuned models less robust? A quick glance at our results might lead to the superficial conclusion that our models have overfit, as our finetuning sets are a lot smaller than ALIGN and JFT.
However, this overfitting theory does not explain the trend observed in Figure~\ref{fig:imagenet_ft}: training on \textit{more} labeled ImageNet data makes our models \textit{less} robust.
We hope our observation invites further causal analysis on the effects of ImageNet's labeled data.

\section{\label{sec:ablation}Ablation Study}
\subsection{\label{sec:batchsize}The Importance of  Batch Size Scaling}

To demonstrate the role of large batch sizes, we conduct several controlled experiments for BASIC-S and BASIC-M on ALIGN.
For both BASIC-S and BASIC-M, we fix all hyperparameters as shown in Table~\ref{tab:hparams}, but vary the batch size and the number of training steps.
Models that are trained with larger batch sizes are trained with fewer steps to guarantee that they ``see'' the same number of examples.
Table~\ref{tab:batchsize} presents the ImageNet top-1 zero-shot accuracy of all models at the end of their training, and Figure~\ref{fig:batchsize} visualizes their entire validation accuracy curves.\looseness=-1

\begin{table}[h!]
\begin{minipage}{0.6\linewidth}
  \includegraphics[width=\linewidth]{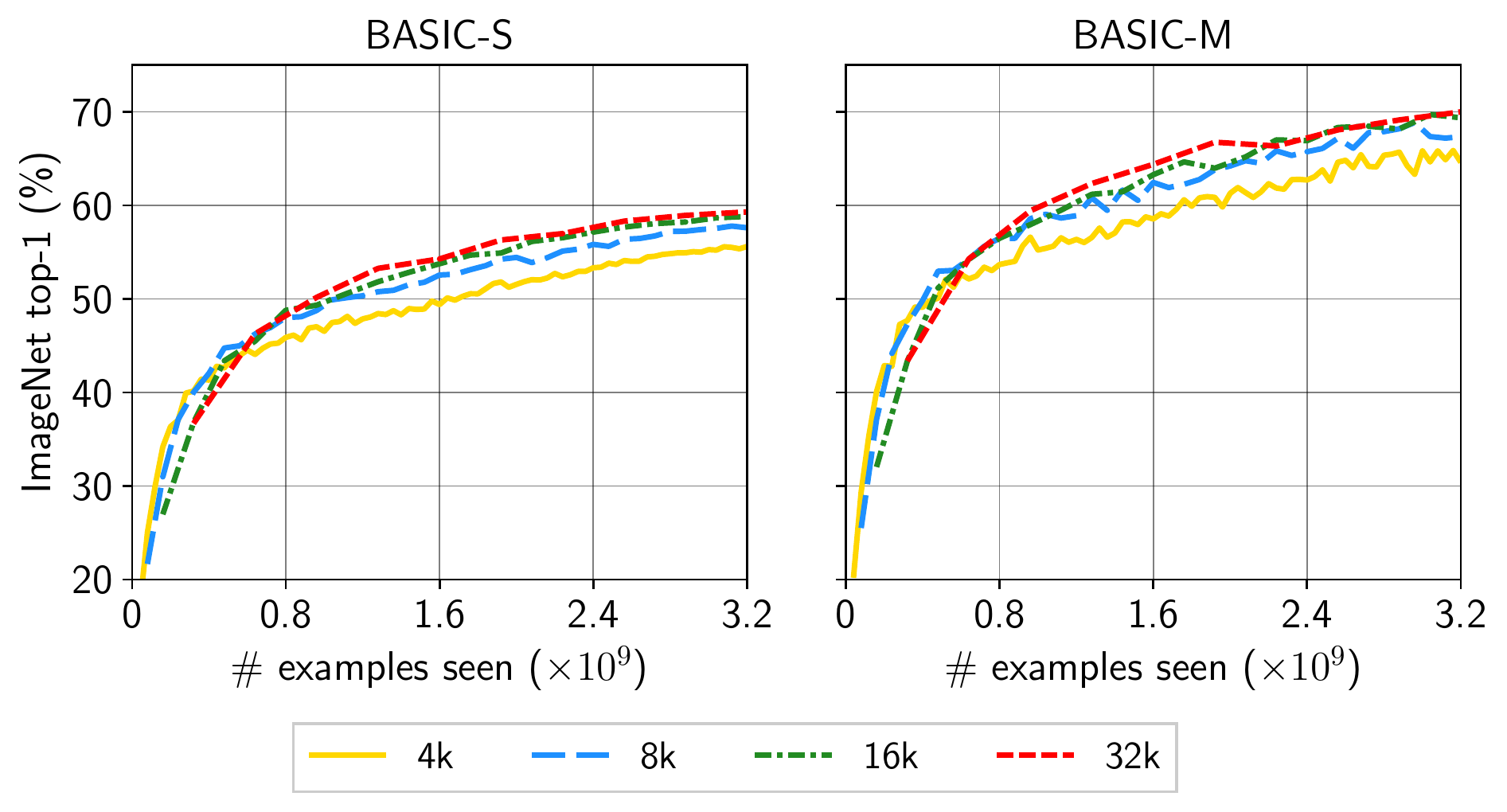}
  
  \captionof{figure}{\label{fig:batchsize}ImageNet held-out validation accuracy curves with different batch sizes. Models with smaller batch sizes are trained for more steps to ensure a fair comparison. The comparison shows that despite seeing the same number of training examples, models with larger batch sizes reach higher performances than models with more training steps. Image best viewed in color.\looseness=-1}
\end{minipage}\hfill\begin{minipage}{0.38\linewidth}
  \resizebox{\linewidth}{!}{ %
    \begin{tabular}{cccc}
      \toprule
        \textbf{Batch size} & \textbf{Steps} & \textbf{BASIC-S} & \textbf{BASIC-M} \\
      \midrule
        4096  & 800K & 55.6 & 64.8 \\
        8192  & 400K & 57.6 & 67.7 \\
        16384 & 200K & 58.8 & 69.4 \\
        32768 & 100K & \textbf{59.3} & \textbf{70.1} \\
      \bottomrule
    \end{tabular}
  } %
  \captionof{table}{\label{tab:batchsize}Top-1 ImageNet accuracy at the end of the training for our BASIC-\{S,M\} models trained with different batch sizes and numbers of training steps. All models are trained for the same number of epochs, but models trained with larger batch sizes has a higher accuracy.}
\end{minipage}
\end{table}

Table~\ref{tab:batchsize} and Figure~\ref{fig:batchsize} both suggest that training for more steps cannot equalize the benefit of large batch sizes.
This phenomenon is consistent with the observation from SimCLR~\citep{chen20simclr,chen2020big}: large batch sizes help contrastive learning.
SimCLR observes that the benefit of large batch sizes saturate at 8192.
In contrast, our results in Table~\ref{tab:batchsize} and Figure~\ref{fig:batchsize} show that lager batch sizes continue to benefit our models until 32768, and even until 65536 as in Section~\ref{sec:image_classification}.
We suspect that the benefits for large batch sizes do not saturate because our dataset size and model size are both larger than those of SimCLR, \eg~ALIGN with 1.7B examples compared to ImageNet with 1M examples, and BASIC-\{S, M\} compared to ResNet-\{50,101,152\}.
This comparison suggests the benefits of our method -- combined scaling.

\subsection{Data Scaling, Model Scaling, and Pretraining}

\begin{figure}[htb!]
\centering
\includegraphics[width=0.6\linewidth]{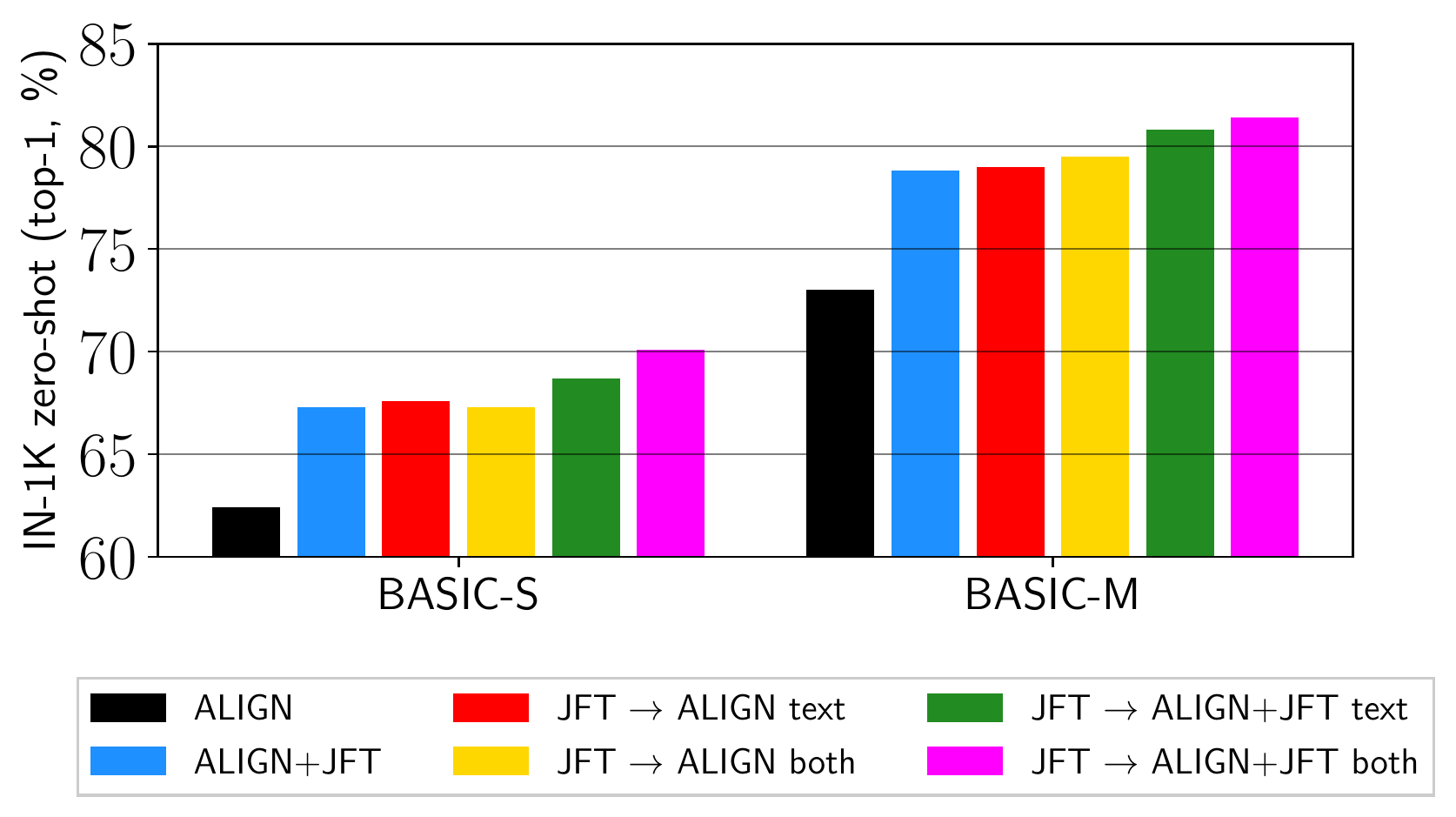}
\captionof{figure}{\label{fig:pretraining}Break-down contributions of data scaling and model scaling for BASIC-S and BASIC-M. Shown are the ImageNet top-1 accuracy of our BASIC-\{S,M\} models under different training settings. Models trained from scratch on ALIGN+JFT has almost the same performance with models pretrained on JFT and then finetuned on ALIGN or on ALIGN+JFT. Models that are pretrained and then have both their image and text encoders finetuned reach the highest accuracy. Figure best viewed in color.}
\end{figure}
We now study the benefits of other scaling dimensions, data and model scaling, on the quality of our models. We also study pretraining as an alternate training procedure to contrastive learning.
We train BASIC-\{S,M\} models in 6 different settings and plot their final top-1 ImageNet accuracy in Figure~\ref{fig:pretraining}. Below, we compare and analyze the settings.\looseness=-1

First, BASIC-S and BASIC-M respectively gain 5.3\% and 5.8\% accuracy when we expand the contrastive training dataset from ALIGN to ALIGN+JFT.
These gains, albeit large, are smaller than the gain by enlarging the model size, \eg,~11.7\% when going from BASIC-S to BASIC-M.

Next, we study the effects of pretraining image encoders on JFT.
As can be seen from Figure~\ref{fig:pretraining}, models whose image encoders are pretrained on JFT and whose text encoders are subsequently trained on ALIGN, \ie,~the red bars, have similar performances with models trained from scratch on ALIGN+JFT, \ie,~the blue bars.
Their similar accuracy suggest that the training losses -- softmax cross-entropy or contrastive -- have a much smaller effect than the datasets.
In other words, when given the same dataset, the image encoders in BASIC models learn to become equally good, regardless of their loss functions.

To our surprise, training the text encoders for JFT-pretrained image encoders \textit{on ALIGN+JFT} gains 1\% for BASIC-S and 1.8\% for BASIC-L, compared to training these text encoders on ALIGN. We suspect that these gains come from better representations for the textual prompts, since the models trained on ALIGN+JFT also sees the textual prompts which consist of clean JFT class names. However, this speculation needs a more thorough study to understand.

Finally, we find that if we take a converged model whose image encoder is pretrained on JFT and whose text encoder is trained on ALIGN+JFT, then we continue to train \textit{both} its image encoders and text encoders at a small learning rate.
This extra training phase gains us 1.4\% ImageNet accuracy for BASIC-S, 0.6\% for BASIC-M, and 0.4\% for BASIC-L (not shown in this section).


\section{\label{sec:limitations}Limitations}
Despite the strong results of our zero-shot transfer classifier, especially on natural image classification tasks, they inevitably have their shortcomings. In this section, we discuss the problems that we find with our BASIC models.

\paragraph{Zero-shot transfer models do not perform well on test sets that are underrepresented in the training datasets.} We emphasize the failures of BASIC on two test sets where BASIC models are much worse than CLIP models: EuroSAT, MNIST, PatchCamelyon (PCam) (see Table~\ref{tab:classification_all} from Section~\ref{sec:image_classification}).
Here, we summarize that BASIC models fail on MNIST and PCam because our training datasets ALIGN and JFT have relatively few images of handwritten digits and of lymph nodes, which are the domain of these datasets.
Compared to MNIST and PCam, BASIC models do better on EuroSAT which consist of satellite land images, but their accuracy is lower than that of CLIP models. This is because the class names for these satellite images are not very descriptive to BASIC models. More analysis for these failures are in Appendix~\ref{sec:failure}.


\paragraph{Zero-shot transfer requires prompt engineering.} In this paper, we use the prompts from CLIP~\citep{radford21learning} to make our results comparable to previous works.
In Appendix~\ref{sec:failure}, we present examples which show that prompts that are badly chosen or adversarially chosen can hurt the accuracy of zero-shot transfer models by flipping their predictions.
These examples suggest that prompt engineering is an important research topic to make zero-shot transfer models robust and reliable, but the topic is of out of the scope of this paper.

\paragraph{Combined scaling is expensive.}
As reported in Appendix~\ref{sec:compute_cost}, the hardware and training time for our models are not small.
Despite the training cost, we can use the models in this paper without any finetuning, and hence avoid the finetuning cost.
We hope that future research can reduce our models' training expense, \eg,~larger accelerator memory can save the extra re-materialization steps.

\section{\label{sec:conclusion}Conclusion}
Zero-shot transfer learning represents a new paradigm where pretrained models can be used directly for downstream applications without collecting any application-specific data.
However, in order to become practical for real-world applications, zero-shot transfer models need to bridge the accuracy gap to supervised and semi-supervised models.

In this paper, we presented combined scaling techniques that significantly boost the performance of zero-shot transfer models.
We show that scaling in the data size, the model size, and the batch size all improves the final model's accuracy and robustness.
To overcome the memory limit arising from combined scaling, we devise a simple gradient accumulation method based on re-materialization.

{
\setlength{\bibsep}{0pt}
\small
\bibliography{egbib}

\begin{thebibliography}{111}
\providecommand{\natexlab}[1]{#1}
\providecommand{\url}[1]{\texttt{#1}}
\expandafter\ifx\csname urlstyle\endcsname\relax
  \providecommand{\doi}[1]{doi: #1}\else
  \providecommand{\doi}{doi: \begingroup \urlstyle{rm}\Url}\fi

\bibitem[Abadi et~al.(2016)Abadi, Barham, Chen, Chen, Davis, Dean, Devin,
  Ghemawat, Irving, Isard, et~al.]{abadi2016tensorflow}
Mart{\'\i}n Abadi, Paul Barham, Jianmin Chen, Zhifeng Chen, Andy Davis, Jeffrey
  Dean, Matthieu Devin, Sanjay Ghemawat, Geoffrey Irving, Michael Isard, et~al.
\newblock Tensorflow: A system for large-scale machine learning.
\newblock In \emph{OSDI}, 2016.

\bibitem[Akata et~al.(2015{\natexlab{a}})Akata, Perronnin, Harchaoui, and
  Schmid]{akata2015label}
Zeynep Akata, Florent Perronnin, Zaid Harchaoui, and Cordelia Schmid.
\newblock Label-embedding for image classification.
\newblock \emph{IEEE TPAMI}, 2015{\natexlab{a}}.

\bibitem[Akata et~al.(2015{\natexlab{b}})Akata, Reed, Walter, Lee, and
  Schiele]{akata2015evaluation}
Zeynep Akata, Scott Reed, Daniel Walter, Honglak Lee, and Bernt Schiele.
\newblock Evaluation of output embeddings for fine-grained image
  classification.
\newblock In \emph{CVPR}, 2015{\natexlab{b}}.

\bibitem[Barbu et~al.(2019)Barbu, Mayo, Alverio, Luo, Wang, Gutfreund,
  Tenenbaum, and Katz]{barbu2019objectnet}
Andrei Barbu, David Mayo, Julian Alverio, William Luo, Christopher Wang, Dan
  Gutfreund, Josh Tenenbaum, and Boris Katz.
\newblock Objectnet: A large-scale bias-controlled dataset for pushing the
  limits of object recognition models.
\newblock In \emph{NeurIPS}, 2019.

\bibitem[Bartlett and Mendelson(2002)]{bartlett2002rademacher}
Peter~L Bartlett and Shahar Mendelson.
\newblock Rademacher and gaussian complexities: Risk bounds and structural
  results.
\newblock \emph{Journal of Machine Learning Research}, 3\penalty0
  (Nov):\penalty0 463--482, 2002.

\bibitem[Bello et~al.(2021)Bello, Fedus, Du, Cubuk, Srinivas, Lin, Shlens, and
  Zoph]{bello2021revisiting}
Irwan Bello, William Fedus, Xianzhi Du, Ekin~D Cubuk, Aravind Srinivas,
  Tsung-Yi Lin, Jonathon Shlens, and Barret Zoph.
\newblock Revisiting resnets: Improved training and scaling strategies.
\newblock In \emph{NeurIPS}, 2021.

\bibitem[Berg et~al.(2014)Berg, Liu, Woo~Lee, Alexander, Jacobs, and
  Belhumeur]{berg2014birdsnap}
Thomas Berg, Jiongxin Liu, Seung Woo~Lee, Michelle~L Alexander, David~W Jacobs,
  and Peter~N Belhumeur.
\newblock Birdsnap: Large-scale fine-grained visual categorization of birds.
\newblock \emph{CVPR}, 2014.

\bibitem[Bossard et~al.(2014)Bossard, Guillaumin, and Van~Gool]{food101}
Lukas Bossard, Matthieu Guillaumin, and Luc Van~Gool.
\newblock Food-101--mining discriminative components with random forests.
\newblock \emph{ECCV}, 2014.

\bibitem[Brock et~al.(2021)Brock, De, Smith, and Simonyan]{brock2021high}
Andy Brock, Soham De, Samuel~L Smith, and Karen Simonyan.
\newblock High-performance large-scale image recognition without normalization.
\newblock In \emph{International Conference on Machine Learning}, pages
  1059--1071. PMLR, 2021.

\bibitem[Chen et~al.(2021)Chen, Hu, Wu, Jiang, and Wang]{chen2021learning}
Jiacheng Chen, Hexiang Hu, Hao Wu, Yuning Jiang, and Changhu Wang.
\newblock Learning the best pooling strategy for visual semantic embedding.
\newblock In \emph{CVPR}, 2021.

\bibitem[Chen et~al.(2016)Chen, Xu, Zhang, and Guestrin]{chen2016training}
Tianqi Chen, Bing Xu, Chiyuan Zhang, and Carlos Guestrin.
\newblock Training deep nets with sublinear memory cost.
\newblock \emph{Arxiv 1604.06174}, 2016.

\bibitem[Chen et~al.(2020{\natexlab{a}})Chen, Kornblith, Norouzi, and
  Hinton]{chen20simclr}
Ting Chen, Simon Kornblith, Mohammad Norouzi, and Geoffrey~E. Hinton.
\newblock A simple framework for contrastive learning of visual
  representations.
\newblock In \emph{ICML}, 2020{\natexlab{a}}.

\bibitem[Chen et~al.(2020{\natexlab{b}})Chen, Kornblith, Swersky, Norouzi, and
  Hinton]{chen2020big}
Ting Chen, Simon Kornblith, Kevin Swersky, Mohammad Norouzi, and Geoffrey
  Hinton.
\newblock Big self-supervised models are strong semi-supervised learners.
\newblock In \emph{NIPS}, 2020{\natexlab{b}}.

\bibitem[Chen et~al.(2020{\natexlab{c}})Chen, Fan, Girshick, and
  He]{chen2020improved}
Xinlei Chen, Haoqi Fan, Ross Girshick, and Kaiming He.
\newblock Improved baselines with momentum contrastive learning.
\newblock \emph{ArXiv 2003.04297}, 2020{\natexlab{c}}.

\bibitem[Chen et~al.(2020{\natexlab{d}})Chen, Li, Yu, El~Kholy, Ahmed, Gan,
  Cheng, and Liu]{chen2020uniter}
Yen-Chun Chen, Linjie Li, Licheng Yu, Ahmed El~Kholy, Faisal Ahmed, Zhe Gan,
  Yu~Cheng, and Jingjing Liu.
\newblock Uniter: Universal image-text representation learning.
\newblock In \emph{ECCV}, 2020{\natexlab{d}}.

\bibitem[Cheng et~al.(2017)Cheng, Han, and Lu]{cheng17remote}
Gong Cheng, Junwei Han, and Xiaoqiang Lu.
\newblock Remote sensing image scene classification: Benchmark and state of the
  art.
\newblock \emph{Proceedings of the IEEE}, 2017.

\bibitem[Cimpoi et~al.(2014)Cimpoi, Maji, Kokkinos, Mohamed, and
  Vedald]{cimpoi14describing}
Mircea Cimpoi, Subhransu Maji, Iasonas Kokkinos, Sammy Mohamed, and Andrea
  Vedald.
\newblock Describing textures in the wild.
\newblock In \emph{CVPR}, 2014.

\bibitem[Coates et~al.(2011)Coates, Ng, and Lee]{coates2011stl10}
Adam Coates, Andrew Ng, and Honglak Lee.
\newblock {An Analysis of Single Layer Networks in Unsupervised Feature
  Learning}.
\newblock In \emph{AISTATS}, 2011.

\bibitem[Dai et~al.(2021)Dai, Liu, Le, and Tan]{dai21coatnet}
Zihang Dai, Hanxiao Liu, Quoc~V. Le, and Mingxing Tan.
\newblock Coatnet: Marrying convolution and attention for all data sizes.
\newblock In \emph{NeurIPS}, 2021.

\bibitem[Desai and Johnson(2021)]{desai2021virtex}
Karan Desai and Justin Johnson.
\newblock Virtex: Learning visual representations from textual annotations.
\newblock In \emph{CVPR}, 2021.

\bibitem[Devlin et~al.(2018)Devlin, Chang, Lee, and Toutanova]{devlin18bert}
Jacob Devlin, Ming-Wei Chang, Kenton Lee, and Kristina Toutanova.
\newblock Bert: Pre-training of deep bidirectional transformers for language
  understanding.
\newblock In \emph{NAACL}, 2018.

\bibitem[Dosovitskiy et~al.(2021)Dosovitskiy, Beyer, Kolesnikov, Weissenborn,
  Zhai, Unterthiner, Dehghani, Minderer, Heigold, Gelly, Uszkoreit, and
  Houlsby]{dosovitskiy21vit}
Alexey Dosovitskiy, Lucas Beyer, Alexander Kolesnikov, Dirk Weissenborn,
  Xiaohua Zhai, Thomas Unterthiner, Mostafa Dehghani, Matthias Minderer, Georg
  Heigold, Sylvain Gelly, Jakob Uszkoreit, and Neil Houlsby.
\newblock An image is worth 16x16 words: Transformers for image recognition at
  scale.
\newblock In \emph{ICLR}, 2021.

\bibitem[Everingham et~al.()Everingham, Van~Gool, Williams, Winn, and
  Zisserman]{everingham07pascal}
M.~Everingham, L.~Van~Gool, C.~K.~I. Williams, J.~Winn, and A.~Zisserman.
\newblock The {PASCAL} {V}isual {O}bject {C}lasses {C}hallenge 2007 {(VOC2007)}
  {R}esults.

\bibitem[Faghri et~al.(2017)Faghri, Fleet, Kiros, and Fidler]{faghri2017vse++}
Fartash Faghri, David~J Fleet, Jamie~Ryan Kiros, and Sanja Fidler.
\newblock Vse++: Improving visual-semantic embeddings with hard negatives.
\newblock \emph{BMVC}, 2017.

\bibitem[Fei-Fei et~al.(2004)Fei-Fei, Rob, and Perona]{feifei04learning}
Li~Fei-Fei, Fergus Rob, and Pietro Perona.
\newblock Learning generative visual models from few training examples: An
  incremental bayesian approach tested on 101 object categories.
\newblock In \emph{CVPR}, 2004.

\bibitem[Frome et~al.(2013)Frome, Corrado, Shlens, Bengio, Dean, Ranzato, and
  Mikolov]{frome2013devise}
Andrea Frome, Greg Corrado, Jonathon Shlens, Samy Bengio, Jeffrey Dean,
  Marc’Aurelio Ranzato, and Tomas Mikolov.
\newblock Devise: A deep visual-semantic embedding model.
\newblock In \emph{Advances in Neural Information Processing Systems}, 2013.

\bibitem[Golowich et~al.(2018)Golowich, Rakhlin, and Shamir]{golowich2018size}
Noah Golowich, Alexander Rakhlin, and Ohad Shamir.
\newblock Size-independent sample complexity of neural networks.
\newblock In \emph{Conference On Learning Theory}, pages 297--299. PMLR, 2018.

\bibitem[Goyal et~al.(2017)Goyal, Doll{\'{a}}r, Girshick, Noordhuis,
  Wesolowski, Kyrola, Tulloch, Jia, and He]{goyal17onehour}
Priya Goyal, Piotr Doll{\'{a}}r, Ross~B. Girshick, Pieter Noordhuis, Lukasz
  Wesolowski, Aapo Kyrola, Andrew Tulloch, Yangqing Jia, and Kaiming He.
\newblock Accurate, large minibatch {SGD:} training imagenet in 1 hour.
\newblock \emph{Arxiv 1706.02677}, 2017.

\bibitem[Griewank and Walther(2000)]{griewank2000algorithm}
Andreas Griewank and Andrea Walther.
\newblock Algorithm 799: revolve: an implementation of checkpointing for the
  reverse or adjoint mode of computational differentiation.
\newblock \emph{ACM Transactions on Mathematical Software (TOMS)}, 26\penalty0
  (1):\penalty0 19--45, 2000.

\bibitem[He et~al.(2015)He, Zhang, Ren, and Sun]{he15deep}
Kaiming He, Xiangyu Zhang, Shaoqing Ren, and Jian Sun.
\newblock Deep residual learning for image recognition.
\newblock In \emph{CVPR}, 2015.

\bibitem[He et~al.(2020)He, Fan, Wu, Xie, and Girshick]{he20moco}
Kaiming He, Haoqi Fan, Yuxin Wu, Saining Xie, and Ross~B. Girshick.
\newblock Momentum contrast for unsupervised visual representation learning.
\newblock In \emph{CVPR}, 2020.

\bibitem[Helber et~al.(2018)Helber, Bischke, Dengel, and
  Borth]{helber2018introducing}
Patrick Helber, Benjamin Bischke, Andreas Dengel, and Damian Borth.
\newblock Introducing eurosat: A novel dataset and deep learning benchmark for
  land use and land cover classification.
\newblock In \emph{IEEE International Geoscience and Remote Sensing Symposium},
  2018.

\bibitem[Hendrycks et~al.(2021{\natexlab{a}})Hendrycks, Basart, Mu, Kadavath,
  Wang, Dorundo, Desai, Zhu, Parajuli, Guo, Song, Steinhardt, and
  Gilmer]{hendrycks2020many}
Dan Hendrycks, Steven Basart, Norman Mu, Saurav Kadavath, Frank Wang, Evan
  Dorundo, Rahul Desai, Tyler Zhu, Samyak Parajuli, Mike Guo, Dawn Song, Jacob
  Steinhardt, and Justin Gilmer.
\newblock The many faces of robustness: A critical analysis of
  out-of-distribution generalization.
\newblock \emph{ICCV}, 2021{\natexlab{a}}.

\bibitem[Hendrycks et~al.(2021{\natexlab{b}})Hendrycks, Zhao, Basart,
  Steinhardt, and Song]{hendrycks2019nae}
Dan Hendrycks, Kevin Zhao, Steven Basart, Jacob Steinhardt, and Dawn Song.
\newblock Natural adversarial examples.
\newblock \emph{CVPR}, 2021{\natexlab{b}}.

\bibitem[Hironobu et~al.(1999)Hironobu, Takahashi, and Oka]{mori1999}
Yasuhide~Mori Hironobu, Hironobu Takahashi, and Ryuichi Oka.
\newblock Image-to-word transformation based on dividing and vector quantizing
  images with words.
\newblock In \emph{Citeseer}, 1999.

\bibitem[Hodosh et~al.(2013)Hodosh, Young, and Hockenmaier]{hodosh2013framing}
Micah Hodosh, Peter Young, and Julia Hockenmaier.
\newblock Framing image description as a ranking task: Data, models and
  evaluation metrics.
\newblock \emph{Journal of Artificial Intelligence Research}, 2013.

\bibitem[Hu et~al.(2018)Hu, Shen, and Sun]{hu2018squeeze}
Jie Hu, Li~Shen, and Gang Sun.
\newblock Squeeze-and-excitation networks.
\newblock In \emph{Proceedings of the IEEE conference on computer vision and
  pattern recognition}, pages 7132--7141, 2018.

\bibitem[Huang et~al.(2017)Huang, Sun, Liu, Sedra, and Weinberger]{huang17deep}
Gao Huang, Yu~Sun, Zhuang Liu, Daniel Sedra, and Kilian Weinberger.
\newblock Deep networks with stochastic depth.
\newblock In \emph{BMVC}, 2017.

\bibitem[Huang et~al.(2019)Huang, Cheng, Bapna, Firat, Chen, Chen, Lee, Ngiam,
  Le, Wu, et~al.]{huang2019gpipe}
Yanping Huang, Youlong Cheng, Ankur Bapna, Orhan Firat, Dehao Chen, Mia Chen,
  HyoukJoong Lee, Jiquan Ngiam, Quoc~V Le, Yonghui Wu, et~al.
\newblock Gpipe: Efficient training of giant neural networks using pipeline
  parallelism.
\newblock \emph{Advances in neural information processing systems},
  32:\penalty0 103--112, 2019.

\bibitem[Huang et~al.(2020)Huang, Zeng, Liu, Fu, and Fu]{huang2020pixel}
Zhicheng Huang, Zhaoyang Zeng, Bei Liu, Dongmei Fu, and Jianlong Fu.
\newblock Pixel-bert: Aligning image pixels with text by deep multi-modal
  transformers.
\newblock \emph{Arxiv 2004.00849}, 2020.

\bibitem[Jain et~al.(2020)Jain, Jain, Nrusimha, Gholami, Abbeel, Gonzalez,
  Keutzer, and Stoica]{jain2020checkmate}
Paras Jain, Ajay Jain, Aniruddha Nrusimha, Amir Gholami, Pieter Abbeel, Joseph
  Gonzalez, Kurt Keutzer, and Ion Stoica.
\newblock Checkmate: Breaking the memory wall with optimal tensor
  rematerialization.
\newblock \emph{Proceedings of Machine Learning and Systems}, 2:\penalty0
  497--511, 2020.

\bibitem[Jia et~al.(2021)Jia, Yang, Xia, Chen, Parekh, Pham, Le, Sung, Li, and
  Duerig]{jia21scaling}
Chao Jia, Yinfei Yang, Ye~Xia, Yi-Ting Chen, Zarana Parekh, Hieu Pham, Quoc~V.
  Le, Yunhsuan Sung, Zhen Li, and Tom Duerig.
\newblock Scaling up visual and vision-language representation learning with
  noisy text supervision.
\newblock In \emph{ICML}, 2021.

\bibitem[Joulin et~al.(2016)Joulin, Van Der~Maaten, Jabri, and
  Vasilache]{joulin2016learning}
Armand Joulin, Laurens Van Der~Maaten, Allan Jabri, and Nicolas Vasilache.
\newblock Learning visual features from large weakly supervised data.
\newblock In \emph{ECCV}, 2016.

\bibitem[Jouppi et~al.(2017)Jouppi, Young, Patil, Patterson, Agrawal, Bajwa,
  Bates, Bhatia, Boden, Borchers, Boyle, Cantin, Chao, Clark, Coriell, Daley,
  Dau, Dean, Gelb, Ghaemmaghami, Gottipati, Gulland, Hagmann, Ho, Hogberg, Hu,
  Hundt, Hurt, Ibarz, Jaffey, Jaworski, Kaplan, Khaitan, Koch, Kumar, Lacy,
  Laudon, Law, Le, Leary, Liu, Lucke, Lundin, MacKean, Maggiore, Mahony,
  Miller, Nagarajan, Narayanaswami, Ni, Nix, Norrie, Omernick, Penukonda,
  Phelps, Ross, Salek, Samadiani, Severn, Sizikov, Snelham, Souter, Steinberg,
  Swing, Tan, Thorson, Tian, Toma, Tuttle, Vasudevan, Walter, Wang, Wilcox, and
  Yoon]{jouppi2017tpu}
Norman~P. Jouppi, Cliff Young, Nishant Patil, David~A. Patterson, Gaurav
  Agrawal, Raminder Bajwa, Sarah Bates, Suresh Bhatia, Nan Boden, Al~Borchers,
  Rick Boyle, Pierre{-}luc Cantin, Clifford Chao, Chris Clark, Jeremy Coriell,
  Mike Daley, Matt Dau, Jeffrey Dean, Ben Gelb, Tara~Vazir Ghaemmaghami,
  Rajendra Gottipati, William Gulland, Robert Hagmann, Richard~C. Ho, Doug
  Hogberg, John Hu, Robert Hundt, Dan Hurt, Julian Ibarz, Aaron Jaffey, Alek
  Jaworski, Alexander Kaplan, Harshit Khaitan, Andy Koch, Naveen Kumar, Steve
  Lacy, James Laudon, James Law, Diemthu Le, Chris Leary, Zhuyuan Liu, Kyle
  Lucke, Alan Lundin, Gordon MacKean, Adriana Maggiore, Maire Mahony, Kieran
  Miller, Rahul Nagarajan, Ravi Narayanaswami, Ray Ni, Kathy Nix, Thomas
  Norrie, Mark Omernick, Narayana Penukonda, Andy Phelps, Jonathan Ross, Amir
  Salek, Emad Samadiani, Chris Severn, Gregory Sizikov, Matthew Snelham, Jed
  Souter, Dan Steinberg, Andy Swing, Mercedes Tan, Gregory Thorson, Bo~Tian,
  Horia Toma, Erick Tuttle, Vijay .
\newblock In-datacenter performance analysis of a tensor processing unit.
\newblock \emph{Arxiv 1704.04760}, 2017.

\bibitem[Karpathy and Fei-Fei(2015)]{karpathy2015deep}
Andrej Karpathy and Li~Fei-Fei.
\newblock Deep visual-semantic alignments for generating image descriptions.
\newblock In \emph{CVPR}, 2015.

\bibitem[Kingma and Ba(2015)]{kingma15adam}
Diederik~P. Kingma and Jimmy Ba.
\newblock Adam: A method for stochastic optimization.
\newblock In \emph{ICLR}, 2015.

\bibitem[Kiros et~al.(2014)Kiros, Salakhutdinov, and Zemel]{kiros2014unifying}
Ryan Kiros, Ruslan Salakhutdinov, and Richard~S Zemel.
\newblock Unifying visual-semantic embeddings with multimodal neural language
  models.
\newblock \emph{Arxiv 1411.2539}, 2014.

\bibitem[Kolesnikov et~al.(2020)Kolesnikov, Beyer, Zhai, Puigcerver, Yung,
  Gelly, and Houlsby]{kolesnikov20bit}
Alexander Kolesnikov, Lucas Beyer, Xiaohua Zhai, Joan Puigcerver, Jessica Yung,
  Sylvain Gelly, and Neil Houlsby.
\newblock Big transfer (bit): General visual representation learning.
\newblock In \emph{ECCV}, 2020.

\bibitem[Krizhevsky(2009)]{krizhevsky09learning}
Alex Krizhevsky.
\newblock Learning multiple layers of features from tiny images.
\newblock Technical report, 2009.

\bibitem[Kudo and Richardson(2018)]{kudo18sentencepiece}
Taku Kudo and John Richardson.
\newblock Sentencepiece: A simple and language independent subword tokenizer
  and detokenizer for neural text processing.
\newblock In \emph{EMNLP}, 2018.

\bibitem[Kumar et~al.(2019)Kumar, Purohit, Svitkina, Vee, and
  Wang]{kumar2019efficient}
Ravi Kumar, Manish Purohit, Zoya Svitkina, Erik Vee, and Joshua Wang.
\newblock Efficient rematerialization for deep networks.
\newblock \emph{Advances in Neural Information Processing Systems}, 32, 2019.

\bibitem[Larochelle et~al.(2008)Larochelle, Erhan, and
  Bengio]{larochelle08zero}
Hugo Larochelle, Dumitru Erhan, and Yoshua Bengio.
\newblock Zero-data learning of new tasks.
\newblock In \emph{AAAI}, 2008.

\bibitem[LeCun et~al.(2010)LeCun, Cortes, and Burges]{lecun2010mnist}
Yann LeCun, Corinna Cortes, and CJ~Burges.
\newblock Mnist handwritten digit database.
\newblock \emph{ATT Labs [Online]. Available:
  http://yann.lecun.com/exdb/mnist}, 2, 2010.

\bibitem[Lepikhin et~al.(2020)Lepikhin, Lee, Xu, Chen, Firat, Huang, Krikun,
  Shazeer, and Chen]{lepikhin2020gshard}
Dmitry Lepikhin, HyoukJoong Lee, Yuanzhong Xu, Dehao Chen, Orhan Firat, Yanping
  Huang, Maxim Krikun, Noam Shazeer, and Zhifeng Chen.
\newblock Gshard: Scaling giant models with conditional computation and
  automatic sharding.
\newblock \emph{Arxiv 2006.16668}, 2020.

\bibitem[Lester et~al.(2021)Lester, Al-Rfou, and Constant]{lester21thepower}
Brian Lester, Rami Al-Rfou, and Noah Constant.
\newblock The power of scale for parameter-efficient prompt tuning.
\newblock In \emph{EMNLP}, 2021.

\bibitem[Li et~al.(2017)Li, Jabri, Joulin, and van~der Maaten]{li2017learning}
Ang Li, Allan Jabri, Armand Joulin, and Laurens van~der Maaten.
\newblock Learning visual n-grams from web data.
\newblock In \emph{ICCV}, 2017.

\bibitem[Li et~al.(2019)Li, Zhang, Li, Li, and Fu]{li2019visual}
Kunpeng Li, Yulun Zhang, Kai Li, Yuanyuan Li, and Yun Fu.
\newblock Visual semantic reasoning for image-text matching.
\newblock In \emph{ICCV}, 2019.

\bibitem[Liu et~al.(2019)Liu, Liu, Ren, He, and Sun]{liu2019aligning}
Fenglin Liu, Yuanxin Liu, Xuancheng Ren, Xiaodong He, and Xu~Sun.
\newblock Aligning visual regions and textual concepts for semantic-grounded
  image representations.
\newblock \emph{Arxiv 1905.06139}, 2019.

\bibitem[Loshchilov and Hutter(2019)]{loshchilov19decoupled}
Ilya Loshchilov and Frank Hutter.
\newblock Decoupled weight decay regularization.
\newblock In \emph{ICLR}, 2019.

\bibitem[Lu et~al.(2019)Lu, Batra, Parikh, and Lee]{lu2019vilbert}
Jiasen Lu, Dhruv Batra, Devi Parikh, and Stefan Lee.
\newblock Vilbert: Pretraining task-agnostic visiolinguistic representations
  for vision-and-language tasks.
\newblock \emph{NeurIPS}, 2019.

\bibitem[Mahajan et~al.(2018)Mahajan, Girshick, Ramanathan, He, Paluri, Li,
  Bharambe, and van~der Maaten]{mahajan2018explore}
Dhruv Mahajan, Ross~B. Girshick, Vignesh Ramanathan, Kaiming He, Manohar
  Paluri, Yixuan Li, Ashwin Bharambe, and Laurens van~der Maaten.
\newblock Exploring the limits of weakly supervised pretraining.
\newblock In \emph{ECCV}, 2018.

\bibitem[Maurer(2016)]{maurer2016vector}
Andreas Maurer.
\newblock A vector-contraction inequality for rademacher complexities.
\newblock In \emph{International Conference on Algorithmic Learning Theory},
  pages 3--17. Springer, 2016.

\bibitem[Messina et~al.(2020)Messina, Amato, Esuli, Falchi, Gennaro, and
  Marchand-Maillet]{messina2020fine}
Nicola Messina, Giuseppe Amato, Andrea Esuli, Fabrizio Falchi, Claudio Gennaro,
  and St{\'e}phane Marchand-Maillet.
\newblock Fine-grained visual textual alignment for cross-modal retrieval using
  transformer encoders.
\newblock \emph{ACM Transactions on Multimedia Computing, Communications, and
  Applications}, 2020.

\bibitem[Mohri et~al.(2012)Mohri, Rostamizadeh, and
  Talwalkar]{mohri2012foundations}
Mehryar Mohri, Afshin Rostamizadeh, and Ameet Talwalkar.
\newblock \emph{Foundations of machine learning}.
\newblock MIT press, 2012.

\bibitem[Nam et~al.(2017)Nam, Ha, and Kim]{nam2017dual}
Hyeonseob Nam, Jung-Woo Ha, and Jeonghee Kim.
\newblock Dual attention networks for multimodal reasoning and matching.
\newblock In \emph{CVPR}, 2017.

\bibitem[Nesterov(1983)]{nesterov83a}
Yurii~E. Nesterov.
\newblock A method for solving the convex programming problem with convergence
  rate $o(1/k^2)$.
\newblock \emph{Soviet Mathematics Doklady}, 1983.

\bibitem[Nilsback and Zisserman(2008)]{nilsback08automated}
Maria-Elena Nilsback and Andrew Zisserman.
\newblock Automated flower classification over a large number of classes.
\newblock In \emph{Indian Conference on Computer Vision, Graphics and Image
  Processing}, 2008.

\bibitem[Norouzi et~al.(2013)Norouzi, Mikolov, Bengio, Singer, Shlens, Frome,
  Corrado, and Dean]{norouzi2013zero}
Mohammad Norouzi, Tomas Mikolov, Samy Bengio, Yoram Singer, Jonathon Shlens,
  Andrea Frome, Greg~S Corrado, and Jeffrey Dean.
\newblock Zero-shot learning by convex combination of semantic embeddings.
\newblock \emph{Arxiv 1312.5650}, 2013.

\bibitem[Ott et~al.(2018)Ott, Edunov, Grangier, and Auli]{ott2018scaling}
Myle Ott, Sergey Edunov, David Grangier, and Michael Auli.
\newblock Scaling neural machine translation.
\newblock In \emph{Workshop in Neural Machine Translation}, 2018.

\bibitem[Parkhi et~al.(2012)Parkhi, Vedaldi, Zisserman, and
  Jawahar]{oxfordpets}
Omkar~M Parkhi, Andrea Vedaldi, Andrew Zisserman, and CV~Jawahar.
\newblock Cats and dogs.
\newblock \emph{CVPR}, 2012.

\bibitem[Pham et~al.(2021)Pham, Dai, Xie, Luong, and Le]{pham2021meta}
Hieu Pham, Zihang Dai, Qizhe Xie, Minh-Thang Luong, and Quoc~V. Le.
\newblock Meta pseudo labels.
\newblock In \emph{CVPR}, 2021.

\bibitem[Radford et~al.(2021)Radford, Kim, Hallacy, Ramesh, Goh, Agarwal,
  Sastry, Askell, Mishkin, Clark, Krueger, and Sutskever]{radford21learning}
Alec Radford, Jong~Wook Kim, Chris Hallacy, Aditya Ramesh, Gabriel Goh,
  Sandhini Agarwal, Girish Sastry, Amanda Askell, Pamela Mishkin, Jack Clark,
  Gretchen Krueger, and Ilya Sutskever.
\newblock Learning transferable visual models from natural language
  supervision.
\newblock In \emph{ICML}, 2021.

\bibitem[Recht et~al.(2019)Recht, Roelofs, Schmidt, and
  Shankar]{recht2019imagenet}
Benjamin Recht, Rebecca Roelofs, Ludwig Schmidt, and Vaishaal Shankar.
\newblock Do imagenet classifiers generalize to imagenet?
\newblock In \emph{ICML}, 2019.

\bibitem[Russakovsky et~al.(2009)Russakovsky, Deng, Su, Krause, Satheesh, Ma,
  Huang, Karpathy, Khosla, Bernstein, Berg, and Fei-Fei]{russakovsky09imagenet}
Olga Russakovsky, Jia Deng, Hao Su, Jonathan Krause, Sanjeev Satheesh, Sean Ma,
  Zhiheng Huang, Andrej Karpathy, Aditya Khosla, Michael Bernstein,
  Alexander~C. Berg, and Li~Fei-Fei.
\newblock {ImageNet Large Scale Visual Recognition Challenge}.
\newblock \emph{IJCV}, 2009.

\bibitem[Sariyildiz et~al.(2020)Sariyildiz, Perez, and
  Larlus]{sariyildiz2020learning}
Mert~Bulent Sariyildiz, Julien Perez, and Diane Larlus.
\newblock Learning visual representations with caption annotations.
\newblock In \emph{ECCV}, 2020.

\bibitem[Sch{\"{o}}nfeld et~al.(2019)Sch{\"{o}}nfeld, Ebrahimi, Sinha, Darrell,
  and Akata]{schonfeld19zero}
Edgar Sch{\"{o}}nfeld, Sayna Ebrahimi, Samarth Sinha, Trevor Darrell, and
  Zeynep Akata.
\newblock Generalized zero- and few-shot learning via aligned variational
  autoencoders.
\newblock In \emph{CVPR}, 2019.

\bibitem[Shalev-Shwartz and Ben-David(2014)]{shalev2014understanding}
Shai Shalev-Shwartz and Shai Ben-David.
\newblock \emph{Understanding machine learning: From theory to algorithms}.
\newblock Cambridge university press, 2014.

\bibitem[Shazeer and Stern(2018)]{shazeer18adafactor}
Noam Shazeer and Mitchell Stern.
\newblock Adafactor: Adaptive learning rates with sublinear memory cost.
\newblock \emph{Arxiv 1804.04235}, 2018.

\bibitem[Shazeer et~al.(2018)Shazeer, Cheng, Parmar, Tran, Vaswani,
  Koanantakool, Hawkins, Lee, Hong, Young, et~al.]{shazeer2018mesh}
Noam Shazeer, Youlong Cheng, Niki Parmar, Dustin Tran, Ashish Vaswani, Penporn
  Koanantakool, Peter Hawkins, HyoukJoong Lee, Mingsheng Hong, Cliff Young,
  et~al.
\newblock Mesh-tensorflow: Deep learning for supercomputers.
\newblock \emph{Arxiv 1811.02084}, 2018.

\bibitem[Socher and Fei-Fei(2010)]{socher2010connecting}
Richard Socher and Li~Fei-Fei.
\newblock Connecting modalities: Semi-supervised segmentation and annotation of
  images using unaligned text corpora.
\newblock In \emph{CVPR}, 2010.

\bibitem[Socher et~al.(2013)Socher, Ganjoo, Sridhar, Bastani, Manning, and
  Ng]{socher2013zero}
Richard Socher, Milind Ganjoo, Hamsa Sridhar, Osbert Bastani, Christopher~D
  Manning, and Andrew~Y Ng.
\newblock Zero-shot learning through cross-modal transfer.
\newblock In \emph{Advances in Neural Information Processing Systems}, 2013.

\bibitem[Socher et~al.(2014)Socher, Karpathy, Le, Manning, and
  Ng]{socher-etal-2014-grounded}
Richard Socher, Andrej Karpathy, Quoc~V. Le, Christopher~D. Manning, and
  Andrew~Y. Ng.
\newblock Grounded compositional semantics for finding and describing images
  with sentences.
\newblock \emph{TACL}, 2014.

\bibitem[Soomro et~al.(2012)Soomro, Zamir, and Shah]{soomro12ucf}
Khurram Soomro, Amir~Roshan Zamir, and Mubarak Shah.
\newblock {UCF101:} {A} dataset of 101 human actions classes from videos in the
  wild.
\newblock \emph{Arxiv 1212.0402}, 2012.

\bibitem[Srivastava et~al.(2014)Srivastava, Hinton, Krizhevsky, Sutskever, and
  Salakhutdinov]{srivastava14dropout}
Nitish Srivastava, Geoffrey Hinton, Alex Krizhevsky, Ilya Sutskever, and Ruslan
  Salakhutdinov.
\newblock Dropout: A simple way to prevent neural networks from overfitting.
\newblock In \emph{JMLR}, 2014.

\bibitem[Sun et~al.(2017)Sun, Shrivastava, Singh, and Gupta]{sun2017revisiting}
Chen Sun, Abhinav Shrivastava, Saurabh Singh, and Abhinav Gupta.
\newblock Revisiting unreasonable effectiveness of data in deep learning era.
\newblock In \emph{ICCV}, 2017.

\bibitem[Szegedy et~al.(2013)Szegedy, Zaremba, Sutskever, Bruna, Erhan,
  Goodfellow, and Fergus]{szegedy2013intriguing}
Christian Szegedy, Wojciech Zaremba, Ilya Sutskever, Joan Bruna, Dumitru Erhan,
  Ian Goodfellow, and Rob Fergus.
\newblock Intriguing properties of neural networks.
\newblock \emph{Arxiv 1312.6199}, 2013.

\bibitem[Tan and Le(2019)]{tan19efficientnet}
Mingxing Tan and Quoc~V. Le.
\newblock Efficientnet: Rethinking model scaling for convolutional neural
  networks.
\newblock In \emph{ICML}, 2019.

\bibitem[Tan and Le(2021)]{tan21efficientnetv2}
Mingxing Tan and Quoc~V. Le.
\newblock Efficientnetv2: Smaller models and faster training.
\newblock In \emph{ICML}, 2021.

\bibitem[Tan et~al.(2020)Tan, Pang, and Le]{tan20edet}
Mingxing Tan, Ruoming Pang, and Quoc~V. Le.
\newblock Efficientdet: Scalable and efficient object detection.
\newblock In \emph{CVPR}, 2020.

\bibitem[Taori et~al.(2020)Taori, Dave, Shankar, Carlini, Recht, and
  Schmidt]{taori2020measuring}
Rohan Taori, Achal Dave, Vaishaal Shankar, Nicholas Carlini, Benjamin Recht,
  and Ludwig Schmidt.
\newblock Measuring robustness to natural distribution shifts in image
  classification.
\newblock In \emph{NeurIPS}, 2020.

\bibitem[Tian et~al.(2020)Tian, Sun, Poole, Krishnan, Schmid, and
  Isola]{tian20view}
Yonglong Tian, Chen Sun, Ben Poole, Dilip Krishnan, Cordelia Schmid, and
  Phillip Isola.
\newblock What makes for good views for contrastive learning?
\newblock In \emph{NeurIPS}, 2020.

\bibitem[Tieleman and Hinton(2012)]{tieleman12rmsprop}
Tijmen Tieleman and Geoffrey Hinton.
\newblock {RmsProp: Divide the gradient by a running average of its recent
  magnitude}.
\newblock COURSERA: Neural Networks for Machine Learning, 2012.

\bibitem[Touvron et~al.(2019)Touvron, Vedaldi, Douze, and
  J{\'e}gou]{touvron2019fixing}
Hugo Touvron, Andrea Vedaldi, Matthijs Douze, and Herv{\'e} J{\'e}gou.
\newblock Fixing the train-test resolution discrepancy.
\newblock In \emph{NeurIPS}, 2019.

\bibitem[van~den Oord et~al.(2018)van~den Oord, Li, and Vinyals]{aaron18cpc}
A{\"{a}}ron van~den Oord, Yazhe Li, and Oriol Vinyals.
\newblock Representation learning with contrastive predictive coding.
\newblock \emph{Arxiv 1807.03748}, 2018.

\bibitem[Vaswani et~al.(2017)Vaswani, Shazeer, Parmar, Uszkoreit, Jones, Gomez,
  Kaiser, and Polosukhin]{vaswani2017attention}
Ashish Vaswani, Noam Shazeer, Niki Parmar, Jakob Uszkoreit, Llion Jones,
  Aidan~N Gomez, {\L}ukasz Kaiser, and Illia Polosukhin.
\newblock Attention is all you need.
\newblock In \emph{Advances in neural information processing systems}, 2017.

\bibitem[Veeling et~al.(2018)Veeling, Linmans, Winkens, Cohen, and
  Welling]{veeling2018rotation}
Bastiaan~S Veeling, Jasper Linmans, Jim Winkens, Taco Cohen, and Max Welling.
\newblock Rotation equivariant {CNNs} for digital pathology.
\newblock \emph{Medical Image Computing and Computer Assisted Intervention},
  2018.

\bibitem[Vinyals et~al.(2015)Vinyals, Toshev, Bengio, and
  Erhan]{vinyals2015show}
Oriol Vinyals, Alexander Toshev, Samy Bengio, and Dumitru Erhan.
\newblock Show and tell: A neural image caption generator.
\newblock In \emph{CVPR}, 2015.

\bibitem[Wang et~al.(2019)Wang, Ge, Lipton, and Xing]{wang2019learning}
Haohan Wang, Songwei Ge, Zachary Lipton, and Eric~P Xing.
\newblock Learning robust global representations by penalizing local predictive
  power.
\newblock In \emph{NeurIPS}, 2019.

\bibitem[Wang et~al.(2004)Wang, Bovik, Sheikh, and Simoncelli]{wang2004image}
Zhou Wang, Alan~C Bovik, Hamid~R Sheikh, and Eero~P Simoncelli.
\newblock Image quality assessment: from error visibility to structural
  similarity.
\newblock \emph{IEEE transactions on image processing}, 13\penalty0
  (4):\penalty0 600--612, 2004.

\bibitem[Weston et~al.(2010)Weston, Bengio, and Usunier]{weston2010}
Jason Weston, Samy Bengio, and Nicolas Usunier.
\newblock Large scale image annotation: learning to rank with joint word-image
  embeddings.
\newblock \emph{Machine Learning}, 2010.

\bibitem[Xian et~al.(2016)Xian, Akata, Sharma, Nguyen, Hein, and
  Schiele]{xian16zero}
Yongqin Xian, Zeynep Akata, Gaurav Sharma, Quynh~N. Nguyen, Matthias Hein, and
  Bernt Schiele.
\newblock Latent embeddings for zero-shot classification.
\newblock In \emph{CVPR}, 2016.

\bibitem[Xian et~al.(2017)Xian, Schiele, and Akata]{xian17zero}
Yongqin Xian, Bernt Schiele, and Zeynep Akata.
\newblock Zero-shot learning - the good, the bad and the ugly.
\newblock In \emph{CVPR}, 2017.

\bibitem[Xiao et~al.(2010)Xiao, Hays, Ehinger, Oliva, and Torralba]{xiao10sun}
Jianxiong Xiao, James Hays, Krista~A. Ehinger, Aude Oliva, and Antonio
  Torralba.
\newblock Sun database: Large-scale scene recognition from abbey to zoo.
\newblock In \emph{CVPR}, 2010.

\bibitem[Xie et~al.(2020)Xie, Luong, Hovy, and Le]{xie2020self}
Qizhe Xie, Minh-Thang Luong, Eduard Hovy, and Quoc~V Le.
\newblock Self-training with noisy student improves imagenet classification.
\newblock In \emph{CVPR}, 2020.

\bibitem[Xu et~al.(2015)Xu, Ba, Kiros, Cho, Courville, Salakhudinov, Zemel, and
  Bengio]{xu2015show}
Kelvin Xu, Jimmy Ba, Ryan Kiros, Kyunghyun Cho, Aaron Courville, Ruslan
  Salakhudinov, Rich Zemel, and Yoshua Bengio.
\newblock Show, attend and tell: Neural image caption generation with visual
  attention.
\newblock In \emph{ICML}, 2015.

\bibitem[Xu et~al.(2021)Xu, Lee, Chen, Hechtman, Huang, Joshi, Krikun,
  Lepikhin, Ly, Maggioni, et~al.]{xu2021gspmd}
Yuanzhong Xu, HyoukJoong Lee, Dehao Chen, Blake Hechtman, Yanping Huang, Rahul
  Joshi, Maxim Krikun, Dmitry Lepikhin, Andy Ly, Marcello Maggioni, et~al.
\newblock Gspmd: General and scalable parallelization for ml computation
  graphs.
\newblock \emph{arXiv preprint arXiv:2105.04663}, 2021.

\bibitem[Yuan et~al.(2021)Yuan, Hou, Jiang, Feng, and Yan]{yuan2021volo}
Li~Yuan, Qibin Hou, Zihang Jiang, Jiashi Feng, and Shuicheng Yan.
\newblock Volo: Vision outlooker for visual recognition.
\newblock \emph{ArXiv 2106.13112}, 2021.

\bibitem[Zhai et~al.(2021)Zhai, Kolesnikov, Houlsby, and Beyer]{zhai21scaling}
Xiaohua Zhai, Alexander Kolesnikov, Neil Houlsby, and Lucas Beyer.
\newblock Scaling vision transformers.
\newblock In \emph{Arxiv 2106.04560}, 2021.

\bibitem[Zhai et~al.(2022)Zhai, Wang, Mustafa, Steiner, Keysers, Kolesnikov,
  and Beyer]{zhai2022lit}
Xiaohua Zhai, Xiao Wang, Basil Mustafa, Andreas Steiner, Daniel Keysers,
  Alexander Kolesnikov, and Lucas Beyer.
\newblock Lit: Zero-shot transfer with locked-image text tuning.
\newblock In \emph{Proceedings of the IEEE/CVF Conference on Computer Vision
  and Pattern Recognition}, pages 18123--18133, 2022.

\bibitem[Zhang et~al.(2017)Zhang, Xiang, and Gong]{zhang17zero}
Li~Zhang, Tao Xiang, and Shaogang Gong.
\newblock Learning a deep embedding model for zero-shot learning.
\newblock In \emph{CVPR}, 2017.

\bibitem[Zhang et~al.(2020)Zhang, Jiang, Miura, Manning, and
  Langlotz]{zhang2020contrastive}
Yuhao Zhang, Hang Jiang, Yasuhide Miura, Christopher~D Manning, and Curtis~P
  Langlotz.
\newblock Contrastive learning of medical visual representations from paired
  images and text.
\newblock \emph{Arxiv 2010.00747}, 2020.

\end{thebibliography}
}

\newpage
\appendix
\onecolumn

\section{\label{sec:model_size}Model sizes}
In our preliminary experiments, we experimented with different model sizes. Table~\ref{tab:modelsize} presents the final, most compute-to-performance efficient model sizes, which we use throughout the paper.

\begin{table}[htb!]
\centering
\resizebox{0.9\linewidth}{!}{%
  \begin{tabular}{lcccccccc}
    \toprule
    & \multicolumn{3}{c}{\textbf{Image model}}
    & \multicolumn{5}{c}{\textbf{Text model} }\\
    \cmidrule(lr){2-4} \cmidrule(lr){5-9}
    & Model~\citep{dai21coatnet} & \#Params & \#FLOPs 
    & \#Layers & HiddenDim & HeadDim & \#Params & \#FLOPs \\
    \midrule
    BASIC-S & CoAtNet-0 &  25M &  4.2B &  6 & 1024 &  64 & 108M & 10.7B \\
    BASIC-M & CoAtNet-3 & 168M & 34.7B & 12 & 1024 & 128 & 184M & 49.4B \\
    BASIC-L & CoAtNet-7 & 2.4B &  495.8B & 12 & 2048 & 128 & 670M & 212.6B \\
    \bottomrule
  \end{tabular}
}%
\caption{\label{tab:modelsize}Model sizes. For the image models, all specifications can be found from the model names in~\citet{dai21coatnet}.}
\end{table}

\section{\label{sec:hparams_table}Hyperparameters and other implementation details}
Our training and evaluation code will eventually be released. Here, we summarize a few important details. All of our hyper-parameters are in Table~\ref{tab:hparams}.

\paragraph{No regularization.}
Other than the decoupled weight decay in AdaFactorW, we do not use any other regularization technique. In fact, we find that with BASIC-S and BASIC-M, if we add other forms of regularization such as stochastic depth~\citep{huang17deep} or dropout~\citep{srivastava14dropout}, our ImageNet top-1 accuracy drops substantially. This suggests that our datasets are very large and perhaps in such situation, regularization techniques do more harm than good by causing optimization difficulty to our models.\looseness=-1

Another important effect of \textit{not} using regularization in our training framework is to make the re-materialization steps in Section~\ref{sec:gradaccum} consistent. If we apply random perturbations to our forward passes, \eg~by skipping layers like in stochastic depth or by setting random values to zeros, then two forward passes for re-materialization (see Lines 2-5 and 11-14 in Algorithm~\ref{alg:grad_accum}) will compute two different passes. While we could treat such difference as a form of regularization noise, our early experiment show that with dropout-like regularizations, our training loss stays relatively large throughout the course of training. This observation suggests that the noise causes some optimization difficulty to our models, so we opt not to use any dropout-like regularization.\looseness=-1

\begin{table}[htb!]
\centering
\resizebox{0.7\linewidth}{!}{%
  \begin{tabular}{lcccc}
    \toprule
    & \multicolumn{2}{c}{\textbf{BASIC-S}}
    & \multicolumn{2}{c}{\textbf{BASIC-\{M,L\}}} \\
    & Pretraining & Contrastive
    & Pretraining & Contrastive \\
    \midrule
    Optimizer  & AdaFactorW & AdaFactorW & AdaFactorW & AdaFactorW \\
    Batch size & 16384 & 65536 & 16384 & 65536 \\
    Training steps & 500K & 500K & 1.2M & 500K \\
    Warm-up steps & 25K & 25K & 25K & 25K \\
    Max learning rate & 1e-3 & 1e-3 & 4e-4 & 2.5e-4 \\ 
    Min learning rate & 1e-5 & 1e-5 & 2e-5 & 1e-5 \\ 
    Learning decay schedule & Cosine & Cosine & Linear & Cosine \\ 
    Weight decay & 0.005 & 0.0025 & 0.01 & 0.0025 \\ 
    \bottomrule
  \end{tabular}
}%
\caption{\label{tab:hparams}Hyperparameters all of our experiments.}
\end{table}

\section{\label{sec:dataset_details}Evaluation Datasets Details}
Here, we present the details of the datasets which we use to evaluate our BASIC models in Section~\ref{sec:image_classification}. It is worth noting that \textit{not} all these datasets use the accuracy as the performance metric. This is because these datasets have a certain level of imbalance between their classes, as well as other properties that make them accuracy not the best suitable metric for them. For instance, the dataset Caltech-101 has a class called ``Background'' which refers to \textit{any} image that does not belong to its predefined 101 classes. One certainly cannot come up with a textual description that describes this ``class''. As such, Caltech-101 is evaluated using mean per-class recall. Details about other datasets are in Table~\ref{tab:datasets}.

\begin{table}[htb!]
\centering
\resizebox{0.99\linewidth}{!}{ %
\begin{tabular}{lccccc}
  \toprule
  \textbf{Dataset} & \textbf{Reference} &
  \textbf{Abbreviation in Table~\ref{tab:classification_all}} &
  \textbf{\#Classes} &
  \textbf{Test size} &
  \textbf{Evaluation metric} \\
  \midrule
  ILSVRC-2012, \ie,~ImageNet & \citep{russakovsky09imagenet} & ImageNet & 1000  & 50000 & accuracy \\
  ImageNet-A & \citep{hendrycks2019nae} & N/A & 1000 & 7500 & accuracy \\
  ImageNet-R & \citep{hendrycks2020many} & N/A & 1000 & 30000 & accuracy \\
  ImageNet-V2 & \citep{recht2019imagenet} & N/A & 1000 & 10000 & accuracy \\
  ImageNet-Sketch & \citep{wang2019learning} & N/A & 1000 & 50889 & accuracy \\
  ObjectNet & \citep{barbu2019objectnet} & N/A & 1000 & 18574 & accuracy \\
  CIFAR-10 & \citep{krizhevsky09learning} & CIFAR10 & 10 & 10000 & accuracy \\
  CIFAR-100 & \citep{krizhevsky09learning} & CIFAR100 & 100 & 10000 & accuracy \\
  Birdsnap & \citep{berg2014birdsnap} & Birdsnap & 500 & 2443 & accuracy \\
  Describable Textures & \citep{cimpoi14describing} & DTD & 47 & 1880 & accuracy \\
  Oxford Flowers-102 & \citep{nilsback08automated} & Flowers & 102 & 6149 & mean per-class recall \\
  Food-101 & \citep{food101} & Food101 & 101 & 25250 & accuracy \\
  Caltech101 & \citep{feifei04learning} & Caltech101 & 102 & 6084 & mean per-class recall \\
  Oxford IIIT-Pets & \citep{oxfordpets} & IIIT-Pets & 37 & 3669 & mean per-class recall \\
  MNIST & \citep{lecun2010mnist} & MNIST & 10 & 10000 & accuracy \\
  EuroSAT & \citep{helber2018introducing} & EuroSAT & 10 & 27000 & accuracy \\
  PatchCamelyon & \citep{veeling2018rotation} & PCam & 2 & 32768 & accuracy \\
  RESICS45 & \citep{cheng17remote} & RESICS45 & 45 & 31500 & accuracy \\
  STL10 & \citep{coates2011stl10} & STL10 & 10 & 8000 & accuracy \\
  SUN397 & \citep{xiao10sun} & SUN397 & 397 & 21750 & accuracy \\
  UCF101 & \citep{soomro12ucf} & UCF101 & 101 & 3783 & accuracy \\
  Pascal VOC 2007 Classification & \citep{everingham07pascal} & VOC2007 & 20 & 4952 & 11-points mAP \\
  \bottomrule
\end{tabular}
}
\captionof{table}{\label{tab:datasets}Details of the datasets used in this paper to evaluate BASIC models. The evaluation results are presented in Table~\ref{tab:highlights} and Table~\ref{tab:classification_all}.}
\end{table}

\section{\label{sec:robustness_further}Further Discussion on Robustness}
In Section~\ref{sec:robustness}, we present a surprising result: finetuning converged BASIC checkpoints on \textit{more} ImageNet labeled data leads to \textit{worse} robustness results. The metric for robustness in Section~\ref{sec:robustness} is the \textit{average} top-1 accuracy of the finetuned models on 5 robustness benchmarks derived from ImageNet~\citep{hendrycks2019nae,hendrycks2020many,recht2019imagenet,barbu2019objectnet,wang2019learning}. It turns out that each of these benchmarks can demonstrate slightly different results for the finetuned models. Here, we discuss such benchmarks.

\paragraph{ImageNet-V2~\citep{recht2019imagenet}.} This dataset is collected in a process that closely follows the process to collect and annotate the images in the standard ILSVRC-2012 validation set, which is typically referred to as ``ImageNet'' in the literature (and our paper as well). As such, gains observed on ImageNet often transfer to ImageNet-V2. Recent works such as EfficientNets~\citep{tan19efficientnet,tan21efficientnetv2} or ViT~\citep{dosovitskiy21vit} also demonstrate the similar trend. For our experiment in Section~\ref{sec:robustness}, BASIC-M's robustness accuracy improves along with its ImageNet accuracy, following this trend. However, BASIC-L's robustness does not. We suspect this trend is because BASIC-L's learning capacity is larger than that of BASIC-M, so BASIC-L picks up more ``spurious'' patterns from ImageNet, making it less robust than BASIC-M.

\paragraph{ImageNet-R~\citep{wang2019learning}.} ImageNet-R is a special robustness dataset in our study. Not only of our BASIC models but also other zero-shot models -- CLIP and ALIGN -- are more accurate on ImageNet-R than they are on ImageNet (see Table~\ref{tab:highlights}).
These data points alone would suggest that ImageNet-R is somewhat easier than ImageNet, until we look at the significant accuracy drops for other methods on ImageNet-R. For instance, Noisy Student~\citep{xie2020self} and Meta Pseudo Labels~\citep{pham2021meta} respectively achieve only 74.9\% and 72.7\% accuracy on ImageNet-R, despite their accuracy of 88.4\% and 90.2\% on ImageNet ILSVRC-2012.
The real reason for such discrepancy in ImageNet-R is that ImageNet-R is collected by selecting the ImageNet classes from visual art pieces, such as paintings, cartoons, graffiti, origami, and sculptures.
These art pieces are often displayed in a clean environment, free of noises such as multiple classes per image, making the images easier to recognize.
As such, BASIC, CLIP, and ALIGN, all perform better on ImageNet-R.
However, ImageNet-R images have a drastically different distribution compared to ImageNet \textit{labeled} training images, as they are respectively art images and natural images. This is why ImageNet-trained models display a much lower accuracy on ImageNet, compared to zero-shot models.

\paragraph{The case of ObjectNet~\citep{barbu2019objectnet}.} From Table~\ref{tab:highlights}, it can be seen that BASIC model's improvement over ALIGN and CLIP on Object is significantly lower than others on other benchmarks, \ie,~6.6\% compared to more than 8\% (except for ImageNet-R, for which the accuracy of all models are saturated at over 90\%). We find out the reason is that, even though ObjectNet has images from the same classes with ImageNet, these objects turn out to have their own more descriptive names, \eg~the class name ``chairs'' in ImageNet could be ``chairs by [viewpoint]'' or ``chairs with [background]''. As we later show in Section~\ref{sec:failure}, using different class names and prompts can affect our results. This effect has also been observed in CLIP~\citep{radford21learning}. Here, we take the same class names and prompts for ImageNet and use them for ObjectNet. We suspect that using ObjectNet-specific class names and prompts can improve our result.

\section{\label{sec:compute_cost}Computational Cost}
All of our models are implemented in TensorFlow~\citep{abadi2016tensorflow} and trained on Tensor Processing Units (TPUs~\citep{jouppi2017tpu}). Our BASIC-S and BASIC-M models are all trained on TPUv3 chips, while our BASIC-L models are trained on TPUv4 chips. These TPUv4 chips in their MegaCore mode can offer 32GB of memory, out of which our BASIC-L models use 30.1GB, which means that our model essentially saturates the TPU's memory. We note that oftentimes, a small portion of TPU memory needs to be reserved for their low-level infra systems. Therefore, our BASIC-L models essentially saturate the accelerators with the largest memory currently available. Given this memory usage, we use Algorithm~\ref{alg:grad_accum} with the microbatch size $M=8192$ and the batch size $N=65536$ to train this model. Table~\ref{tab:compute_usage} summarizes the training cost for each phase of our models BASIC-\{S,M,L\} as in Section~\ref{sec:image_classification}.

\begin{table}[htb!]
\centering
\resizebox{0.7\linewidth}{!}{ %
\begin{tabular}{lcccccc}
\toprule
\multirow{2}{*}{\textbf{Model}} &
\multicolumn{2}{c}{\textbf{Pretraining}} &
\multicolumn{2}{c}{\textbf{Text Encoder}} &
\multicolumn{2}{c}{\textbf{Text \& Image Encoders}} \\
\cmidrule(lr){2-3} \cmidrule(lr){4-5} \cmidrule(lr){6-7}
& Type & Cores$\times$Days
& Type & Cores$\times$Days
& Type & Cores$\times$Days \\
\midrule
BASIC-S & TPUv3 & 0.4K & TPUv3 & 0.9K & TPUv3 & 0.3K \\
BASIC-M & TPUv3 & 1.7K & TPUv3 & 3.9K & TPUv3 & 1.2K \\
BASIC-L & TPUv4 & 6.9K & TPUv4 & 1.0K & TPUv4 & 0.8K \\
\bottomrule
\end{tabular}
} %
\captionof{table}{\label{tab:compute_usage}Computational usages to train our models. Core$\times$Days is the product of the number of training days and the number of cores used to train the models. For instance, using 2048 TPUs in 1 day equals to 2.048 Cores$\times$Days. We use this metric because sometimes, our jobs are run on different numbers of TPUs due to limited availability.}
\end{table}

\section{\label{sec:visual}Qualitative Analysis: Successful Classification Examples}
Zero-shot transfer models open the door to versatile applications. This section is dedicated to demonstrating their versatility. In Figure~\ref{fig:sample_examples}, we visualize some predictions of our best model, BASIC-L, on instances that are less expected on traditional image classification benchmarks. We come up with the text sequences and demonstrate that the model can indeed align images to the most appropriate sequence.

\begin{figure*}[h]
\centering

\begin{minipage}{0.125\linewidth}
    \includegraphics[width=1.0\linewidth, height=1.0\linewidth]{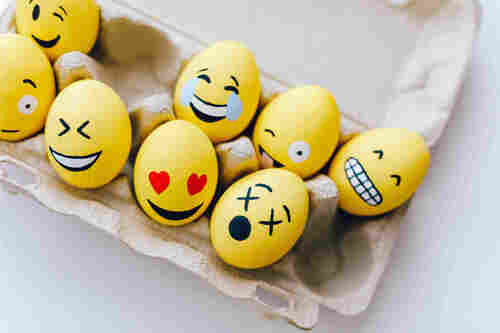}
\end{minipage}
\begin{minipage}{0.36\linewidth}
    \footnotesize
    \begin{tabularx}{\textwidth}{X|l}
    \emph{Eggs with mixed expressions}. (\cmark) & $0.944$ \\
    \emph{Emojis with mixed expressions}. & $0.032$ \\
    \emph{Happy eggs}. & $0.022$ \\
    \emph{Sad eggs}. & $0.002$ \\
    \emph{Happy emojis}. & <1e-3 \\
    \emph{Sad emojis}. & <1e-3 \\
    \end{tabularx}
\end{minipage}
\hfill
\begin{minipage}{0.125\linewidth}
    \includegraphics[width=1.0\linewidth, height=1.0\linewidth]{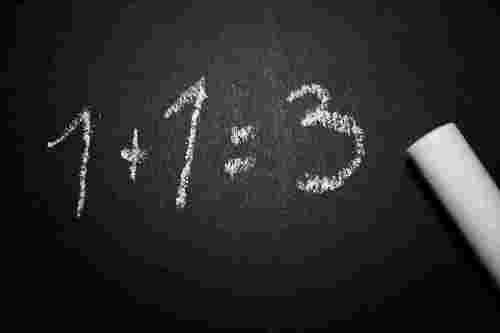}
\end{minipage}
\begin{minipage}{0.36\linewidth}
    \footnotesize
    \begin{tabularx}{\textwidth}{X|l}
    \emph{One plus one equals three}. (\cmark) & $0.468$ \\
    \emph{One plus one equals one}. & $0.264$ \\
    \emph{One plus one equals two}. & $0.240$ \\
    \emph{One minus one equals three}. & $0.014$ \\
    \emph{One minus one equals two}. & <1e-3 \\
    \emph{One minus one equals one}. & <1e-3 \\
    \end{tabularx}
\end{minipage}

\begin{minipage}{0.125\linewidth}
    \includegraphics[width=1.0\linewidth, height=1.0\linewidth]{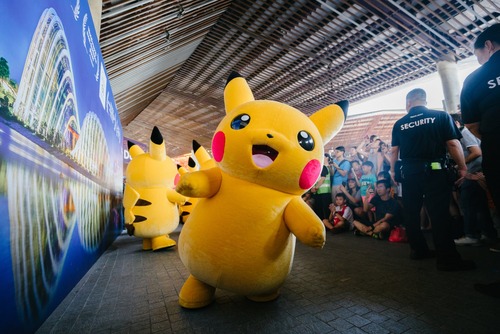}
\end{minipage}
\begin{minipage}{0.36\linewidth}
    \footnotesize
    \begin{tabularx}{\textwidth}{X|l}
    \emph{Cosplayed pikachu}. (\cmark) & $0.764$ \\
    \emph{Cosplayed charmander}. & $0.219$ \\
    \emph{Real pikachu}. & $0.012$ \\
    \emph{Cosplayed eevee}. & $0.006$ \\
    \emph{Real charmander}. & <1e-3 \\
    \emph{Real eevee}. & <1e-3 \\
    \end{tabularx}
\end{minipage}
\hfill
\begin{minipage}{0.125\linewidth}
    \includegraphics[width=1.0\linewidth, height=1.0\linewidth]{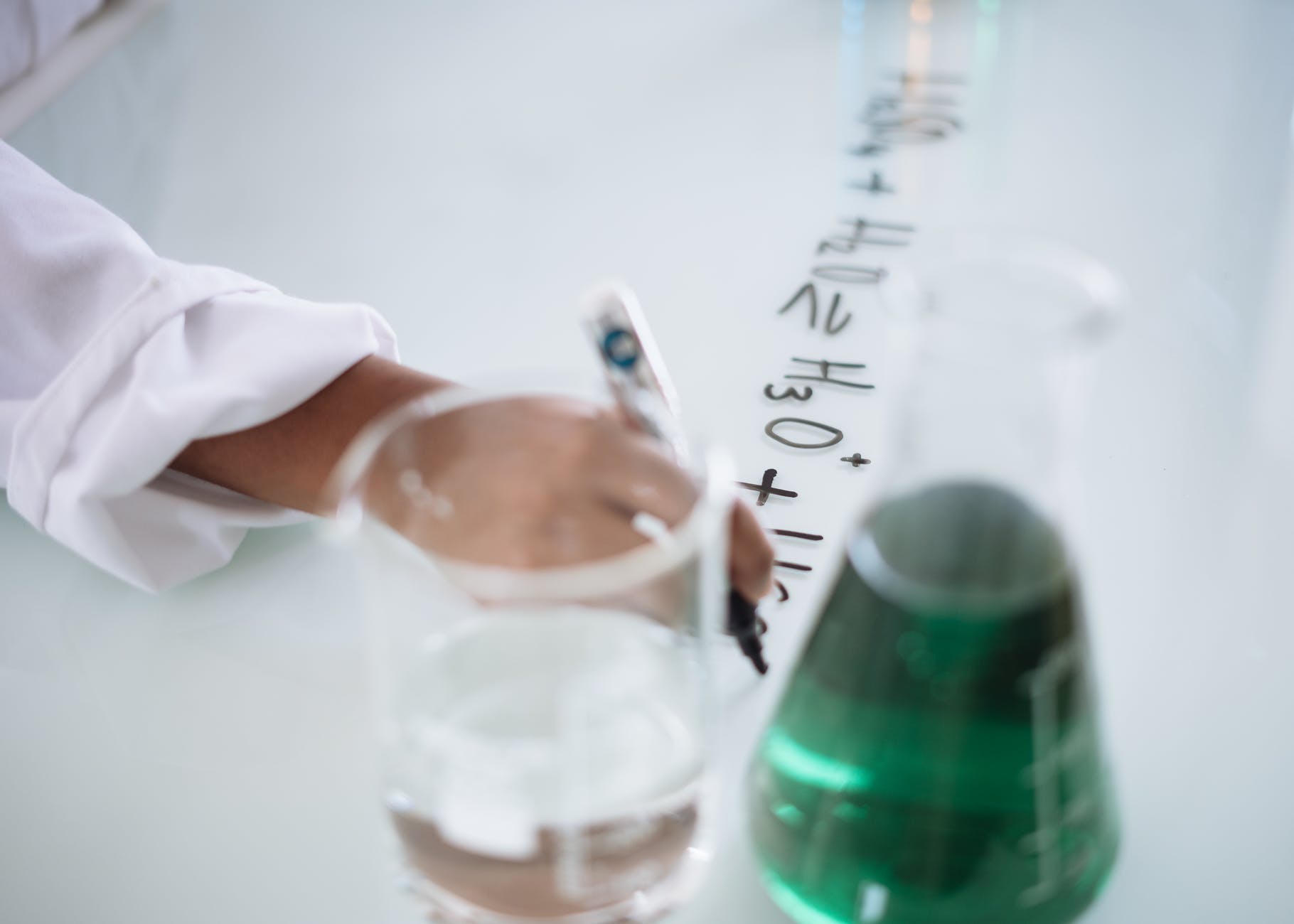}
\end{minipage}
\begin{minipage}{0.36\linewidth}
    \footnotesize
    \begin{tabularx}{\textwidth}{X|l}
     \emph{Chemistry equation on a whiteboard}. (\cmark) & $0.958$ \\
     \emph{Math equation on a whiteboard}. & $0.024$ \\
     \emph{Physics equation on a whiteboard}. & $0.012$ \\
     \emph{Chemistry equation on a paper}. & $0.005$ \\
     \emph{Math equation on a paper}. & <1e-3 \\
     \emph{Physics equation on a paper}. & <1e-3 \\
    \end{tabularx}
\end{minipage}

\begin{minipage}{0.125\linewidth}
    \includegraphics[width=1.0\linewidth, height=1.0\linewidth]{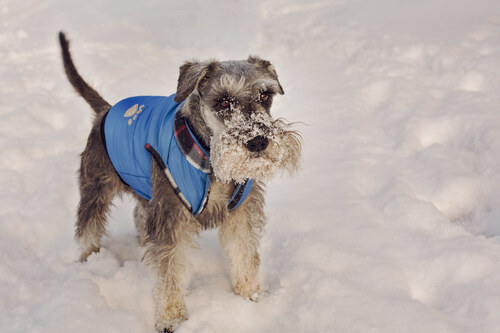}
\end{minipage}
\begin{minipage}{0.36\linewidth}
    \footnotesize
    \begin{tabularx}{\textwidth}{X|l}
    \emph{A shirari dog in cold weather}. (\cmark) & $0.961$ \\
    \emph{A shirari dog in warm weather}. & $0.038$ \\
    \emph{A corgi dog in cold weather}. & <1e-3 \\
    \emph{A shiba inu dog in cold weather}. & <1e-3 \\
    \emph{A corgi dog in warm weather}. & <1e-3 \\
    \emph{A shiba inu dog in warm weather}. & <1e-3 \\
    \end{tabularx}
\end{minipage}
\hfill
\begin{minipage}{0.125\linewidth}
    \includegraphics[width=1.0\linewidth, height=1.0\linewidth]{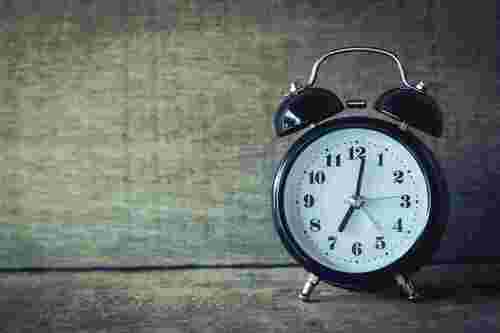}
\end{minipage}
\begin{minipage}{0.36\linewidth}
    \footnotesize
    \begin{tabularx}{\textwidth}{X|l}
    \emph{An alarm clock that reads 7:00 pm}. (\cmark) & $0.288$ \\
    \emph{An alarm clock that reads 10:00 am}. & $0.183$ \\
    \emph{An alarm clock that reads 12:00 pm}. & $0.167$ \\
    \emph{An alarm clock that reads 4:00 pm}. & $0.162$ \\
    \emph{An alarm clock that reads 7:00 am}. & $0.133$ \\
    \emph{An alarm clock that reads 2:00 pm}. & $0.067$ \\
    \end{tabularx}
\end{minipage}

\captionof{figure}{\label{fig:sample_examples}Selected classification examples from BASIC-L over unseen images.}
\end{figure*}

\section{\label{sec:failure}Failure Analysis}
Most machine learning models fail in certain tests. It is important to identify such failure cases, to understand the failing causes, and if possible, to come up with fixes. Here, we first look at the test benchmarks in Table~\ref{tab:classification_all} from Section~\ref{sec:image_classification} where BASIC models perform worse than CLIP models. We identify the cause of failures for BASIC models and recommend certain fixes that can improve their performance. Then, in Section~\ref{sec:failed_cases}, we present some erroneous behaviors of BASIC models via selected examples. These examples reveal some weaknesses of zero-shot transfer models, and invite future research to improve them.

\subsection{\label{sec:failed_datasets}The benchmarks where BASIC fails}
From Section~\ref{sec:image_classification}, we see that BASIC models have particularly low performance on EuroSat~\citep{helber2018introducing}, MNIST~\citep{lecun2010mnist}, and Patch Camelyon~\citep{veeling2018rotation}. The accuracy of BASIC-L on these datasets are 51.0\%, 40.3\%, and 59.6\% respectively. For what it's worth, BASIC-L's accuracy are better than those of our smaller models, \ie,~BASIC-S and BASIC-M, so our central message in this paper -- scaling helps -- is not altered. Here, we focus on analyzing the failures of BASIC-L.

\paragraph{Patch Camelyon (PCam).} PCam is perhaps the most sensitive dataset among the three benchmarks where BASIC-L performs poorly. This dataset consists of images extracted from histopathologic scans of lymph node sections, and models are asked to make the binary prediction -- whether an input image has a cancerous lymph node or note. For such an important task, the top-1 accuracy of both BASIC-L (59.6\%) and CLIP (63.0\%) are far below the bars for practical deployments. We remark that PCam is a binary classification task, so the accuracy of BASIC-L and CLIP are just slightly above random guessing. Their poor performance, however, are quite understandable: classifying lymph nodes requires much more specific training, compared to classifying common natural images. As our training data are weakly crawled and automatically curated from the internet, without any emphasis on medical images, our BASIC-L model cannot learn enough to perform well on PCam. We suspect the same speculation also holds for CLIP, as their data collection and curation process is comparable to ours. Finally, the low accuracy of CLIP and BASIC models on PCam is an assertion that despite the benefits of zero-shot transfer models, they are not ready to be deployed to tasks that require in-domain expertise, \eg~medical knowledge.

\paragraph{EuroSAT.} This dataset consists of satellite images taken for certain types of lands. Models are asked to classify input images into one out of 10 given types of lands. The land types can be seen in Figure~\ref{fig:eurosat}. The failure of BASIC-L on EuroSAT is an example for the importance of prompt engineering in zero-shot transfer learning for image-text models. In Figure~\ref{fig:eurosat}, we show that by changing the dataset's \textit{class names} and the model's set of prompts, into words and phrases that essentially have the same meaning to humans, we can improve the accuracy of BASIC-L from 51.0\% to 55.7\%. We do not further explore the changes in class names and prompts to improve BASIC-L's performance on EuroSAT, as they belong to a different topic from the focus of this paper -- combined scaling. However, our findings on this EuroSAT dataset suggests that contrastive image-text models do not really ``understand'' texts. This is perhaps because of the low quality of the texts in our training data, unlike the millions of words from books and articles like the training data of NLP models such as BERT~\citep{devlin18bert}.

\begin{figure}[htb!]
\centering
\begin{minipage}{0.73\linewidth}
  \centering
  \includegraphics[width=\linewidth]{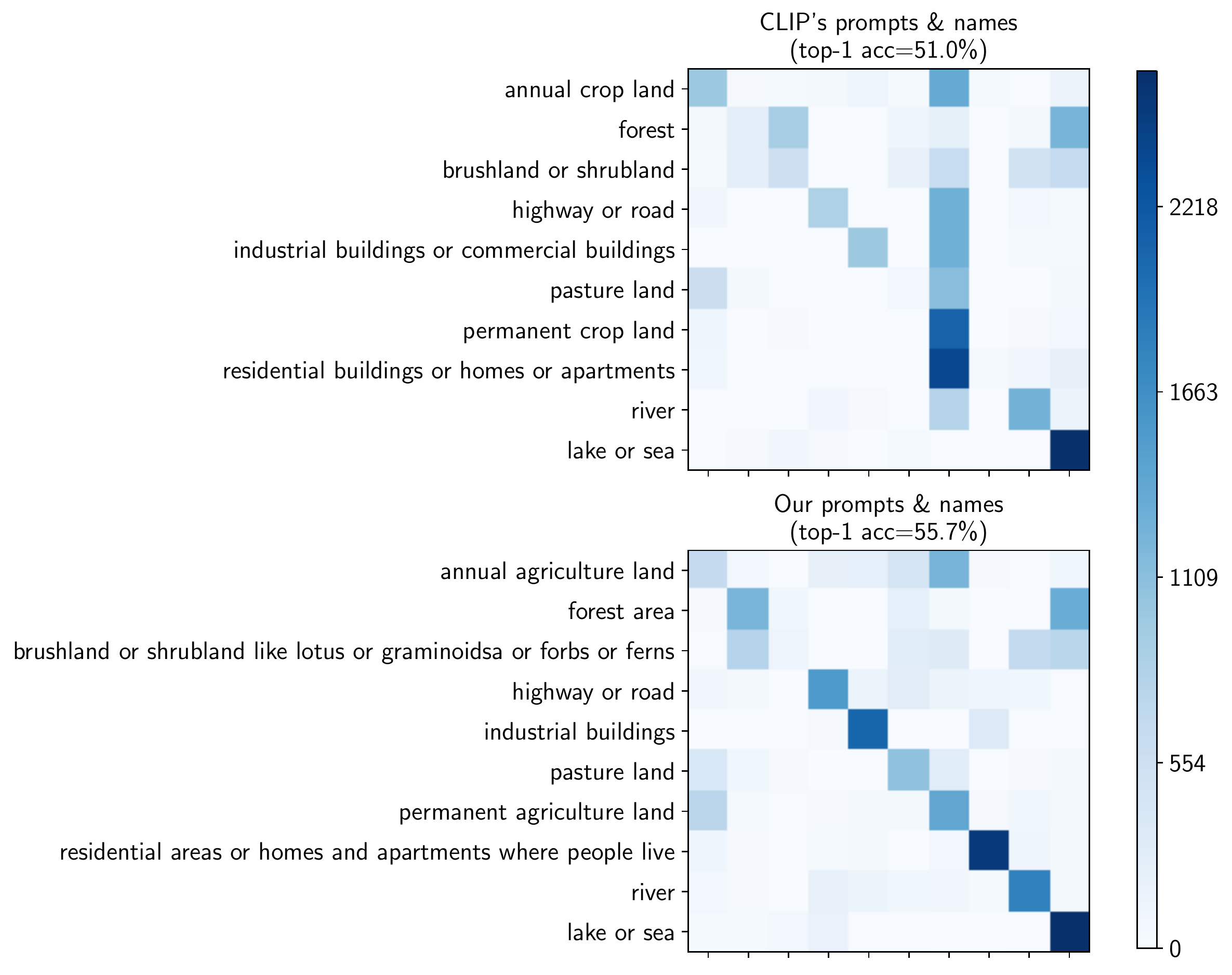}
\end{minipage}
\begin{minipage}{0.8\linewidth}
\resizebox{\linewidth}{!}{ %
\begin{tabular}{ll}
  \toprule
    \textbf{Prompts from CLIP~\citep{radford21learning}} & \textbf{Prompts we find} \\
  \midrule
    `a centered satellite photo of \{\}.', & `a centered satellite photo of \{\}' \\
    `a centered satellite photo of a \{\}.', & `a satellite photo of \{\}' \\
    `a centered satellite photo of the \{\}.', & `a photo of \{\} taken from a satellite' \\
    & `a photo of \{\} taken from the sky' \\
    & `a picture of \{\} taken from the sky by a satellite' \\
    & `a picture of \{\} taken by a satellite' \\
    & `a picture of \{\} taken by a satellite in space' \\
    & `a picture of \{\} taken by a satellite in its orbit' \\
  \bottomrule
\end{tabular}
} %
\end{minipage}
\caption{\label{fig:eurosat}Confusion matrices of BASIC-L on the EuroSAT classification dataset~\citep{helber2018introducing}. Shown are the confusion matrices obtained from zero-shot transfer learning from BASIC-L, using prompts and class names and CLIP, compared to the same model using prompts and class names that we tuned. The zero-shot top-1 accuracy with our prompts and class names are 4.7\% higher, and the confusion matrix illustrates this by showing more concentration on the diagonal.}
\end{figure}

\paragraph{MNIST.} MNIST is a classical dataset in computer vision for handwritten digit classification. Simple models can achieve more than 99.5\% accuracy, and yet BASIC-L achieves the humble 40.3\% accuracy. Unlike the case of PCam, \ie~there is not enough training data in our training dataset, for MNIST, we find that the ALIGN dataset has a fair amount of images that contain digits, either handwritten or printed. This means that the image encoder of BASIC-L has seen digit figures, and suggests that the failures might be more attributable to the text encoder, similar to the case of EuroSAT. In Figure~\ref{fig:mnist}, we show the confusion matrices of BASIC-L models with three sets of class names: using the digits such as \{`0', `1', ...\}, using the the texts such as \{`one', `two`', ...\}, and using both such as \{`0 or zero', `1 or one', ...\}. Unfortunately, we cannot improve BASIC-L's accuracy on MNIST, like we did for EuroSAT: BASIC-L's accuracy is low in all three cases, but the confusion matrices are visibly different: BASIC-L models `thinks' that many digits look like `3' for the digit-only class names, but many digits look like `1 or one' in the digit-and-text class names. Again, humans who understand languages will not make these mistakes. We think these mistakes constitute a new type or robustness failures, which we hope will invite further research.

\begin{figure}[htb!]
\centering
\includegraphics[width=0.7\linewidth]{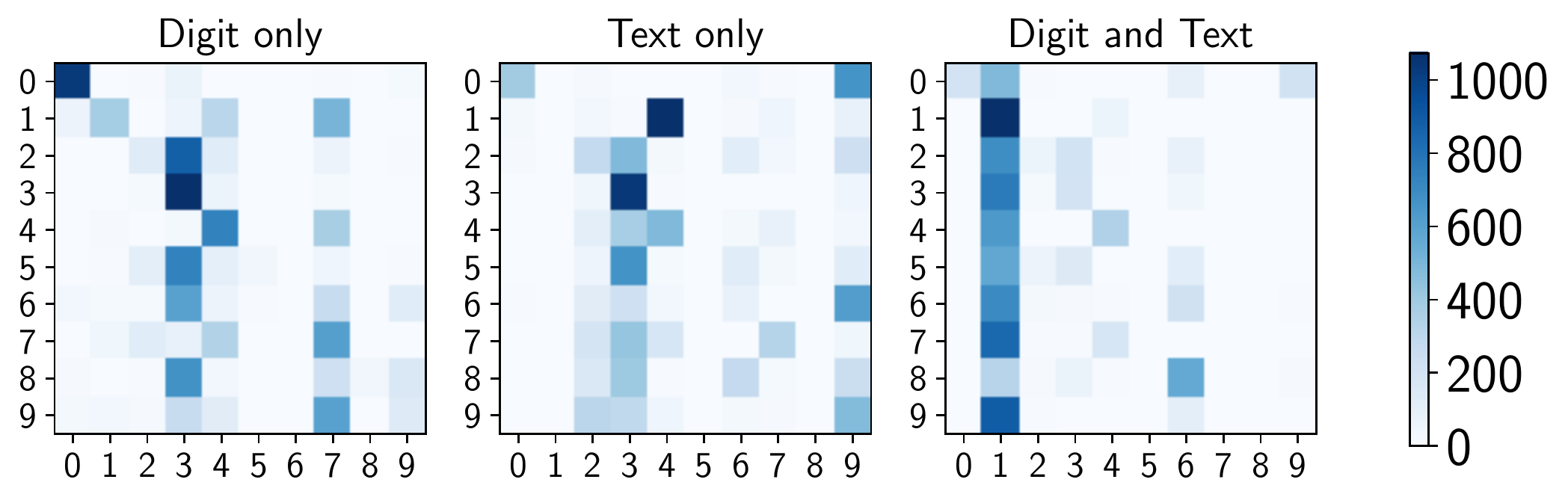}
\caption{\label{fig:mnist}Confusion matrices of BASIC-L's predictions on MNIST. \textbf{Digit only}: we use the class names \{``0'', ``1'', ..., ``9''\}; \textbf{Text only}: \{``one'', ``two'', ..., ``nine''\}; \textbf{Digit and Text}: \{``0 or zero'', ``1 or one'', ..., ``9 or nine''\}. The model has vastly different confusion matrices for different class name, suggesting that it does not understand the meaning of these strings, but instead, simply learns to match their embeddings.}
\end{figure}

\subsection{\label{sec:failed_cases}Example failure cases}
From the confusion matrices of BASIC-L on two benchmarks, EuroSAT~\citep{helber2018introducing} and MNIST~\citep{lecun2010mnist}, we observe that the prompts and class names are crucial for the performance of zero-shot transfor models. Here, we select and present a few examples to demonstrate the failures of BASIC-L. Figure~\ref{fig:falure_examples} visualizes these examples.

\FloatBarrier
\begin{center}
\begin{minipage}{0.125\linewidth}
    \includegraphics[width=1.0\linewidth, height=1.0\linewidth]{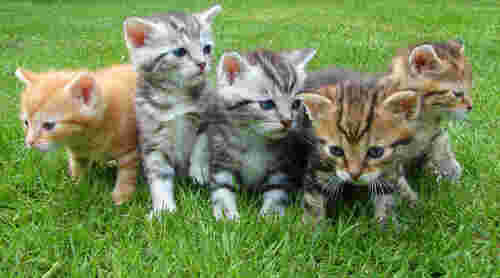}
\end{minipage}
\begin{minipage}{0.33\linewidth}
    \footnotesize
    \begin{tabularx}{\textwidth}{X|l}
    \emph{More than 6 kittens in total}. (\xmark) & $0.472$ \\
    \emph{More than 4 kittens in total}. & $0.342$ \\
    \emph{More than 2 kittens in total}. & $0.186$ \\
    \emph{More than 6 puppies in total}. & <1e-3 \\
    \emph{More than 2 puppies in total}. & <1e-3 \\
    \emph{More than 4 puppies in total}. & <1e-3 \\
    \end{tabularx}
\end{minipage}
\hfill
\begin{minipage}{0.125\linewidth}
    \includegraphics[width=1.0\linewidth, height=1.0\linewidth]{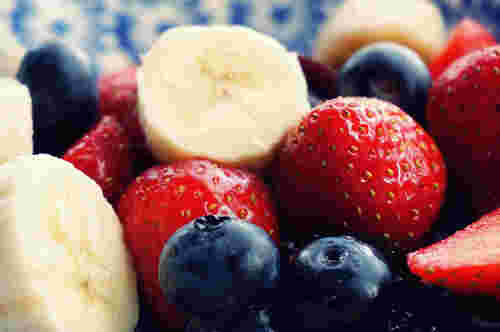}
\end{minipage}
\begin{minipage}{0.4\linewidth}
    \footnotesize
    \begin{tabularx}{\textwidth}{X|l}
    \emph{No strawberries found in the photo}. (\xmark) & $0.393$ \\
    \emph{No blueberries found in the photo}. & $0.304$ \\
    \emph{No bananas found in the photo}. &     $0.297$ \\
    \emph{No coconuts found in the photo}. & $0.003$ \\
    \emph{No pineapples found in the photo}. & $0.002$ \\
    \emph{No oranges found in the photo}. & $0.001$ \\
    \end{tabularx}
\end{minipage}

\vspace{0.005\linewidth}
\hrulefill
\vspace{0.005\linewidth}

\begin{minipage}{0.125\linewidth}
    \includegraphics[width=1.0\linewidth, height=1.0\linewidth]{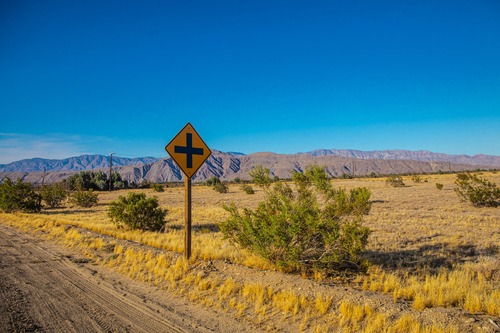}
\end{minipage}
\begin{minipage}{0.3\linewidth}
    \footnotesize
    \begin{tabularx}{\textwidth}{X|l}
    \emph{Closed road}. (\xmark) & $0.716$ \\
    \emph{Slippery road}. & $0.170$ \\
    \emph{Intersection}. & $0.076$ \\
    \emph{Stop}. & $0.034$ \\
    \emph{Sharp left}. & $0.003$ \\
    \emph{Sharp right}. & $0.002$ \\
    \end{tabularx}
\end{minipage}
\hfill
\begin{minipage}{0.125\linewidth}
    \includegraphics[width=1.0\linewidth, height=1.0\linewidth]{demos/traffic.jpeg}
\end{minipage}
\begin{minipage}{0.4\linewidth}
    \footnotesize
    \begin{tabularx}{\textwidth}{X|l}
    \emph{Traffic sign indicating intersection}. (\cmark) & $0.927$ \\
    \emph{Traffic sign indicating closed road}. & $0.027$ \\
    \emph{Traffic sign indicating sharp left}. & $0.027$ \\
    \emph{Traffic sign indicating sharp right}. & $0.016$ \\
    \emph{Traffic sign indicating slippery road}. & $0.003$ \\
    \emph{Traffic sign indicating stop}. & <1e-3 \\
    \end{tabularx}
\end{minipage}

\vspace{0.005\linewidth}
\hrulefill
\vspace{0.005\linewidth}

\begin{minipage}{0.125\linewidth}
    \includegraphics[width=1.0\linewidth, height=1.0\linewidth]{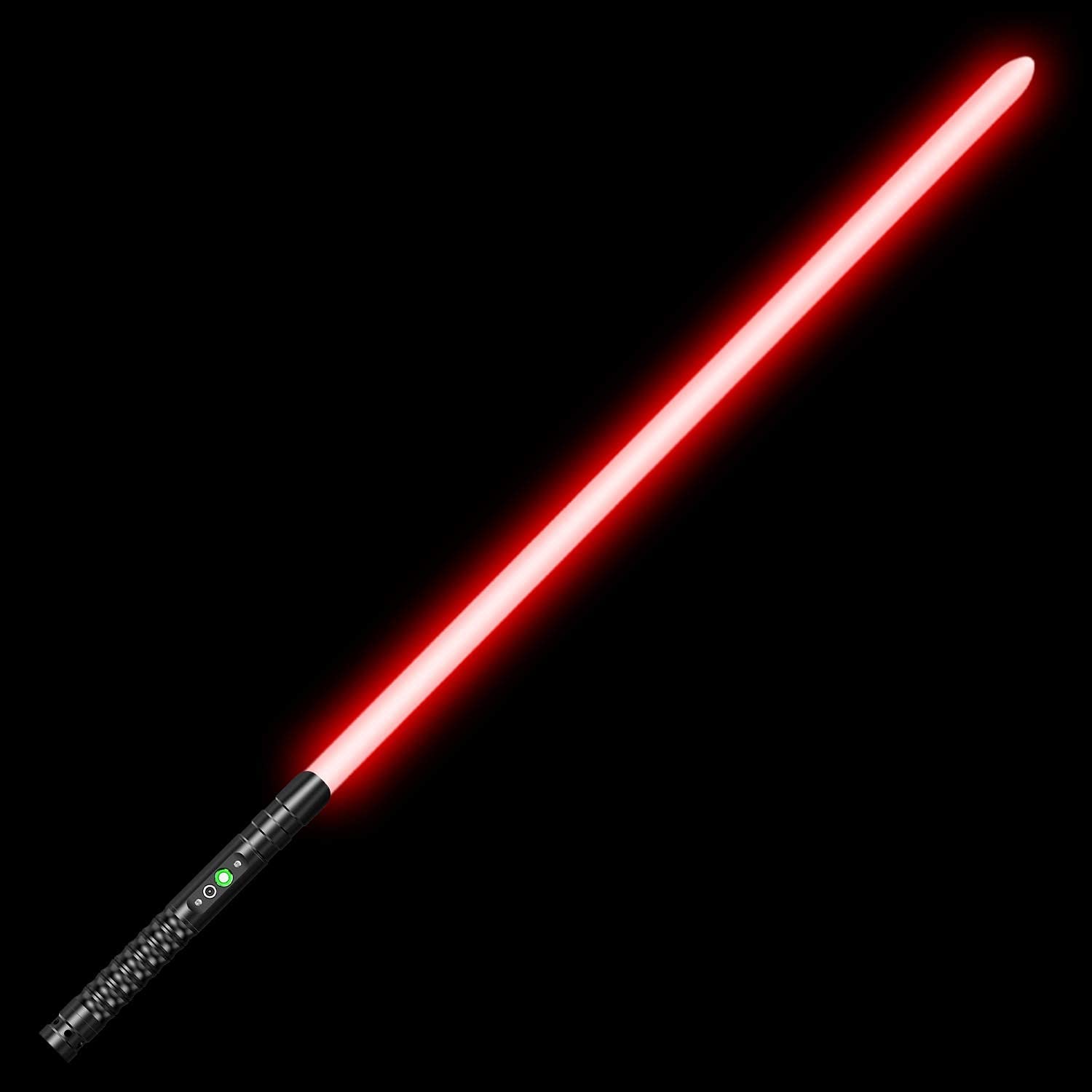}
\end{minipage}
\begin{minipage}{0.25\linewidth}
    \footnotesize
    \begin{tabularx}{\textwidth}{X|l}
    \emph{red light saber}. (\cmark) & 0.992 \\
    \emph{blue light saber}. & 0.005 \\
    \emph{red led light}. & 0.002 \\
    \emph{red neon light}. & <1e-3 \\
    \emph{blue led light}. & <1e-3 \\
    \emph{blue neon light}. & <1e-3 \\
    \end{tabularx}
\end{minipage}
\begin{minipage}{0.125\linewidth}
    \includegraphics[width=1.0\linewidth, height=1.0\linewidth]{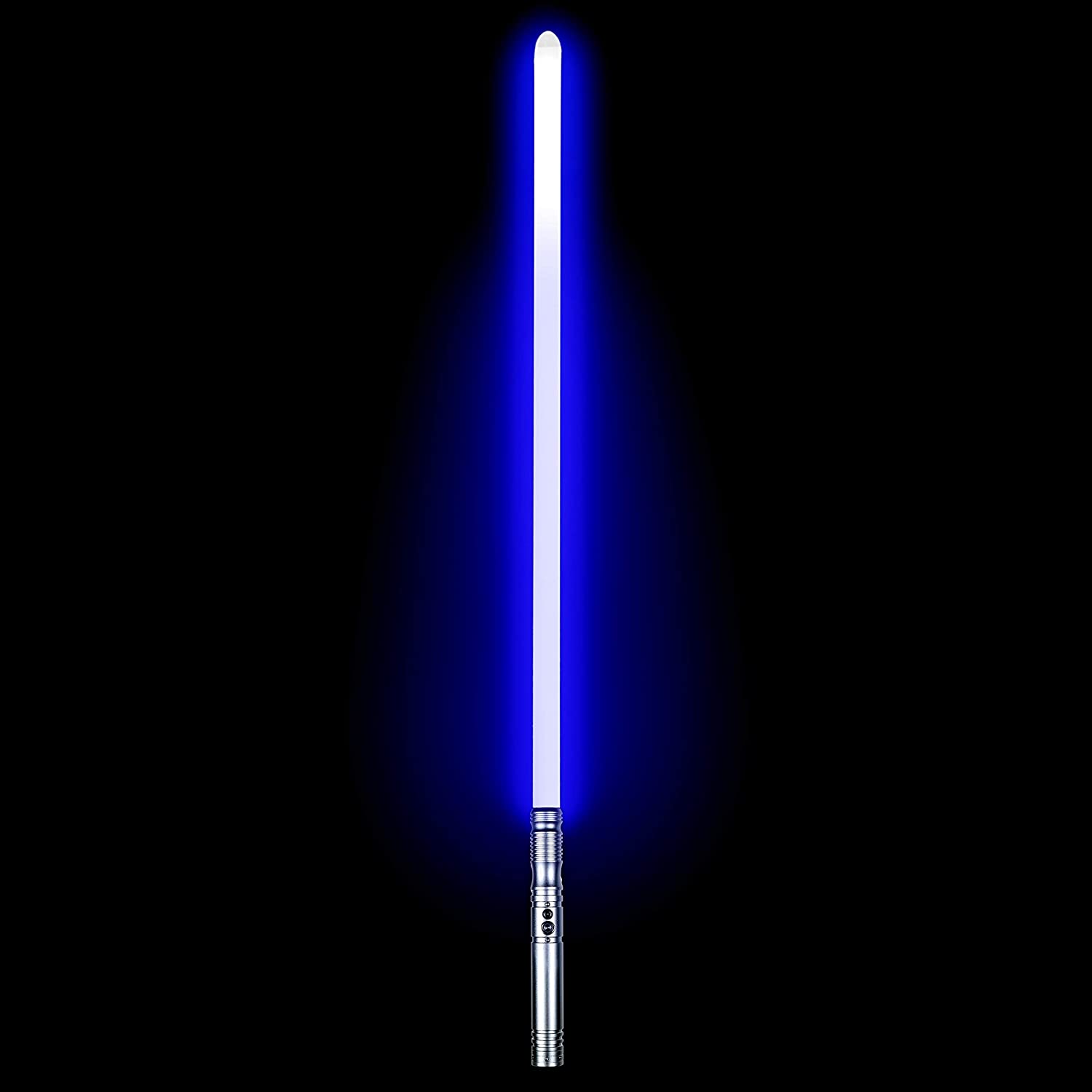}
\end{minipage}
\begin{minipage}{0.25\linewidth}
    \footnotesize
    \begin{tabularx}{\textwidth}{X|l}
    \emph{blue light saber}. (\cmark) & 0.990 \\
    \emph{red light saber}. & 0.004 \\
    \emph{blue led light}. & 0.003 \\
    \emph{blue neon light}. & 0.003 \\
    \emph{red led light}. & <1e-3 \\
    \emph{red neon light}. & <1e-3 \\
    \end{tabularx}
\end{minipage}

\begin{minipage}{0.125\linewidth}
    \includegraphics[width=1.0\linewidth, height=1.0\linewidth]{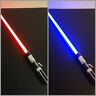}
\end{minipage}
\begin{minipage}{0.63\linewidth}
    \footnotesize
    \begin{tabularx}{\textwidth}{X|l}
    \emph{a blue light saber on the left and a red light saber on the right}. (\xmark) & $0.297$ \\
    \emph{a blue light saber on the right and a red light saber on the left}. & $0.235$ \\
    \emph{a red light saber on the left and a blue light saber on the right}. & $0.205$ \\
    \emph{a red light saber on the right and a blue light saber on the left}. & $0.173$ \\
    \emph{a red light saber to the right of a blue light saber}. & $0.054$ \\
    \emph{a red light saber to the left of a blue light saber}. & $0.0342$ \\
    \end{tabularx}
\end{minipage}

\captionof{figure}{\label{fig:falure_examples}Selected failure cases for BASIC-L over unseen images. \textbf{(1)} The first block indicates that the model is not precise in object counting and does not well handle negation in the prompts, possibly due the nature of our training data. \textbf{(2)} The middle block shows two examples to indicate that prompt engineering can play a critical role in providing the model with sufficient context to produce the desired output. \textbf{(3)} The last block shows that the model does not have the sense of left and right, which is a relic of random left-right flips of images which we apply during training.}
\end{center}

\let\FloatBarrier\relax

\section{Proofs} \label{app:proof}

In this appendix, we complete the proof of Theorem \ref{thm:1} and Theorem \ref{thm:2} by gradually analyzing the gap from the general case to the special case.

\subsection{General case}

\begin{lemma} \label{lemma:0}
Let $\Fcal_{}$  be a  set of maps $x\mapsto F(x)$ and $\Gcal$ be a set of maps $y\mapsto G(y)$.  Then, for any $\delta>0$, with probability at least $1-\delta$, the following holds for all $F \in \Fcal$ and $G \in \Gcal$:\begin{align}
&\EE_{x,y}[\bell_M(x, y)] -\hEE_S[\hell_B(x, y)] 
\\ \nonumber & \le c_1\EE_{x,y}[A(x,y)]\left(  \frac{1}{\sqrt{2B}} \left(2\sqrt{2}c_8 c_9+c_1 \sqrt{\kappa\ln( \sqrt{\kappa B}/\delta)} \right)+\frac{2}{c_1}\sup_{x\in\Xcal}\Rcal_{B}( \Hcal_{\Fcal,\Gcal,e}^{x}) \right)
\\ \nonumber & \quad +2 \Rcal_{m}( \Hcal_{\Fcal,\Gcal,\hell_B})+c_2 \sqrt{\frac{\ln(2/\delta)}{2m}}.
\end{align}
where  $\Hcal_{\Fcal,\Gcal,e}^x=\{y\mapsto \exp(F(x)\T G(y)):F \in \Fcal, G \in \Gcal\}$,  $\Hcal_{\Fcal,\Gcal,\hell_B}=\{(x,y)\mapsto \hell_B(x, y):F \in \Fcal, G \in \Gcal\}$, and 
$$
A(x,y)=\frac{\exp(F(x)\T G(y))}{\left(\frac{1}{B}\sum_{k=1}^B \exp(F(x)\T G(\hy_k))\right)\EE_{\by}[ \exp(F(x)\T G(\by))]}.
$$
Here,  $\Rcal_{m}(\Hcal):=\EE_{S,\sigma}[\sup_{h \in\Hcal}\frac{1}{m} \sum_{i=1}^m \sigma_i h(x_{i},y_{i})]$ where $\sigma_1,\dots,\sigma_m$ are independent uniform random variables taking values in $\{-1,1\}$.
\end{lemma}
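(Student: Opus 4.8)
The plan is to split the generalization gap into a part governed by the randomness of the in‑batch negatives $\hy_1,\dots,\hy_B$ and an ordinary Rademacher generalization gap over the $m$ training pairs $S=((\hx_i,\hy_i))_{i=1}^m$. First I would write
\begin{align*}
\EE_{x,y}[\bell_M(x, y)] -\hEE_S[\hell_B(x, y)]
&= \underbrace{\big(\EE_{x,y}[\bell_M(x, y)] - \EE_{x,y}[\hell_B(x, y)]\big)}_{T_1} \\
&\quad + \underbrace{\big(\EE_{x,y}[\hell_B(x, y)]-\hEE_S[\hell_B(x, y)]\big)}_{T_2},
\end{align*}
and control $T_1$ and $T_2$ on two events of probability $\ge 1-\delta/2$ each, then take a union bound.

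For $T_1$, I would start from the elementary identity (immediate from the definitions of $\bell_M,\hell_B,A,\gamma$)
\begin{align*}
\bell_M(x, y) - \hell_B(x, y)
&= \exp(F(x)\T G(y))\left(\frac{1}{\frac{1}{B}\sum_{k=1}^{B}\exp(F(x)\T G(\hy_k))} - \frac{1}{\EE_{\by}[\exp(F(x)\T G(\by))]}\right) \\
&= A(x,y)\,\gamma(x).
\end{align*}
Since $A(x,y)\ge 0$ pointwise, $A(x,y)\gamma(x)\le A(x,y)(\sup_{x'}\gamma(x'))_{+}$, so for every $F\in\Fcal,G\in\Gcal$ we get $T_1\le \EE_{x,y}[A(x,y)]\cdot(\Lambda)_{+}$ with $\Lambda:=\sup_{x\in\Xcal,\,F\in\Fcal,\,G\in\Gcal}\gamma(x)$, and it remains to bound $\Lambda$ with high probability over $\hy_1,\dots,\hy_B$. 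For fixed $x$, $\gamma(x)$ is exactly $\EE_{\by}[h(\by)]-\frac{1}{B}\sum_k h(\hy_k)$ for $h(y)=\exp(F(x)\T G(y))\in[0,c_1]$; hence one‑sided symmetrization gives $\EE[\sup_{F,G}\gamma(x)]\le 2\Rcal_{B}(\Hcal_{\Fcal,\Gcal,e}^{x})$, and since perturbing one $\hy_k$ changes $\sup_{F,G}\gamma(x)$ by at most $c_1/B$, McDiarmid's inequality yields $\sup_{F,G}\gamma(x)\le 2\Rcal_{B}(\Hcal_{\Fcal,\Gcal,e}^{x}) + c_1\sqrt{\ln(1/\delta')/(2B)}$ with probability $\ge 1-\delta'$. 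To make this uniform in $x$ I would cover the ball $\{x\in\RR^{\kappa}:\|x\|_2\le c_8\}$ by an $\epsilon$‑net of cardinality $N\le(1+2c_8/\epsilon)^{\kappa}$, apply the last bound at each net point with $\delta'=\delta/(2N)$, union bound, and transfer to an arbitrary $x$ via the $c_9$‑Lipschitz hypothesis on $\gamma$ (so $\gamma(x)\le\gamma(\bar x)+c_9\epsilon$ at the nearest net point $\bar x$). This gives, with probability $\ge 1-\delta/2$,
\begin{align*}
\Lambda \le 2\sup_{x\in\Xcal}\Rcal_{B}(\Hcal_{\Fcal,\Gcal,e}^{x}) + c_1\sqrt{\frac{\ln(2N/\delta)}{2B}} + c_9\epsilon,
\end{align*}
and choosing $\epsilon$ of order $c_1 c_8/\sqrt{B}$ and bounding $\ln(2N/\delta)$ by $c_1^{2}\kappa\ln(\sqrt{\kappa B}/\delta)$ (absorbing absolute constants) rewrites the right‑hand side as $c_1\big(\frac{1}{\sqrt{2B}}(2\sqrt{2}c_8c_9+c_1\sqrt{\kappa\ln(\sqrt{\kappa B}/\delta)})+\frac{2}{c_1}\sup_{x}\Rcal_{B}(\Hcal_{\Fcal,\Gcal,e}^{x})\big)$; multiplying by $\EE_{x,y}[A(x,y)]$ produces exactly the first term of the Lemma.

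For $T_2$ I would condition on the batch $\hy_1,\dots,\hy_B$, under which $(x,y)\mapsto\hell_B(x,y)$ is a fixed member of the class $\Hcal_{\Fcal,\Gcal,\hell_B}$, uniformly bounded by $c_2$. The standard Rademacher generalization bound (symmetrization plus McDiarmid, a single training pair perturbing $\hEE_S[\hell_B]$ by at most $c_2/m$) then gives, with probability $\ge 1-\delta/2$ and simultaneously for all $F\in\Fcal,G\in\Gcal$,
\begin{align*}
\EE_{x,y}[\hell_B(x, y)]-\hEE_S[\hell_B(x, y)] \le 2\Rcal_{m}(\Hcal_{\Fcal,\Gcal,\hell_B}) + c_2\sqrt{\frac{\ln(2/\delta)}{2m}}.
\end{align*}
Adding the two bounds on the intersection of the two events (probability $\ge 1-\delta$) finishes the proof.

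The only genuinely delicate step is the control of $\Lambda=\sup_{x,F,G}\gamma(x)$: one must notice that the sole batch‑dependent object in $T_1$ is the scalar $\gamma(x)$ (with $A\ge 0$ letting it be pulled outside the expectation), and then make the Rademacher‑plus‑concentration bound for $\gamma(x)$ hold uniformly over the input domain $\Xcal\subseteq\RR^{\kappa}$ — this is where the Lipschitz hypothesis on $\gamma$ and the $\sqrt{\kappa}$‑type covering term enter, and the bookkeeping of the net radius $\epsilon$ against the union‑bound cost $\ln N$ is the main (if still routine) calculation. Everything else — the identity $\bell_M-\hell_B=A\gamma$, the nonnegativity of $A$, and the bound on $T_2$ — is standard.
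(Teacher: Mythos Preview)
Your proposal is correct and follows essentially the same route as the paper: the same decomposition $T_1+T_2$, the same identity $\bell_M-\hell_B=A(x,y)\gamma(x)$ with $A\ge 0$ to pull $\gamma$ outside the expectation, the same Rademacher-plus-McDiarmid bound on $\gamma(x)$ at a fixed $x$ made uniform over $\Xcal$ via an $\epsilon$-net and the $c_9$-Lipschitz hypothesis, and the standard Rademacher generalization bound for $T_2$. The only differences are cosmetic bookkeeping (your volumetric covering number $(1+2c_8/\epsilon)^{\kappa}$ versus the paper's cruder $(2c_8\sqrt{\kappa}/r)^{\kappa}$, and the exact way constants are repackaged to match the outer factor $c_1$ in the lemma statement).
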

\begin{proof} \ We first decompose the difference as
follows:\begin{align}
\EE_{x,y}[\bell_M(x, y)] -\hEE_S[\hell_B(x, y)]&=\EE_{x,y}[\bell_M(x, y)] -\EE_{x,y}[\hell_B(x, y)]+\EE_{x,y}[\hell_B(x, y)]-\hEE_S[\hell_B(x, y)]
\\ & =\EE_{x,y}[\bell_M(x, y)-\hell_B(x, y)]+(\EE_{x,y}[\hell_B(x, y)]-\hEE_S[\hell_B(x, y)]).  \label{eq:4}
\end{align}
For the inside of the expectation in the first term, we can write it as 
\begin{align}
&\bell_M(x, y)-\hell_B(x, y)
\\ &=  \frac{\exp(F(x)\T G(y))}{\frac{1}{B}\sum_{k=1}^B \exp(F(x)\T G(\hy_k))}-  \frac{\exp(F(x)\T G(y))}{\EE_{\by}[ \exp(F(x)\T G(\by))]}
\\ & =  \frac{\exp(F(x)\T G(y))\EE_{\by}[ \exp(F(x)\T G(\by))]-\exp(F(x)\T G(y))(\frac{1}{B}\sum_{k=1}^B \exp(F(x)\T G(\hy_k)))}{(\frac{1}{B}\sum_{k=1}^B \exp(F(x)\T G(\hy_k)))\EE_{\by}[ \exp(F(x)\T G(\by))]} 
\\ & =  \frac{\exp(F(x)\T G(y))\left(\EE_{\by}[ \exp(F(x)\T G(\by))]-\frac{1}{B}\sum_{k=1}^B \exp(F(x)\T G(\hy_k)) \right)}{\left(\frac{1}{B}\sum_{k=1}^B \exp(F(x)\T G(\hy_k))\right)\EE_{\by}[ \exp(F(x)\T G(\by))]} 
\label{eq:1}
\end{align}
Using Lemma \ref{lemma:1} (see below) with the assumption that $\hy_{1},\hy_{2},\dots,\hy_{B}\stackrel{iid}{\sim}p_{y} $ and  $\exp(F(x)\T G(y))\le c_1 $ with probability one, we have that  for any $\delta>0$ and $x\in \Xcal$, with probability at least $1-\delta$,
the following holds for all $F \in \Fcal$ and $G \in \Gcal$:
\begin{align} \label{eq:3}
\EE_{\by } [\exp(F(x)\T G(y))]- \frac{1}{B}\sum_{k=1}^B \exp(F(x)\T G(\hy_k))\le2\sup_{x\in\Xcal}\Rcal_{B}( \Hcal_{\Fcal,\Gcal,e}^{x})+c_1 \sqrt{\frac{\ln(1/\delta)}{2B}},
\end{align} 
where
$$
\sup_{x\in\Xcal}\Rcal_{B}( \Hcal_{\Fcal,\Gcal,e}^{x})=\sup_{x\in\Xcal}\EE_{y,\sigma}\left[\sup_{h_{x} \in\Hcal_{\Fcal,\Gcal,e}^x}\frac{1}{B} \sum_{i=1}^B \sigma_i h_{x}(y_{i})\right],
$$
and
$$
\Hcal_{\Fcal,\Gcal,e}^x=\{y\mapsto \exp(F(x)\T G(y)):F \in \Fcal, G \in \Gcal\}.
$$ 
Recall that $\gamma(x)=\EE_{\by } [\exp(F(x)\T G(y))]- \frac{1}{B}\sum_{k=1}^B \exp(F(x)\T G(\hy_k))$ is $c_{9}$-Lipschitz;  i.e., $|\gamma(x)-\gamma(x')|\le c_9 \|x-x'\|_2$ for all $x\in \Xcal$ where $\|x\|_2\le c_8$ for all $\Xcal$. For any metric space $(\Mcal,d)$ and subset $M\subseteq \Mcal$,  the (closed) ball of radius $r$ at centered at $c$ is denoted by $\Bcal_{(\Mcal,d)}[c,r]=\{x\in \Mcal : d(x,c)\le r\}$, and the  $r$-converging number of $M$\ with the covering $\Ccal$ is defined by 
$$
\Ncal_{\Ccal}(r,M)=\min\left\{|\Ccal|: \Ccal \subseteq \Mcal, M\subseteq \cup_{c \in\Ccal} \Bcal_{(\Mcal,d)}[c,r]\right.\}.
$$
Let us choose the metric space $(\Mcal,d)$ to be the Euclidian space $\RR^{\kappa}$ with the Euclidian metric. That is,  we have  the $\epsilon$-covering of $\Xcal$ with the Euclidean balls of radius $r$, with  the  $r$-converging number of 
$$
\Ncal_{\Ccal}(r,\Xcal) \le (2c_8 \sqrt{\kappa}/r)^\kappa. 
$$
Thus, by setting $r=\frac{2c_8}{\sqrt{B}}$, 
$$
\Ncal_{\Ccal}(r,\Xcal) \le ( \sqrt{\kappa B})^\kappa. 
$$
Using these, 
\begin{align*}
\sup_{x \in \Xcal} \gamma(x) = \inf_{c \in \Ccal} \sup_{x \in\Xcal}  \gamma(x) -\gamma(c) +\gamma(c)&\le  \inf_{c \in \Ccal} \sup_{x \in\Xcal}  |\gamma(x) -\gamma(c)| +\sup_{c \in \Ccal}\gamma(c) 
\\ & \le r c_{9}+\sup_{c \in \Ccal}\gamma(c)
\\ & =\frac{2c_8c_9}{\sqrt{B}}    +\sup_{c \in \Ccal}\gamma(c).
\end{align*}
Here, using \eqref{eq:3} with union bounds,  we have that  for any $\delta>0$, with probability at least $1-\delta$,
the following holds for all $F \in \Fcal$ and $G \in \Gcal$:
$$
\sup_{c \in \Ccal}\gamma(c) \le2\sup_{x\in\Xcal}\Rcal_{B}( \Hcal_{\Fcal,\Gcal,e}^{x})+c_1 \sqrt{\frac{\ln( \sqrt{\kappa B})^\kappa/\delta)}{2B}} \le2\sup_{x\in\Xcal}\Rcal_{B}( \Hcal_{\Fcal,\Gcal,e}^{x})+c_1 \sqrt{\frac{\kappa\ln( \sqrt{\kappa B}/\delta)}{2B}}.
$$
Therefore, for any $\delta>0$, with probability at least $1-\delta$, the following holds for all $F \in \Fcal$ and $G \in \Gcal$:
\begin{align}  \label{eq:7}
\sup_{x \in \Xcal} \gamma(x) &\le2\sup_{x\in\Xcal}\Rcal_{B}( \Hcal_{\Fcal,\Gcal,e}^{x})+\frac{2c_8}{\sqrt{B}} +c_1 \sqrt{\frac{\kappa\ln( \sqrt{\kappa B}/\delta)}{2B}}
\\ \nonumber & =2\sup_{x\in\Xcal}\Rcal_{B}( \Hcal_{\Fcal,\Gcal,e}^{x})+\frac{1}{\sqrt{2B}} \left(2\sqrt{2}c_8 c_9+c_1 \sqrt{\kappa\ln( \sqrt{\kappa B}/\delta)} \right).
\end{align}
Combining equations \eqref{eq:1} and \eqref{eq:7} with union bound, we have that for any $\delta>0$, with probability at least $1-\delta$,
\begin{align}
&\bell_M(x, y)-\hell_B(x, y)
\\ & =  \frac{\exp(F(x)\T G(y))\left(\EE_{\by} [\exp(F(x)\T G(y))]-\frac{1}{B}\sum_{k=1}^B \exp(F(x)\T G(\hy_k)) \right)}{\left(\frac{1}{B}\sum_{k=1}^B \exp(F(x)\T G(\hy_k))\right) \EE_{\by}[ \exp(F(x)\T G(\by))]}
\\ & \le  \frac{\exp(F(x)\T G(y))\left(\frac{1}{\sqrt{2B}} \left(2\sqrt{2}c_8 +c_1 \sqrt{\kappa\ln(c_9 \sqrt{\kappa B}/\delta)} \right)+2\sup_{x\in\Xcal}\Rcal_{B}( \Hcal_{\Fcal,\Gcal,e}^{x}) \right)}{\left(\frac{1}{B}\sum_{k=1}^B \exp(F(x)\T G(\hy_k))\right)\EE_{\by}[ \exp(F(x)\T G(\by))]}.
\end{align}
By defining 
$$
A(x,y)=\frac{\exp(F(x)\T G(y))}{\left(\frac{1}{B}\sum_{k=1}^B \exp(F(x)\T G(\hy_k))\right)\EE_{\by}[ \exp(F(x)\T G(\by))]},
$$
we have that for any $\delta>0$, with probability at least $1-\delta$,
\begin{align} \label{eq:5}
&\EE_{x,y}[\bell_M(x, y)-\hell_B(x, y)]
\\ \nonumber & \le c_1\EE_{x,y}[A(x,y)]\left(  \frac{1}{\sqrt{2B}} \left(2\sqrt{2}c_8  c_9+c_1 \sqrt{\kappa\ln( \sqrt{\kappa B}/\delta)} \right)+\frac{2}{c_1}\sup_{x\in\Xcal}\Rcal_{B}( \Hcal_{\Fcal,\Gcal,e}^{x}) \right).
\end{align}
For the second term, using Lemma \ref{lemma:1}  with the assumption that $\hell_B(x, y) \le c_2$ for $(x, y) \sim p_{(x,y)}$,  we have that  for any $\delta>0$, with probability at least $1-\delta$,
the following holds for all $F \in \Fcal$ and $G \in \Gcal$:
\begin{align} \label{eq:6}
&\EE_{x,y}[\hell_B(x, y)]-\hEE_S[\hell_B(x, y)]
\\ &=\EE_{x,y}[\hell_B(x, y)]-\frac{1}{m}\sum_{i=1}^m-  \frac{B\exp(F(\hx_{i})\T G(\hy_{i}))}{\sum_{k=1}^B \exp(F(\hx_i)\T G(\hy_k))}\le 2 \Rcal_{m}( \Hcal_{\Fcal,\Gcal,\hell_B})+c_2 \sqrt{\frac{\ln(1/\delta)}{2m}}, 
\end{align}
where  $\Hcal_{\Fcal,\Gcal,\hell_B}=\{(x,y)\mapsto \hell_B(x, y):F \in \Fcal, G \in \Gcal\}$. 
Combining equations \eqref{eq:4}, \eqref{eq:5}, and \eqref{eq:6} with union bound, we have that for any $\delta>0$, with probability at least $1-\delta$,
the following holds  all $F \in \Fcal$ and $G \in \Gcal$:
\begin{align*}
&\EE_{x,y}[\bell_M(x, y)] -\hEE_S[\hell_B(x, y)] 
\\ &\ \le c_1\EE_{x,y}[A(x,y)]\left(  \frac{1}{\sqrt{2B}} \left(2\sqrt{2}c_8 c_9+c_1 \sqrt{\kappa\ln( \sqrt{\kappa B}/\delta)} \right)+\frac{2}{c_1}\sup_{x\in\Xcal}\Rcal_{B}( \Hcal_{\Fcal,\Gcal,e}^{x}) \right) 
\\ & \quad +2\Rcal_{m}( \Hcal_{\Fcal,\Gcal,\hell_B})+c_2 \sqrt{\frac{\ln(2/\delta)}{2m}}. 
\end{align*}

\end{proof}

The proof of Lemma \ref{lemma:0} partially builds up on Lemma \ref{lemma:1} below. Lemma \ref{lemma:1} is a direct application of previous results \citep{bartlett2002rademacher,mohri2012foundations,shalev2014understanding} to our problem. We provide a proof of Lemma \ref{lemma:1} by slightly modifying the proof of a previous work \citep[Theorem 3.1]{mohri2012foundations} for the completeness (the proof utilizes the nonnegativity of $h$ to have a slightly tighter bound than Theorem 26.5 of \citealp{shalev2014understanding}): 

\begin{lemma} \label{lemma:1}
Let $\Hcal$  be a  set of maps $z\mapsto h(z)$ such that $h(z) \in [0, \lambda]$ for all $z$ in its domain. Then, for any $\delta>0$, with probability at least $1-\delta$ over  an i.i.d. draw of $m$ i.i.d.  samples  $(z_{i})_{i=1}^m$, the following holds for all maps $ h\in\Hcal$:
\begin{align} \label{eq:new:1}
\EE_{z}[h(z)]
\le \frac{1}{m}\sum_{i=1}^{m} h(z_{i})+2 \Rcal_{m}(\Hcal)+\lambda \sqrt{\frac{\ln(1/\delta)}{2m}},   
\end{align}
where  $\Rcal_{m}(\Hcal):=\EE_{(z_{1},\dots,z_m),\sigma}[\sup_{h \in\Hcal}\frac{1}{m} \sum_{i=1}^m \sigma_i h(z_{i})]$ where $\sigma_1,\dots,\sigma_m$ are independent uniform random variables taking values in $\{-1,1\}$.  
\end{lemma}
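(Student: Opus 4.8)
The plan is to prove Lemma~\ref{lemma:1} via the standard two-step recipe for uniform convergence bounds: first a bounded-differences concentration argument, then symmetrization to introduce the Rademacher complexity. Define the supremum-of-empirical-deviation functional
\[
\Phi(z_1,\dots,z_m) = \sup_{h \in \Hcal}\Big( \EE_z[h(z)] - \frac{1}{m}\sum_{i=1}^m h(z_i)\Big).
\]
First I would observe that because each $h$ takes values in $[0,\lambda]$, changing a single coordinate $z_i$ to $z_i'$ changes any term $\frac{1}{m}h(z_i)$ by at most $\lambda/m$, hence $\Phi$ satisfies the bounded-differences property with constant $\lambda/m$ in each coordinate. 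McDiarmid's inequality then gives, for any $\delta>0$, with probability at least $1-\delta$,
\[
\Phi(z_1,\dots,z_m) \le \EE[\Phi] + \lambda\sqrt{\frac{\ln(1/\delta)}{2m}}.
\]

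The second step is to bound $\EE[\Phi]$ by $2\Rcal_m(\Hcal)$ using symmetrization with a ghost sample. Introduce an independent i.i.d.\ copy $(z_1',\dots,z_m')$; then $\EE_z[h(z)] = \EE_{z'}\big[\frac{1}{m}\sum_i h(z_i')\big]$, so by Jensen's inequality (pulling the inner expectation out of the supremum)
\[
\EE[\Phi] \le \EE_{z,z'}\Big[\sup_{h\in\Hcal}\frac{1}{m}\sum_{i=1}^m \big(h(z_i') - h(z_i)\big)\Big].
\]
Since $(z_i,z_i')$ is exchangeable, I can insert i.i.d.\ Rademacher signs $\sigma_i \in \{-1,1\}$ without changing the distribution of each summand $h(z_i')-h(z_i)$, obtaining
\[
\EE[\Phi]\le \EE_{z,z',\sigma}\Big[\sup_{h\in\Hcal}\frac{1}{m}\sum_{i=1}^m \sigma_i\big(h(z_i') - h(z_i)\big)\Big]
\le 2\,\EE_{z,\sigma}\Big[\sup_{h\in\Hcal}\frac{1}{m}\sum_{i=1}^m \sigma_i h(z_i)\Big] = 2\Rcal_m(\Hcal),
\]
where the middle inequality splits the supremum of the sum into the sum of two suprema and uses symmetry of $\sigma_i$. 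Combining the two displays proves \eqref{eq:new:1}.

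I do not expect a genuine obstacle here — this is the textbook Rademacher generalization bound (\cf~\citealp{bartlett2002rademacher,mohri2012foundations,shalev2014understanding}), and the excerpt itself flags it as such. The only point requiring a little care, and the one the paper explicitly highlights, is getting the \emph{tighter} one-sided constant: using $h(z)\in[0,\lambda]$ rather than $|h(z)|\le\lambda$ in the McDiarmid step yields bounded differences of $\lambda/m$ (not $2\lambda/m$), which is what produces $\lambda\sqrt{\ln(1/\delta)/(2m)}$ instead of a bound with an extra factor. So the ``hard part'' is really just bookkeeping: track that the range is $[0,\lambda]$ throughout, state the one-sided (rather than two-sided) form of the inequality since that is all Lemma~\ref{lemma:0} invokes, and make sure the symmetrization step is written for the supremum of the deviation (not its absolute value) so the constant $2$ in front of $\Rcal_m(\Hcal)$ is correct.
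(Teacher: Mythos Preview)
Your proposal is correct and follows essentially the same approach as the paper: define the supremum-of-deviation functional, apply McDiarmid's inequality with bounded differences $\lambda/m$, then bound its expectation by $2\Rcal_m(\Hcal)$ via ghost-sample symmetrization and Rademacher sign insertion. Your remark about the one-sided range $[0,\lambda]$ giving the tighter constant is exactly the point the paper highlights.
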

\begin{proof} \
 Let $S=(z_i)_{i=1}^m$ and $S'=(z_i')_{i=1}^m$. Define 
\begin{align}
\varphi(S)= \sup_{h \in\Hcal} \EE_{x,y}[h(z)]-\frac{1}{m}\sum_{i=1}^{m}h(z_i).
\end{align} 
To apply McDiarmid's inequality to $\varphi(S)$, we compute an upper bound on $|\varphi(S)-\varphi(S')|$ where  $S$ and $S'$ be two test datasets differing by exactly one point of an arbitrary index $i_{0}$; i.e.,  $S_i= S'_i$ for all $i\neq i_{0}$ and $S_{i_{0}} \neq S'_{i_{0}}$. Then,
\begin{align}
\varphi(S')-\varphi(S) \le\sup_{h \in\Hcal}\frac{h(z_{i_{0}})-h(z_{i_{0}}')}{m} \le \frac{\lambda}{m}.
\end{align}
Thus, by McDiarmid's inequality, for any $\delta>0$, with probability at least $1-\delta$,
\begin{align}
\varphi(S) \le  \EE_{S}[\varphi(S)] + \lambda \sqrt{\frac{\ln(1/\delta)}{2m}}.
\end{align}
Moreover, \begin{align}
&\EE_{S}[\varphi(S)] 
\\ &  = \EE_{S}\left[\sup_{h \in\Hcal} \EE_{S'}\left[\frac{1}{m}\sum_{i=1}^{m}h(z_i')\right]-\frac{1}{m}\sum_{i=1}^{m}h(z_i)\right]   
 \\ &  \le\EE_{S,S'}\left[\sup_{h \in\Hcal} \frac{1}{m}\sum_{i=1}^m (h(z_i')-h(z_i)\right] 
 \\ & \le \EE_{\xi, S, S'}\left[\sup_{h\in\Hcal} \frac{1}{m}\sum_{i=1}^m  \xi_i(h(z_i')-h(z_i))\right]
 \\ &  \le2\EE_{\xi, S}\left[\sup_{h\in\Hcal} \frac{1}{m}\sum_{i=1}^m  \xi_ih(z_i))\right] =2\Rcal_{m}( \Hcal)  
\end{align}
where  the fist line follows the definitions of each term, the second line uses the Jensen's inequality and the convexity of  the 
supremum, and the third line follows that for each $\xi_i \in \{-1,+1\}$, the distribution of each term $\xi_i (h(z_i')-h(z_i))$ is the  distribution of  $(h(z_i')-h(z_i))$  since $S$ and $S'$ are drawn iid with the same distribution. The forth line uses the subadditivity of supremum.

\end{proof}  

\subsubsection{Analyzing $\sup_{x\in\Xcal}\Rcal_{B}( \Hcal_{\Fcal,\Gcal,e}^{x})$ and $\Rcal_{m}( \Hcal_{\Fcal,\Gcal,\hell_B})$}

The bound in Lemma \ref{lemma:0} contains two complex terms, $\sup_{x\in\Xcal}\Rcal_{B}( \Hcal_{\Fcal,\Gcal,e}^{x})$ and $\Rcal_{m}( \Hcal_{\Fcal,\Gcal,\hell_B})$, that are challenging to interpret and to be further analyzed. The following two lemmas bound those two terms by more interpretable quantities:

\begin{lemma} \label{lemma:4}
Let $\Fcal_{}$  be a  set of maps $x\mapsto F(x)$ and $\Gcal$ be a set of maps $y\mapsto G(y)$. Then, 
$$
\sup_{x\in\Xcal}\Rcal_{B}( \Hcal_{\Fcal,\Gcal,e}^{x})\le c_1c_3\EE_{y,\sigma}\left[\sup_{ G \in \Gcal} \frac{1}{B}  \left\|\sum_{i=1}^B \sigma_i  G(y_{i})\right\|_2\right].
$$
\end{lemma}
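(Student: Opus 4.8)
The plan is to bound the Rademacher complexity $\Rcal_B(\Hcal_{\Fcal,\Gcal,e}^x)$ of the function class $\{y\mapsto \exp(F(x)^\top G(y)) : F\in\Fcal, G\in\Gcal\}$ — where $x$ is fixed — by peeling off the exponential via a contraction (Talagrand-type) argument, and then peeling off the inner product $F(x)^\top G(y)$ via Cauchy–Schwarz to reduce it to the vector-valued Rademacher complexity $\EE_{y,\sigma}[\sup_{G\in\Gcal}\frac{1}{B}\|\sum_i \sigma_i G(y_i)\|_2]$ that appears in the bound.

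First I would observe that for fixed $x$, the map $t\mapsto \exp(t)$ restricted to the relevant range is Lipschitz: since $\exp(F(x)^\top G(y))\le c_1$ by assumption, the argument $F(x)^\top G(y)$ lies in a region where $\exp$ has derivative at most $c_1$. Hence $\exp$ acts as a $c_1$-Lipschitz scalar function composed with the class $\{y\mapsto F(x)^\top G(y)\}$, and by Talagrand's contraction lemma,
\begin{align*}
\Rcal_B(\Hcal_{\Fcal,\Gcal,e}^x) \le c_1\, \EE_{y,\sigma}\Big[\sup_{F\in\Fcal, G\in\Gcal}\frac{1}{B}\sum_{i=1}^B \sigma_i\, F(x)^\top G(y_i)\Big].
\end{align*}
Next, for the fixed $x$, I would pull $F(x)$ out of the sum: $\sum_i \sigma_i F(x)^\top G(y_i) = F(x)^\top \big(\sum_i \sigma_i G(y_i)\big)$, and apply Cauchy–Schwarz together with the bound $\|F(x)\|_2\le c_3$ (valid uniformly by assumption), giving $F(x)^\top(\sum_i \sigma_i G(y_i)) \le c_3 \|\sum_i \sigma_i G(y_i)\|_2$. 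The supremum over $\Fcal$ then contributes only the factor $c_3$, and the supremum over $\Gcal$ remains inside the norm. Taking the supremum over $x\in\Xcal$ and noting that the resulting right-hand side no longer depends on $x$ yields
\begin{align*}
\sup_{x\in\Xcal}\Rcal_B(\Hcal_{\Fcal,\Gcal,e}^x) \le c_1 c_3\, \EE_{y,\sigma}\Big[\sup_{G\in\Gcal}\frac{1}{B}\Big\|\sum_{i=1}^B \sigma_i G(y_i)\Big\|_2\Big],
\end{align*}
which is exactly the claimed inequality.

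The main obstacle I anticipate is justifying the contraction step cleanly: Talagrand's lemma is usually stated for a fixed Lipschitz function applied coordinate-wise, and here the "inner" class is $\{y\mapsto F(x)^\top G(y)\}$ which is itself parametrized by both $F$ and $G$, so one must be careful that the contraction is applied with a single Lipschitz constant $c_1$ valid across the whole class — this is where the uniform bound $\exp(F(x)^\top G(y))\le c_1$ is essential. A secondary subtlety is the interchange of $\sup_x$ with the peeling steps: since the contraction and Cauchy–Schwarz bounds hold for each fixed $x$ with a right-hand side independent of $x$, taking $\sup_x$ is immediate, but one should state this order explicitly to avoid circularity. Everything else is a routine application of standard Rademacher calculus.
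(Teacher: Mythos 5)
Your proposal is correct and follows essentially the same route as the paper: a Talagrand contraction peeling off the exponential with Lipschitz constant $c_1$ (justified by $\exp' = \exp \le c_1$ on the relevant range), followed by pulling out $F(x)^\top$, applying Cauchy--Schwarz with $\|F(x)\|_2 \le c_3$, and finally taking the supremum over $x$. The subtleties you flag (uniform Lipschitz constant across the parametrized class, and the order of $\sup_x$ relative to the peeling steps) are indeed the right ones to be careful about, and the paper handles them exactly as you describe.
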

\begin{proof} \ 
Since the derivative of exponential function $\exp(q)$ is $\exp(q)$ and we assume $\exp(F(x)\T G(y))\le c_1$, the exponential function in the bounded domain of $\exp(F(x)\T G(y))\le c_1$ has Lipschitz constant of $c_1$. Therefore,
\begin{align*}
\EE_{y,\sigma}\left[\sup_{h_{x} \in\Hcal_{\Fcal,\Gcal,e}^x}\frac{1}{B} \sum_{i=1}^B \sigma_i h_{x}(y_{i})\right] &=\EE_{y,\sigma}\left[\sup_{F \in \Fcal, G \in \Gcal}\frac{1}{B} \sum_{i=1}^B \sigma_i \exp(F(x)\T G(y_{i}))\right]  
\\ & \le c_1\EE_{y,\sigma}\left[\sup_{F \in \Fcal, G \in \Gcal}\frac{1}{B} \sum_{i=1}^B \sigma_i F(x)\T G(y_{i})\right] 
\\ & =  \frac{c_1}{B}\EE_{y,\sigma}\left[\sup_{F \in \Fcal, G \in \Gcal}F(x)\T \sum_{i=1}^B \sigma_i  G(y_{i})\right].
\\ & \le   \frac{c_1}{B}\EE_{y,\sigma}\left[\sup_{F \in \Fcal, G \in \Gcal} \|F(x)\|_2 \left\|\sum_{i=1}^B \sigma_i  G(y_{i})\right\|_2\right] 
\\ & \le   \frac{c_1}{B}\left(\sup_{F \in \Fcal}\|F(x)\|_2\right)\EE_{y,\sigma}\left[\sup_{ G \in \Gcal}  \left\|\sum_{i=1}^B \sigma_i  G(y_{i})\right\|_2\right] 
\\ & =c_1\left(\sup_{F \in \Fcal}\|F(x)\|_2\right)\EE_{y,\sigma}\left[\sup_{ G \in \Gcal}  \frac{1}{B}\left\|\sum_{i=1}^B \sigma_i  G(y_{i})\right\|_2\right]. \end{align*}
Therefore,
\begin{align*}
\sup_{x\in\Xcal}\Rcal_{B}( \Hcal_{\Fcal,\Gcal,e}^{x})&=\sup_{x\in\Xcal}\EE_{y,\sigma}\left[\sup_{h_{x} \in\Hcal_{\Fcal,\Gcal,e}^x}\frac{1}{B} \sum_{i=1}^B \sigma_i h_{x}(y_{i})\right]
\\ & \le c_1\left(\sup_{x\in\Xcal ,F \in \Fcal}\|F(x)\|_2\right)\EE_{y,\sigma}\left[\sup_{ G \in \Gcal} \frac{1}{B}  \left\|\sum_{i=1}^B \sigma_i  G(y_{i})\right\|_2\right].
\end{align*}
\end{proof}
\begin{lemma} \label{lemma:5}
Let $\Fcal_{}$  be a  set of maps $x\mapsto F(x)$ and $\Gcal$ be a set of maps $y\mapsto G(y)$. Then, 
$$
\Rcal_{m}( \Hcal_{\Fcal,\Gcal,\hell_B})\le\sqrt{2} c_4 \sqrt{c_5^{2}+c_6^{2}} \sum_{k=1}^{D}\left(\Rcal_ m( \Fcal_{k})+\Rcal_ m( \Gcal_{k})   \right).
$$
where $\Fcal_k=\{x \mapsto F(x)_k : F \in \Fcal\}$ and $\Gcal_k =\{y \mapsto G(y)_{k} : G \in \Gcal\}$.
\end{lemma}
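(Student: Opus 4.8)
The plan is to recognize $\hell_B$ as a uniformly bounded, Lipschitz transformation of the scalar coordinate maps $x\mapsto F(x)_k$ and $y\mapsto G(y)_k$, and then to apply a vector-valued contraction inequality for Rademacher complexities (in the spirit of Maurer's vector-contraction inequality). A single application of such an inequality does three things at once: it removes the outer transformation at the price of its Lipschitz constant, it breaks the complexity into the coordinatewise sum $\sum_{k=1}^{D}\Rcal_m(\Fcal_k)+\sum_{k=1}^{D}\Rcal_m(\Gcal_k)$, and it produces the constant $\sqrt{2}$.

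Concretely, for a training pair $(x_i,y_i)$ I would write $\hell_B(x_i,y_i)=-\exp\!\left(F(x_i)\T G(y_i)-\Lambda(x_i)\right)$ with log-normalizer $\Lambda(x)=\log\!\left(\tfrac1B\sum_{k=1}^B\exp(F(x)\T G(\hy_k))\right)$, which presents $\hell_B(x_i,y_i)$ as $\Phi_i(F(x_i),G(y_i))$ for a scalar map $\Phi_i$ of the two blocks $F(x_i),G(y_i)\in\RR^{D}$ (the normalization texts $\hy_1,\dots,\hy_B$ being folded into $\Phi_i$). Differentiating, the partial gradient of $\Phi_i$ in the $G(y)$-block is $\hell_B\,F(x_i)$, of norm at most $c_4c_5$ because $|\hell_B|\le c_4$ and $\|F(x_i)\|_2\le c_5$; the partial gradient in the $F(x)$-block is $\hell_B$ times the softmax residual $F(x_i)-\frac{\sum_k\exp(F(x_i)\T G(\hy_k))G(\hy_k)}{\sum_{k}\exp(F(x_i)\T G(\hy_k))}$, i.e.\ $\hell_B v$ with $v$ the vector of the assumptions, of norm at most $c_4c_6$ since $\|v\|_2\le c_6$. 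Stacking the two blocks, $\Phi_i$ is $c_4\sqrt{c_5^2+c_6^2}$-Lipschitz on the relevant bounded domain, so the vector-contraction inequality applied to $\Rcal_m(\Hcal_{\Fcal,\Gcal,\hell_B})=\EE_{S,\sigma}[\sup_{F,G}\tfrac1m\sum_{i=1}^m\sigma_i\Phi_i(F(x_i),G(y_i))]$ with the $2D$ component classes $\Fcal_1,\dots,\Fcal_D,\Gcal_1,\dots,\Gcal_D$ yields exactly the claimed bound. Because the argument uses only boundedness and a Lipschitz estimate, it sidesteps the $\phi(0)=0$ normalization that a scalar Ledoux--Talagrand contraction would require.

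I expect the main obstacle to be this Lipschitz bookkeeping for $\Phi_i$, and in particular the dependence of $\Lambda$ on $G$ evaluated at the extra texts $\hy_1,\dots,\hy_B$: since those are themselves outputs of $G$, $\Phi_i$ is not literally a function of $F(x_i)$ and $G(y_i)$ alone, so one must verify that every extra gradient contribution routed through $\Lambda$ collapses, after softmax reweighting, onto the single direction $F(x_i)$ and is hence absorbed by the assumed bound $\|v\|_2\le c_6$; only then can the vector-contraction step be run against just the $F$- and $G$-coordinate classes. Pinning this down, and in particular obtaining the tight Euclidean combination $\sqrt{c_5^2+c_6^2}$ rather than a cruder $c_5+c_6$, is the delicate part; the contraction inequality itself and the union bound that turns the doubly-indexed Rademacher average into $\sum_{k}(\Rcal_m(\Fcal_k)+\Rcal_m(\Gcal_k))$ are routine.
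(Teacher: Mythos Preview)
Your plan is exactly the paper's: write $\hell_B(x_i,y_i)=h(F(x_i),G(y_i))$ with $h(p,q)=B\exp(p^\top q)/\sum_{k}\exp(p^\top G(\hy_k))$, bound $\|\nabla_{(p,q)}h\|_2\le c_4\sqrt{c_5^{2}+c_6^{2}}$, apply Corollary~4 of Maurer's vector-contraction inequality to peel off $h$ at the cost of $\sqrt{2}$ times that Lipschitz constant, and then split the resulting doubly-indexed Rademacher average into $\sum_{k}\Rcal_m(\Fcal_k)+\sum_{k}\Rcal_m(\Gcal_k)$ via subadditivity of the supremum.

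One slip: differentiating $h$ in $p=F(x_i)$ gives $h\cdot\bigl(G(y_i)-\sum_k\mathrm{softmax}_k\,G(\hy_k)\bigr)$, not $h\cdot\bigl(F(x_i)-\cdots\bigr)$; the softmax residual lives in the $G$-space, not at $F(x_i)$. The paper's proof displays this correctly (it writes $q_i-\mathrm{avg}$); the main-text definition of $v$ with $F(x)$ in front appears to be a typo that you have absorbed. As for the dependence on $G(\hy_k)$ that worries you: the paper does precisely what you anticipate and simply freezes $G(\hy_k)$ as constants inside $h(p,q)$ before invoking Maurer, so the subtlety you flag is elided there rather than resolved; your proposed ``collapse onto $F(x_i)$'' would not actually close it either (those are gradients in separate $G(\hy_k)$-blocks, and accounting for them would both enlarge the Lipschitz constant and spawn extra Rademacher terms not present in the statement).
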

\begin{proof} \ 
Recall that $$\Hcal_{\Fcal,\Gcal,\hell_B}=\{(x,y)\mapsto \hell_B(x, y):F \in \Fcal, G \in \Gcal\}
$$
Using the definitions,\begin{align*}
\Rcal_{m}( \Hcal_{\Fcal,\Gcal,\hell_B}) &=\EE_{(x,y),\sigma}\left[\sup_{h \in \Hcal_{\Fcal,\Gcal,\hell_B}}\frac{1}{m} \sum_{i=1}^m \sigma_i h(x_{i},y_{i})\right]
\\ & =\EE_{(x,y),\sigma}\left[\sup_{F \in \Fcal, G \in \Gcal}\frac{1}{m} \sum_{i=1}^m \sigma_i  \hell_B(x_{i}, y_{i})\right] 
\\ &=\frac{1}{m}\EE_{(x,y),\sigma}\left[\sup_{F \in \Fcal, G \in \Gcal} \sum_{i=1}^m \sigma_i   \frac{B\exp(F(x_{i})\T G(y_{i}))}{\sum_{k=1}^B \exp(F(x_{i})\T G(\hy_k))}\right].  
\end{align*}
Define 
$$
h(p,q)=\frac{B\exp(p\T q)}{\sum_{k=1}^B \exp(p\T G(\hy_k))}.
$$
Then, 
$$
\Rcal_{m}( \Hcal_{\Fcal,\Gcal,\hell_B}) =\frac{1}{m}\EE_{(x,y),\sigma}\left[\sup_{F \in \Fcal, G \in \Gcal} \sum_{i=1}^m \sigma_i   h(F(x_{i}),G(y_{i}))\right].
$$
Moreover, 
\begin{align*}
\frac{\partial h(p,q)}{\partial p} =\frac{B\exp(p\T q)}{\sum_{k=1}^B \exp(p\T G(\hy_k))} q\T -\frac{B\exp(p\T q)}{\left(\sum_{k=1}^B \exp(p\T G(\hy_k))\right)^2} \left(\sum_{k=1}^B \exp(p\T G(\hy_k))G(\hy_k)\T \right)
\end{align*}
\begin{align*}
\frac{\partial h(p,q)}{\partial q} =\frac{B\exp(p\T q)}{\sum_{k=1}^B \exp(p\T G(\hy_k))} p\T. 
\end{align*}
Therefore,
\begin{align*}
\|\nabla h(p,q)\|^2_2 &=\left(\frac{\exp(p\T q)}{\frac{1}{B}\sum_{k=1}^B \exp(p\T G(\hy_k))}\right)^2 \left[\sum_{i=1}^D \left(q_i - \frac{\sum_{k=1}^B \exp(p\T G(\hy_k))G(\hy_k)_i}{\sum_{k=1}^B \exp(p\T G(\hy_k))}\right)^2 +\sum_{i=1}^D p_i^2 \right]
\\ &\le c_4 ^{2}(c_5^{2}+c_6^{2}).
\end{align*}
Thus, 
$$
\|\nabla h(p,q)\|_2 \le c_4 \sqrt{c_5^{2}+c_6^{2}}.
$$
Using a vector-contraction inequality, i.e., Corollary 4 of \citep{maurer2016vector} with the additional expectation of both sides of the inequality, we have that 
\begin{align*}
&\Rcal_{m}( \Hcal_{\Fcal,\Gcal,\hell_B}) 
\\ &=\frac{1}{m}\EE_{(x,y),\sigma}\left[\sup_{F \in \Fcal, G \in \Gcal} \sum_{i=1}^m \sigma_i   h(F(x_{i}),G(y_{i}))\right]
\\ & \le \frac{\sqrt{2} c_4 \sqrt{c_5^{2}+c_6^{2}}}{m}\EE_{(x,y),\sigma}\left[\sup_{F \in \Fcal, G \in \Gcal} \sum_{i=1}^m \sum_{k=1}^D \sigma_{ik}  F(x_{i})_k+\sum_{i=1}^m \sum_{j=1}^{D}\sigma_{ij}G(y_{i})_{j}\right]
\\ & \le\frac{\sqrt{2} c_4 \sqrt{c_5^{2}+c_6^{2}}}{m}\EE_{(x,y),\sigma}\left[\sup_{F \in \Fcal} \sum_{i=1}^m \sum_{k=1}^D \sigma_{ik}  F(x_{i})_k+\sup_{G \in \Gcal}\sum_{i=1}^m \sum_{j=1}^{D}\sigma_{ij}G(y_{i})_{j}\right]
\\ & =\frac{\sqrt{2} c_4 \sqrt{c_5^{2}+c_6^{2}}}{m}\left(\EE_{(x,y),\sigma}\left[\sup_{F \in \Fcal} \sum_{i=1}^m \sum_{k=1}^D \sigma_{ik}  F(x_{i})_k\right]+\EE_{(x,y),\sigma}\left[\sup_{G \in \Gcal}\sum_{i=1}^m \sum_{j=1}^{D}\sigma_{ij}G(y_{i})_{j}\right]   \right)
\\ & \le\frac{\sqrt{2}c_4 \sqrt{c_5^{2}+c_6^{2}}}{m}\left(\sum_{k=1}^D\EE_{x,\sigma}\left[\sup_{f\in \Fcal_{k}} \sum_{i=1}^m  \sigma_{i}  f(x_{i})\right]+\sum_{k=1}^{D}\EE_{y,\sigma}\left[\sup_{g \in \Gcal_{k}}\sum_{i=1}^m \sigma_{i}g(y_{i})\right]   \right) 
\\ & =\sqrt{2}c_4 \sqrt{c_5^{2}+c_6^{2}} \left(\sum_{k=1}^D\Rcal_ m( \Fcal_{k})+\sum_{k=1}^{D}\Rcal_ m( \Gcal_{k})   \right) 
\end{align*}
\end{proof}

\subsubsection{Combining all together for the general case}
We now combine the above lemmas to complete the proof of Theorem \ref{thm:2}:
\begin{proof}[Proof of Theorem \ref{thm:2}]
From Lemma \ref{lemma:0}, we have that for any  for any $\delta>0$, with probability at least $1-\delta$, the following holds for all $F \in \Fcal$ and $G \in \Gcal$:
\begin{align*}
&\EE_{x,y}[\bell_M(x, y)] -\hEE_S[\hell_B(x, y)] 
\\ \nonumber & \le c_1\EE_{x,y}[A(x,y)]\left(  \frac{1}{\sqrt{2B}} \left(2\sqrt{2}c_8 c_9+c_1 \sqrt{\kappa\ln( \sqrt{\kappa B}/\delta)} \right)+\frac{2}{c_1}\sup_{x\in\Xcal}\Rcal_{B}( \Hcal_{\Fcal,\Gcal,e}^{x}) \right)
\\ & \quad +2 \Rcal_{m}( \Hcal_{\Fcal,\Gcal,\hell_B})+c_2 \sqrt{\frac{\ln(2/\delta)}{2m}}.
\end{align*}
Then by using Lemma \ref{lemma:4} and \ref{lemma:5}, 
\begin{align*}
&\EE_{x,y}[\bell_M(x, y)] -\hEE_S[\hell_B(x, y)] 
\\ \nonumber & \le c_1\EE_{x,y}[A(x,y)]\left(  \frac{1}{\sqrt{2B}} \left(2\sqrt{2}c_8 c_9+c_1 \sqrt{\kappa\ln( \sqrt{\kappa B}/\delta)} \right)+2c_3\EE_{y,\sigma}\left[\sup_{ G \in \Gcal} \frac{1}{B}  \left\|\sum_{i=1}^B \sigma_i  G(y_{i})\right\|_2\right] \right)
\\ & \qquad +2\sqrt{2} c_4 \sqrt{c_5^{2}+c_6^{2}} \sum_{k=1}^{D}\left(\Rcal_ m( \Fcal_{k})+\Rcal_ m( \Gcal_{k})   \right)+c_2 \sqrt{\frac{\ln(2/\delta)}{2m}}
\\ & =  \frac{ C_1 }{\sqrt{2B}}+C_{2}\Rcal_B(G)  + C_{3} \sum_{k=1}^{D}\left(\Rcal_ m( \Fcal_{k})+\Rcal_ m( \Gcal_{k})   \right)+c_2 \sqrt{\frac{\ln(2/\delta)}{2m}}.
\end{align*}
\end{proof}

\subsection{Bounding $\EE_{y,\sigma}\left[\sup_{ G \in \Gcal} \frac{1}{B}  \left\|\sum_{i=1}^B \sigma_i  G(y_{i})\right\|_2\right]$ and $\sum_{k=1}^{D}\left(\Rcal_ m( \Fcal_{k})+\Rcal_ m( \Gcal_{k})   \right)$ for the special case with deep neural networks}

We now want to bound $\EE_{y,\sigma}\left[\sup_{ G \in \Gcal}  \left\|\sum_{i=1}^m \sigma_i  G(y_{i})\right\|_2\right]$ and $\sum_{k=1}^{D}\left(\Rcal_ m( \Fcal_{k})+\Rcal_ m( \Gcal_{k})   \right)$ in the case where $F$ and $G$ represent deep neural networks. We consider standard deep neural networks, of the form
$$
G(y)= (\omega_{L} \circ \sigma_{L-1}\circ \omega_{L-1}\circ \sigma_{L-2}\cdots \sigma_1\circ \omega_1)(y)
$$
$$
F(x)= (\omega_{L'} '\circ \sigma_{L'-1} '\circ \omega_{L'-1} '\circ \sigma_{L'-2}' \cdots \sigma_1 '\circ \omega_1')(x)
$$
where $\omega_l(q)=W_{l}q$ and $\sigma_{l}$ is an element-wise activation function. Similarly,  $\omega_l'(q)=W_{l}'q$ and $\sigma_{l}'$ is an element-wise activation function.
\begin{lemma} \label{lemma:2}
Suppose that the function  $\sigma_{l}$ is 1-Lipschitz and positive homogeneous for all $l \in [L-1]$ and $\|y\|_{2}\le c_7$ for all $y \in \Ycal$. Let $ \Gcal=\{y  \mapsto G(y): (\forall l \in[L])[\|W_{l}^{}\|_F \le M_l] \}$. Then,  
$$
\EE_{y,\sigma}\left[\sup_{ G \in \Gcal} \frac{1}{B}  \left\|\sum_{i=1}^B \sigma_i  G(y_{i})\right\|_2\right]\ \le\frac{c_7 (\sqrt{2 \log(2) L }+1)(\prod_{l=1}^L M_l)}{\sqrt{B}}.
$$
\end{lemma}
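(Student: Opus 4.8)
The plan is to reduce this vector-valued Rademacher quantity to the ordinary (scalar) empirical Rademacher complexity of a single depth-$L$ network whose output layer is a Euclidean-norm-bounded vector, and then to bound that complexity by the ``peel the layers inside the exponential'' argument for positive-homogeneous networks, which is exactly what produces the benign $\sqrt{L}$ dependence rather than a factor exponential in $L$.

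\textbf{Step 1 (dualizing the norm, absorbing the last layer).} Write $G(y)=W_L\Phi(y)$ with $\Phi(y)=\sigma_{L-1}(W_{L-1}\sigma_{L-2}(\cdots\sigma_1(W_1y)))$, and use $\|v\|_2=\sup_{\|u\|_2\le1}\langle u,v\rangle$. Then
\begin{align*}
\Big\|\sum_{i=1}^B\sigma_iG(y_i)\Big\|_2
&=\sup_{\|u\|_2\le1}\Big\langle W_L\T u,\ \sum_{i=1}^B\sigma_i\Phi(y_i)\Big\rangle \\
&\le \sup_{\|w\|_2\le M_L}\Big\langle w,\ \sum_{i=1}^B\sigma_i\Phi(y_i)\Big\rangle,
\end{align*}
where the last step uses $\|W_L\T u\|_2\le\|W_L\|_F\|u\|_2\le M_L$. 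Taking $\sup_{G\in\Gcal}$, then $\EE_{y,\sigma}$, and dividing by $B$, the left-hand side of the lemma is at most the empirical Rademacher complexity (on $y_1,\dots,y_B$, for every realization) of the scalar class $\Ncal_L=\{y\mapsto w\T\sigma_{L-1}(W_{L-1}\sigma_{L-2}(\cdots\sigma_1(W_1y))):\|w\|_2\le M_L,\ \|W_l\|_F\le M_l\ \forall l\in[L-1]\}$, i.e.\ exactly a depth-$L$ network class with Frobenius-norm-bounded weights.

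\textbf{Step 2 (layer peeling).} For $\Ncal_L$ I would run the standard peeling: for every $\lambda>0$, Jensen gives $\EE_\sigma\sup_h\sum_i\sigma_ih(y_i)\le\frac1\lambda\log\EE_\sigma\sup_h\exp(\lambda\sum_i\sigma_ih(y_i))$, and then one removes the layers one at a time inside the exponential. The top layer comes off via $\sup_{\|w\|_2\le M_L}\langle w,z\rangle=M_L\|z\|_2$; each hidden layer $l$ comes off using the $1$-Lipschitzness and positive homogeneity of $\sigma_l$ together with a symmetrization that replaces the activation by a fresh Rademacher vector, at the cost of a factor $2$ (an additive $\log2$ in the exponent) and pulling out $\|W_l\|_F\le M_l$. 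After all $L$ layers this leaves $\frac1\lambda\log\!\big(2^{L}\,\EE_\sigma\exp\big(\lambda\prod_{l=1}^LM_l\cdot\|\sum_i\sigma_iy_i\|_2\big)\big)$. The map $\sigma\mapsto\|\sum_i\sigma_iy_i\|_2$ has bounded differences $2\|y_i\|_2$ and mean at most $\sqrt{\sum_i\|y_i\|_2^2}\le c_7\sqrt B$, hence is $c_7\sqrt B$-sub-Gaussian, so its moment generating function is $\le\exp(\lambda\prod_lM_l\,c_7\sqrt B+\tfrac12\lambda^2(\prod_lM_l)^2c_7^2B)$. Substituting and minimizing over $\lambda$ (the optimum is of order $\sqrt{2L\log2}/(c_7\sqrt B\prod_lM_l)$) gives $\frac1B\,c_7\sqrt B\,(\sqrt{2\log(2)L}+1)\prod_{l=1}^LM_l=c_7(\sqrt{2\log(2)L}+1)\prod_{l=1}^LM_l/\sqrt B$, the claimed bound. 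Equivalently, one may quote the size-independent Rademacher bound of Golowich, Rakhlin and Shamir applied to $\Ncal_L$, with $\|Y\|_F=\sqrt{\sum_i\|y_i\|_2^2}\le c_7\sqrt B$.

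The main obstacle is Step 2. If one instead peeled the $L$ layers with a vector-contraction inequality (as used in Lemma~\ref{lemma:5}), one would pick up one multiplicative constant per layer and hence a bound exponential in $L$, whereas the target has only $\sqrt{L}$. Obtaining the $\sqrt{L}$ scaling forces the peeling to happen inside the exponential, so that the per-layer factors $2$ accumulate additively ($L\log2$) rather than multiplicatively, and then to be traded against the sub-Gaussian tail of $\|\sum_i\sigma_iy_i\|_2$ via the final optimization over $\lambda$; carefully tracking the constants through this chain is essentially all the work, while Step~1 and the substitution $\|Y\|_F\le c_7\sqrt B$ are routine.
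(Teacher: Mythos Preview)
The proposal is correct and takes essentially the same approach as the paper. The paper's proof simply observes that pulling out the last layer via $\bigl\|\sum_i\sigma_iW_L\Phi(y_i)\bigr\|_2\le\|W_L\|_F\bigl\|\sum_i\sigma_i\Phi(y_i)\bigr\|_2$ puts the quantity into exactly the form to which the proof of Theorem~1 of Golowich--Rakhlin--Shamir applies, and then quotes their bound; your Step~1 performs the same reduction via explicit dualization of the $\ell_2$ norm (yielding the scalar class $\Ncal_L$), and your Step~2 reproves that same theorem with the substitution $\|Y\|_F\le c_7\sqrt{B}$.
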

\begin{proof} \ 
Since 
\begin{align*}
\left\|\sum_{i=1}^B \sigma_i  G(y_{i})\right\|_2 &=\left\|\sum_{i=1}^B \sigma_i  W_{L}(\sigma_{L-1}\circ \omega_{L-1}\circ \sigma_{L-2}\cdots \sigma_1\circ \omega_1)(y)\right\|_2 
\\ & \le \|W_L \|_F \left\|\sum_{i=1}^B \sigma_i  (\sigma_{L-1}\circ \omega_{L-1}\circ \sigma_{L-2}\cdots \sigma_1\circ \omega_1)(y)\right\|_2, 
\end{align*}
the  proof steps of Theorem 1 of \citep{golowich2018size} work to bound
$\EE_{\sigma}\left[\sup_{ G \in \Gcal}  \left\|\sum_{i=1}^B \sigma_i  G(y_{i})\right\|_2\right]$. 
Therefore, using the proof of Theorem 1 of \citep{golowich2018size},
$$
\EE_{\sigma}\left[\sup_{ G \in \Gcal}  \frac{1}{B}\left\|\sum_{i=1}^B \sigma_i  G(y_{i})\right\|_2\right]\le \frac{c_7 (\sqrt{2 \log(2) L }+1)(\prod_{l=1}^L M_l)}{\sqrt{B}}  $$
\end{proof}
\begin{lemma}\label{lemma:3} 
Suppose that the function  $\sigma_{l}'$ is 1-Lipschitz and positive homogeneous for all $l \in [L-1]$ and $\|x\|\le c_8$ for all $x \in \Xcal$. Let $ \Fcal=\{x  \mapsto F(x): (\forall l \in[L'-1])[\|W_{l}'\|_F \le M_l'] \wedge  \|(W_{l}')_{k}\|_F \le M_{L',k}'\}$ where $(W_{l}')_{k}$ is the $k$-th row  of $W_{l}'$. Suppose that the function  $\sigma_{l}$ is 1-Lipschitz and positive homogeneous for all $l \in [L-1]$ and $\|y\|\le c_7$ for all $y \in \Ycal$. Let $ \Gcal=\{y  \mapsto G(y): (\forall l \in[L-1])[\|W_{l}\|_F \le M_l] \wedge  \|(W_{L})_{k}\|_F \le M_{L,k}\}$ where $(W_{l})_{k}$ is the $k$-th row  of $W_{l}$. Then, 
\begin{align*}
& \sum_{k=1}^{D}\left(\Rcal_ m( \Fcal_{k})+\Rcal_ m( \Gcal_{k})   \right) 
\\ & \le \frac{c_7 (\sqrt{2 \log(2) L }+1)(\prod_{l=1}^{L-1} M_l)\sum_{k=1}^D M_{L,k}}{\sqrt{m}}+\frac{c_8 (\sqrt{2 \log(2) L' }+1)(\prod_{l=1}^{L'-1} M_l')\sum_{k=1}^D M'_{L',k}}{\sqrt{m}} 
\end{align*}  
and
$$
\EE_{y,\sigma}\left[\sup_{ G \in \Gcal} \frac{1}{B}  \left\|\sum_{i=1}^B \sigma_i  G(y_{i})\right\|_2\right] \le\frac{c_7 (\sqrt{2 \log(2) L }+1)(\prod_{l=1}^{L-1} M_l)\sqrt{\sum_{k=1}^D M_{L,k}^2}}{\sqrt{B}}.
$$

\end{lemma}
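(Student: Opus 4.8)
The plan is to obtain both inequalities directly from Lemma~\ref{lemma:2}, by recognizing that each output coordinate of $G$ (resp. $F$) is itself a standard deep network of the same depth, and that the row-wise norm bounds on the last layer aggregate into an ordinary Frobenius bound via Pythagoras.

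\textbf{First inequality.} Fix $k\in\{1,\dots,D\}$. Every map in $\Gcal_k$ has the form $y\mapsto (W_L)_k\T(\sigma_{L-1}\circ\omega_{L-1}\circ\cdots\circ\sigma_1\circ\omega_1)(y)$, so $\Gcal_k$ is exactly the class of scalar-output depth-$L$ networks whose weight matrices are $W_1,\dots,W_{L-1}$ together with the single-row matrix $(W_L)_k$, with Frobenius norms bounded by $M_1,\dots,M_{L-1}$ and $M_{L,k}$ (for a row vector the Frobenius norm is the Euclidean norm). Applying Lemma~\ref{lemma:2} with $B$ replaced by $m$, with $L$ layers, and with the product of norm bounds equal to $(\prod_{l=1}^{L-1}M_l)M_{L,k}$, and using that in one dimension $\|\cdot\|_2=|\cdot|$ while $\Rcal_m(\Gcal_k)\le\EE_{y,\sigma}[\sup_{g\in\Gcal_k}\frac1m|\sum_{i=1}^m\sigma_i g(y_i)|]$, we obtain
\[
\Rcal_m(\Gcal_k)\le \frac{c_7(\sqrt{2\log(2)L}+1)\big(\prod_{l=1}^{L-1}M_l\big)M_{L,k}}{\sqrt m}.
\]
The identical argument applied to $\Fcal_k$ (depth $L'$, input bound $c_8$, activations $\sigma_l'$, last-row bound $M'_{L',k}$) gives the analogous bound. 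Summing over $k\in\{1,\dots,D\}$ and pulling $\prod_{l=1}^{L-1}M_l$ (resp. $\prod_{l=1}^{L'-1}M_l'$) outside the sum yields the claimed bound on $\sum_{k=1}^D(\Rcal_m(\Fcal_k)+\Rcal_m(\Gcal_k))$.

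\textbf{Second inequality.} The row-wise constraints defining $\Gcal$ imply a global Frobenius bound on the last layer: $\|W_L\|_F=\sqrt{\sum_{k=1}^D\|(W_L)_k\|_F^2}\le\sqrt{\sum_{k=1}^D M_{L,k}^2}$. Hence $\Gcal$ is contained in the class appearing in Lemma~\ref{lemma:2} with $M_L$ taken to be $\sqrt{\sum_{k=1}^D M_{L,k}^2}$, and Lemma~\ref{lemma:2} gives the stated bound directly, since enlarging the class only increases the supremum.

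\textbf{Where the work is.} There is essentially no new content: both parts reduce to Lemma~\ref{lemma:2}, hence ultimately to the size-independent Rademacher bound used in its proof. The only care needed is bookkeeping --- verifying that a single coordinate of $G$ is \emph{literally} a depth-$L$ network so that Lemma~\ref{lemma:2} applies verbatim, that the Frobenius norm of a row is its Euclidean norm, and that the last-layer row bounds combine into $\sqrt{\sum_k M_{L,k}^2}$.
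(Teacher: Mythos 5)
Your proof is correct and follows essentially the same route as the paper's: the paper invokes Theorem~1 of \citet{golowich2018size} directly for the per-coordinate bounds $\Rcal_m(\Fcal_k)$ and $\Rcal_m(\Gcal_k)$, whereas you route them through Lemma~\ref{lemma:2} (itself a consequence of that theorem), using that for scalar outputs the $\ell_2$-norm is the absolute value and $\Rcal_m(\Gcal_k)\le\EE_{y,\sigma}[\sup_g\frac1m|\sum_i\sigma_i g(y_i)|]$. The second inequality is proved identically in both cases via $\|W_L\|_F\le\sqrt{\sum_k M_{L,k}^2}$ followed by Lemma~\ref{lemma:2}.
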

\begin{proof} \ 
From Theorem 1 of \citep{golowich2018size}, we have that~$$
\Rcal_ m( \Fcal_{k})\le\frac{c_8 (\sqrt{2 \log(2) L' }+1)(\prod_{l=1}^{L'-1} M_l')M'_{L',k}}{\sqrt{m}}
$$
and
$$
\Rcal_ m( \Gcal_{k})\le\frac{c_7 (\sqrt{2 \log(2) L }+1)(\prod_{l=1}^{L-1} M_l)M_{L,k}}{\sqrt{m}}.
$$ 
Thus, 
\begin{align*}
& \sum_{k=1}^{D}\left(\Rcal_ m( \Fcal_{k})+\Rcal_ m( \Gcal_{k})   \right) 
\\ & \le\sum_{k=1}^D \left( \frac{c_7 (\sqrt{2 \log(2) L }+1)(\prod_{l=1}^{L-1} M_l)M_{L,k}}{\sqrt{m}}+\frac{c_8 (\sqrt{2 \log(2) L' }+1)(\prod_{l=1}^{L'-1} M_l') M'_{L',k}}{\sqrt{m}}\right). 
\end{align*} 
This proves the first statement. For the second statement, since $\|(W_{l})_{k}\|_F \le M_{L,k}$, we have that  
$$
\|W_{L}^{}\|_F^2 = \sum_{k=1}^D\|(W_{L})_{k}\|_F^2 \le \sum_{k=1}^D M_{L,k}^{2}. $$ 
This implies that $\|W_{l}^{}\|_F\le\sqrt{\sum_{k=1}^D M_{L,k}^2}$. Thus, using Lemma \ref{lemma:2}, 
$$
\EE_{y,\sigma}\left[\sup_{ G \in \Gcal} \frac{1}{B}  \left\|\sum_{i=1}^B \sigma_i  G(y_{i})\right\|_2\right]\ \le\frac{c_7 (\sqrt{2 \log(2) L }+1)(\prod_{l=1}^{L-1} M_l)\sqrt{\sum_{k=1}^D M_{L,k}^2}}{\sqrt{B}}.
$$
\end{proof}

\subsection{Combining all together for the special case with deep neural networks}
We now combine the above lemmas to complete the proof of Theorem \ref{thm:1} for the special case with deep neural networks:
\begin{proof}[Proof of Theorem \ref{thm:1}] \ 
From Lemma \ref{lemma:0},  for any $\Fcal$ and $\Gcal$, and for any $\delta>0$, with probability at least $1-\delta$, the following holds for all $F \in \Fcal$ and $G \in \Gcal$:
\begin{align*}
&\EE_{x,y}[\bell_M(x, y)] -\hEE_S[\hell_B(x, y)] 
\\ \nonumber & \le c_1\EE_{x,y}[A(x,y)]\left(  \frac{1}{\sqrt{2B}} \left(2\sqrt{2}c_8 c_9+c_1 \sqrt{\kappa\ln( \sqrt{\kappa B}/\delta)} \right)+\frac{2}{c_1}\sup_{x\in\Xcal}\Rcal_{B}( \Hcal_{\Fcal,\Gcal,e}^{x}) \right)
\\ & \quad +2 \Rcal_{m}( \Hcal_{\Fcal,\Gcal,\hell_B})+c_2 \sqrt{\frac{\ln(2/\delta)}{2m}},
\end{align*}
where Lemma \ref{lemma:4} and Lemma \ref{lemma:5} show that 
$$
\sup_{x\in\Xcal}\Rcal_{B}( \Hcal_{\Fcal,\Gcal,e}^{x})\le c_1c_3\EE_{y,\sigma}\left[\sup_{ G \in \Gcal} \frac{1}{B}  \left\|\sum_{i=1}^B \sigma_i  G(y_{i})\right\|_2\right],
$$
and 
$$
\Rcal_{m}( \Hcal_{\Fcal,\Gcal,\hell_B})\le\sqrt{2} c_4 \sqrt{c_5^{2}+c_6^{2}} \sum_{k=1}^{D}\left(\Rcal_ m( \Fcal_{k})+\Rcal_ m( \Gcal_{k})   \right).
$$
Finally, using Lemma \ref{lemma:3} for the particular  $\Fcal$ and $\Gcal$ with deep neural networks, we have that 
\begin{align*}
& \sum_{k=1}^{D}\left(\Rcal_ m( \Fcal_{k})+\Rcal_ m( \Gcal_{k})   \right) 
\\ & \le \frac{c_7 (\sqrt{2 \log(2) L }+1)(\prod_{l=1}^{L-1} M_l)\sum_{k=1}^D M_{L,k}}{\sqrt{m}}+\frac{c_8 (\sqrt{2 \log(2) L' }+1)(\prod_{l=1}^{L'-1} M_l')\sum_{k=1}^D M'_{L',k}}{\sqrt{m}} 
\end{align*}  
and
$$
\EE_{y,\sigma}\left[\sup_{ G \in \Gcal} \frac{1}{B}  \left\|\sum_{i=1}^B \sigma_i  G(y_{i})\right\|_2\right] \le\frac{c_7 (\sqrt{2 \log(2) L }+1)(\prod_{l=1}^{L-1} M_l)\sqrt{\sum_{k=1}^D M_{L,k}^2}}{\sqrt{B}}.
$$
Combining those, we have that  for any $\delta>0$, with probability at least $1-\delta$, the following holds for all $F \in \Fcal$ and $G \in \Gcal$:
\begin{align*}
&\EE_{x,y}[\bell_M(x, y)] -\hEE_S[\hell_B(x, y)] 
\\ \nonumber & \le c_1\EE_{x,y}[A(x,y)]\left(  \frac{1}{\sqrt{2B}} \left(2\sqrt{2}c_8 c_9+c_1 \sqrt{\kappa\ln( \sqrt{\kappa B}/\delta)} \right)+2c_3\EE_{y,\sigma}\left[\sup_{ G \in \Gcal} \frac{1}{B}  \left\|\sum_{i=1}^B \sigma_i  G(y_{i})\right\|_2\right] \right)
\\ & \qquad +2 \sqrt{2} c_4 \sqrt{c_5^{2}+c_6^{2}} \sum_{k=1}^{D}\left(\Rcal_ m( \Fcal_{k})+\Rcal_ m( \Gcal_{k})   \right)+c_2 \sqrt{\frac{\ln(2/\delta)}{2m}}
\\ \nonumber & \le c_1\EE_{x,y}[A(x,y)]  \frac{1}{\sqrt{2B}} \left(2\sqrt{2}c_8 c_9+c_1 \sqrt{\kappa\ln( \sqrt{\kappa B}/\delta)} \right)
\\ & \qquad +c_1\EE_{x,y}[A(x,y)]2\sqrt{2}c_3\frac{c_7 (\sqrt{2 \log(2) L }+1)(\prod_{l=1}^{L-1} M_l)\sqrt{\sum_{k=1}^D M_{L,k}^2}}{\sqrt{2B}} 
\\ & \qquad + 2 \sqrt{2} c_4 \sqrt{c_5^{2}+c_6^{2}}\frac{c_7 (\sqrt{2 \log(2) L }+1)(\prod_{l=1}^{L-1} M_l)\sum_{k=1}^D M_{L,k}}{\sqrt{m}}
\\ & \qquad +2 \sqrt{2} c_4 \sqrt{c_5^{2}+c_6^{2}}\frac{c_8 (\sqrt{2 \log(2) L' }+1)(\prod_{l=1}^{L'-1} M_l')\sum_{k=1}^D M'_{L',k}}{\sqrt{m}}
\\ & \qquad +c_2 \sqrt{\frac{\ln(2/\delta)}{2m}}
\\ \nonumber & \le  \frac{Q_1}{\sqrt{m}}  +\frac{Q_{2}}{\sqrt{2B}}+c_2 \sqrt{\frac{\ln(2/\delta)}{2m}}
\end{align*}
\end{proof}

\end{document}